\newtheorem{assumption}{Assumption}
\newtheorem{theorem}{Theorem}[section]
\newtheorem{lemma}[theorem]{Lemma}
\newtheorem{definition}[theorem]{Definition}
\newtheorem{proposition}[theorem]{Proposition}
\newtheorem{corollary}[theorem]{Corollary}
\newtheorem{remark}[theorem]{Remark}
\newcommand{\eps}{\varepsilon}
\renewcommand{\epsilon}{\varepsilon}
\renewcommand{\hat}{\widehat}
\renewcommand{\tilde}{\widetilde}
\renewcommand{\bar}{\overline}
\newcommand{\abs}[1]{\left|#1\right|}
\DeclareMathOperator*{\Exp}{\mathbb{E}}
\newcommand{\EEs}[2]{\Exp_{#1}\left[#2\right]}
\newcommand{\EEsc}[3]{\Exp_{#1}\left[#2 \mid #3\right]}
\renewcommand{\cite}[1]{\citep{#1}}
\DeclareMathOperator*{\argmin}{arg\,min}
\newcommand{\cA}{\mathcal{A}}
\newcommand{\cB}{\mathcal{B}}
\newcommand{\cC}{\mathcal{C}}
\newcommand{\cD}{\mathcal{D}}
\newcommand{\cF}{\mathcal{F}}
\newcommand{\cG}{\mathcal{G}}
\newcommand{\cH}{\mathcal{H}}
\newcommand{\cL}{\mathcal{L}}
\newcommand{\cO}{\mathcal{O}}
\newcommand{\cP}{\mathcal{P}}
\newcommand{\cR}{\mathcal{R}}
\newcommand{\cX}{\mathcal{X}}
\newcommand{\cY}{\mathcal{Y}}
\newcommand{\bp}{\boldsymbol{p}}
\newcommand{\bh}{{\mathbf{h}}}
\begin{document}

\title{Panprediction: Optimal Predictions for \\ Any Downstream Task and Loss}
\author[1]{Sivaraman Balakrishnan}
\author[2]{Nika Haghtalab}
\author[3]{Daniel Hsu}
\author[2]{Brian Lee}
\author[2]{Eric Zhao}

\affil[1]{Carnegie Mellon University}
\affil[2]{University of California, Berkeley}
\affil[3]{Columbia University}
\date{}

\maketitle

\begin{abstract}
Supervised learning is classically formulated as training a model to minimize a fixed loss function over a fixed distribution, or task. 
However, an emerging paradigm instead views model training as extracting enough information from data so that the model can be used to minimize many losses on many downstream tasks. 
We formalize a mathematical framework for this paradigm, which we call \textit{panprediction}, and study its statistical complexity. Formally, panprediction generalizes omniprediction and sits upstream from multi-group learning, which respectively focus on predictions that generalize to many downstream losses or many downstream tasks, but not both. 
Concretely, we design algorithms that learn deterministic and randomized panpredictors with $\tilde{O}(1/\varepsilon^3)$ and $\tilde{O}(1/\varepsilon^2)$ samples, respectively.
Our results demonstrate that under mild assumptions, simultaneously minimizing infinitely many losses on infinitely many tasks can be as statistically easy as minimizing one loss on one task. 
Along the way, we improve the best known sample complexity guarantee of deterministic omniprediction by a factor of $1/\eps$, and match all other known sample complexity guarantees of omniprediction and multi-group learning. 
Our key technical ingredient is a nearly lossless reduction from panprediction to a statistically efficient notion of calibration, called \textit{step calibration}. 
\end{abstract}

\section{Introduction}
\label{sec:intro}
Consider the problem of predicting the probability of an adverse medical event for a patient based on their health records---be it in-hospital mortality over 12 hours, re-admission over 30 days, or a cardiovascular event over 10 years. 
A range of decision makers across the healthcare system---ICU doctors, discharge planners, actuaries, and more---wish to incorporate these probabilities into their decisions. 
However, this problem is complicated by the fact that each decision maker defines prediction quality differently: an ICU doctor who must rapidly take action may care most about the zero-one loss, while an actuary who seeks precise probability estimates may care most about the square loss. 
Further complicating the problem is how each decision maker focuses on different, possibly overlapping subgroups of patients: those with specific pre-existing conditions, those in a certain age group, and more. 

\begin{quote}
    \textit{What does it mean for a single predictor to be ``good" for such a heterogeneous population of decision makers? 
    How can such a predictor be learned? }
\end{quote}

\begin{figure*}[t]
\centering %
\begin{tikzpicture}[
  node distance=1cm and 2cm,
  every node/.style={align=center},
  box/.style={
    draw,
    minimum width=1cm,      
    minimum height=0.75cm,   
    rounded corners=3pt,
    font=\small,
    inner sep=6pt           
  },
  note/.style={
    font=\small,
    align=center
  },
  arrow/.style={
    ->,
    >=Stealth,
    thin
  }
]

  \node[box] (step) {$\bigl(\cG,\cH,\eps\bigr)$-Step Calibration};
  \node[note, above=0.1cm of step]
    {
      \textbf{Deterministic} $\tilde{O}\left( \eps^{-3}\right)$ \\
      \textbf{Randomized} $\tilde{O}\left( \eps^{-2} \right)$ 
    };

  \node[box, below=0.3cm of step] (combo)
    {$\bigl(\cL, \cG,\cH,\eps\bigr)$-Panprediction};
  \node[note, below=0.05cm of combo]
    {
      \textbf{Deterministic} $\tilde{O}\left( \eps^{-3} \right)$ \\
      \textbf{Randomized} $\tilde{O}\left( \eps^{-2} \right)$ 
    };

  \node[box, left=1cm of combo] (calibleft)
    {$(\ell,\cG,\cH,\eps)$-Multi-group Learning};
  \node[note, below=0.2cm of calibleft]
    {
      Match best known det. and \\ rand. sample complexity \cite{DBLP:conf/icml/Tosh022}
    };

  \node[box, right=1cm of combo] (calibright)
    {$\bigl(\cL,\cH,\eps\bigr)$-Omniprediction};
  \node[note, below=0.2cm of calibright]
    {
      Improve det. sample complexity by $\eps^{-1}$, \\ match rand. sample complexity \cite{okoroafor_near-optimal_2025}
    };

  \draw[arrow] (step)   -- (combo);
  \draw[arrow] (combo)  -- (calibleft);
  \draw[arrow] (combo)  -- (calibright);

\end{tikzpicture}
\caption{Panprediction is a nearly lossless generalization of omniprediction and multi-group learning in the sense that sample complexity guarantees are preserved (up to logarithmic factors), or improved. 
Similarly, the reduction from panprediction to step calibration preserves sample complexity guarantees.}
\label{fig:flow}
\end{figure*}
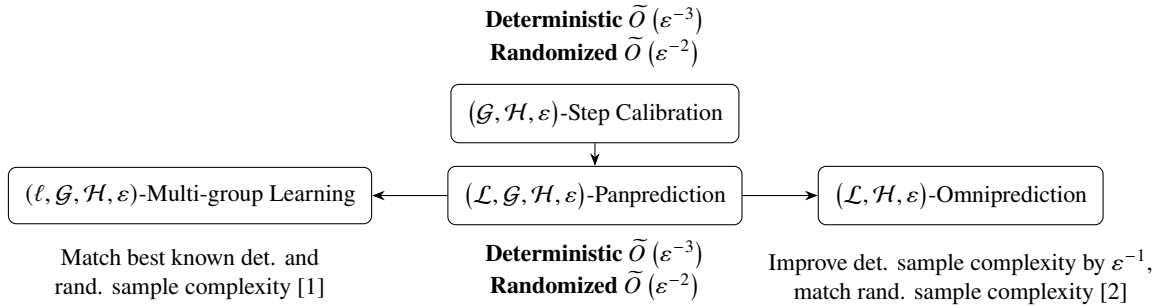

From one perspective, we have simply described a collection of binary prediction problems, each defined by a loss function and subgroup, or task, of interest. 
Classic supervised learning techniques can readily produce a good model for each problem. 
However, this approach requires that the number of samples needed to produce good predictions for all decision makers scales linearly in the number of losses and subgroups of interest, which can be prohibitively large.

In this work, we introduce \textit{panprediction}, an alternative framework for constructing a single probability predictor that any decision maker---focused on any loss and any subgroup---can easily post-process to solve their specialized problem.
Importantly, panpredictors guarantee that decision makers could not have made better predictions by training a bespoke model for their problem. 
Moreover, the sample complexity of panprediction scales only logarithmically in the number of \textit{basis functions} needed to linearly approximate losses of interest, and also in the number of subgroups.

Formally, we say a probability predictor $p^\star: \mathcal{X} \to [0,1]$ is a $(\mathcal{L}, \mathcal{G}, \mathcal{H}, \varepsilon)$-panpredictor if for every loss $\ell \in \mathcal{L}$ and group $g \in \cG$, the predictor $p^\star$ can be post-processed into a predictor $h^\star$ that is $\eps$-optimal, meaning

\begin{align*}
    &\mathbb{E}_{(x,y)\sim D}\bigl[\ell\bigl( h^*(x), y\bigr) \bigm| g(x)=1\bigr] \approx_\eps \min_{h\in\mathcal{H}} \mathbb{E}_{(x,y)\sim D}\bigl[\ell\bigl(h(x), y\bigr) \bigm| g(x)=1\bigr],
\end{align*}
where $\mathcal{H}$ is a competitor hypothesis class. 
Supposing for simplicity that $\cG$ and $\cH$ have finite cardinality, our proposed algorithms learn deterministic panpredictors with $\tilde{O}(\eps^{-3}\log(|\cG||\cH|))$ samples, and randomized panpredictors with $\tilde{O}(\eps^{-2}\log(|\cG||\cH|))$ samples. 
Notably, our sample complexity guarantees account for $\cL$ through a $\log(1/\varepsilon)$ factor, with no dependence on the cardinality or covering number of $\cL$. 

Our results are enabled by the observation that panprediction can be reduced to \textit{step calibration} \cite{qiao_truthfulness_2025}, a statistically efficient notion of calibration also known as proper calibration \cite{okoroafor_near-optimal_2025}. 
This answers our first question: a single predictor is ``good" for a heterogeneous array of losses and subgroups if it is step calibrated. 
Algorithmically, we construct step calibrated predictors using tools from multi-objective learning, a powerful framework recently developed to tighten the sample complexity of various calibration notions \cite{DBLP:conf/nips/HaghtalabJ023, zhang2023optimal}. 
This answers our second question on how such ``good" predictors can be learned. 

Panprediction builds on recent advances in machine learning theory that seek models with greater adaptability to many downstream losses or tasks. 
Omniprediction \cite{gopalan_omnipredictors_2022} studies predictors whose outputs can be post-processed to perform well according to many loss functions, but for a single fixed distribution, or task. 
Conversely, multi-group learning \cite{DBLP:conf/icml/RothblumY21,DBLP:conf/icml/Tosh022} ensures robust performance across many downstream subgroups, or tasks, but for a single fixed loss function. 
We generalize both lines of work and show that simultaneous adaptability to many downstream losses \textit{and} tasks is readily attainable. 
Figure~\ref{fig:flow} illustrates the relationship between these concepts. 

Our approach yields several benefits over prior works. 
Quantitatively, we improve the sample complexity of deterministic omniprediction by a factor of $1/\varepsilon$, and match all other known sample complexities of omniprediction and multi-group learning. 
Qualitatively, we provide a unified algorithmic path to omniprediction and multi-group learning---which prior works have treated with bespoke approaches that had difficulty learning, e.g., deterministic predictors---and significantly simplify proofs. 

\subsection{Related Works}

\paragraph{Omniprediction} Gopalan et al. \cite{gopalan_omnipredictors_2022} initiated the study of omniprediction by showing that an appropriately calibrated predictor can be post-processed to minimize a range of convex and Lipschitz losses. 
Subsequent work deepened the conceptual connection between omniprediction and calibration by formalizing the notion of a predictor being indistinguishable from the Bayes predictor, as measured by a set of loss functions \cite{gopalan_loss_2022}. 
Using these insights, recent work improved the sample complexity and oracle efficiency of omniprediction by sidestepping full calibration and relying on statistically efficient notions of calibration \cite{okoroafor_near-optimal_2025}. 
We carefully extend this line of work to account for the more challenging setting where loss functions are evaluated over overlapping subgroups of the domain. 

\paragraph{Multi-group learning}
Motivated by subgroup fairness, Blum and Lykouris \cite{blum_advancing_2020} and Rothblum and Yona \cite{DBLP:conf/icml/RothblumY21} respectively initiated the study of multi-group learning in the sequential and batch settings.
A rich line of work has since refined sample complexity guarantees  \cite{DBLP:conf/icml/Tosh022} and studied variants with oracle-efficiency \cite{deng_group-wise_2024}, hierarchical group structure \cite{deng_multi-group_2024}, and robustness concerns \cite{ahmadi_agnostic_2024}. 
Typical multi-group algorithms leverage reductions to the sleeping experts problem. In contrast, we leverage a fundamental connection between multi-group learning and calibration. 

\paragraph{Calibration}
Calibration originates from the online forecasting literature \cite{dawid1982well, foster_asymptotic_1998}, where it was proposed as a basic sanity check for predictions. 
More recently, Hebert-Johnson et al. \cite{hebert-johnson_multicalibration_2018} initiated the study of multicalibration, which connects calibration to subgroup robustness. 
Subsequent work developed interpretations of calibration as requiring that a learned predictor be indistinguishable from the Bayes predictor according to a class of tests \cite{dwork_outcome_2021}, and variants of calibration that provide downstream decision-theoretic guarantees \cite{KLST23, qiao_truthfulness_2025}.
Multi-objective learning is a flexible algorithmic framework for multicalibration \cite{DBLP:conf/nips/HaghtalabJ023} and related multi-distribution learning problems \cite{zhang2023optimal}.
Our techniques build on these works, especially ideas from indistinguishability and algorithms from multi-objective learning.

\section{Models and Preliminaries}
\label{sec:prelim}
We study batch prediction with binary labels. 
Let $D$ be a joint distribution over context space $\cX$ and binary labels $\cY = \{0, 1\}$. 
Let $\cH \subset \{ h: \cX \to \hat{\cY} \}$ be a class of hypothesis functions, where $\hat \cY$ is the prediction space. 
We study both the binary prediction setting, where $\hat{\cY} = \{0, 1\}$, and the probabilistic prediction setting, where $\hat{\cY} = [0, 1]$. 
Let $\cP = \{ h: \cX \to [0, 1] \}$ be the class of all real-valued hypothesis functions.
Denote the simplex over $\cH$ and $\cP$ as $\Delta(\cH)$ and $\Delta(\cP)$, respectively. 
The simplex in $\mathbb{R}^d$ is denoted by $\Delta^{d-1}$.
For tractability, each real-valued hypothesis is quantized to return predictions on the $\lambda$-net of the unit interval, $I_\lambda = \{0, \lambda, 2\lambda, \dots, 1 \}$, where the quantization parameter $\lambda \in (0, 1)$ is chosen by the learner with knowledge of the target error tolerance $\varepsilon$. 
Let $\cL \subset \{ \ell: \hat{\cY} \times \cY \to [-1, 1] \}$ be a class of loss functions that only take the prediction and label as inputs. 
Let $\cG \subseteq 2^\cX$ be a set of (possibly overlapping) groups on $\cX$, with each group identified by a group membership function $g: \cX \to \{0, 1\}$. 
Overloading notation, we say $x \in \cX$ belongs to group $g \in \cG$ if $g(x) = 1$. 
We denote group size under $D$ by $P_g = \Pr_{(x, y) \sim D}(g(x)=1)$. 

\subsection{Panprediction}
We formalize the problem of constructing a flexible predictor that downstream decision makers can adapt to minimize many different loss functions on many different tasks. 
To obtain this flexibility, we require that the universe of all losses, tasks (represented by groups over the domain), and competitor hypotheses of interest to downstream decision makers be pre-specified and collected into the classes $(\cL, \cG, \cH)$.
Given this triplet, the goal of panprediction is to use samples from the distribution $D$ to construct a predictor that, for any post-hoc choice of loss $\ell \in \cL$ and group $g \in \cG$, can be post-processed to compete with the best hypothesis $h \in \cH$ on the group $g$, as measured by the loss $\ell$.
We make the following assumptions about $(\cL, \cG, \cH)$.

First, we place a regularity condition on $\cL$. The total variation of a function $f: [0, 1] \to [-1, 1]$ is defined as
\begin{align*}
    V(f) = \sup_{m \in \mathbb{N}} \, \sup_{0 = z_0 < \dots < z_m = 1} \sum_{j=1}^{m} \abs{f(z_j) - f(z_{j-1})}.
\end{align*}
\begin{assumption}
    All losses $\ell \in \cL$ have bounded variation in the first argument, meaning
    \begin{align*}
        \cL \subseteq \cL_{\text{BV}} \triangleq \left\{ \ell: \sup_{y \in \cY} V(\ell(\cdot, y)) \leq 1 \right\}.
    \end{align*}
\label{as:bv}
\end{assumption}
This is a very mild restriction: $\cL_{\text{BV}}$ has infinite cardinality and subsumes all standard loss functions---zero-one loss, the hinge loss, square loss, and pinball loss---as well as $1$-Lipschitz functions and proper scoring rules (up to re-scaling). 
Losses having bounded variation is the weakest assumption made in omniprediction \cite{okoroafor_near-optimal_2025}, a related problem setting discussed in Section~\ref{sec:connect}.

Next, we mildly constrain the expressivity of $\cH$ and $\cG$.

\begin{assumption}
    \!$\cH$\! has finite combinatorial dimension.
\label{as:hdim}
\end{assumption}
If $\cH$ is a class of binary predictors, Assumption~\ref{as:hdim} requires that $\cH$ has bounded VC dimension. 
If $\cH$ is real-valued, then Assumption~\ref{as:hdim} requires that $\cH$ has bounded pseudo-dimension. 
Both are standard assumptions in statistical learning theory. 

\begin{assumption}
    $\cG$ has finite VC dimension. 
\label{as:gdim}
\end{assumption}
Intuitively, Assumption~\ref{as:gdim} allows groups to have infinite cardinality and encode rich structure, but requires that they are at least \textit{learnable} from data. 

Given Assumptions \ref{as:bv}, \ref{as:hdim}, and \ref{as:gdim}, we can formally define the desiderata of panprediction.

\begin{definition}[Deterministic Panprediction]
    Given loss class $\cL$, hypothesis class $\cH$, and set of groups $\cG$, a \textit{deterministic} predictor $p^*: \cX \to [0, 1]$ is a $(\cL, \cG, \cH, \eps)$-panpredictor if for all $\ell \in \cL$ and $g \in \cG$,
    \begin{align*}
        &\EEsc{(x, y) \sim D}{\ell(k_\ell (p^*(x)), y) }{g(x) = 1} \leq \min_{h \in \cH} \EEsc{(x, y) \sim D}{\ell(h(x), y)}{g(x)=1} + \eps \cdot \sqrt{P_g^{-1}},
    \end{align*}
    where the post-processing function $k_\ell: [0, 1] \to \hat{\cY}$ is
    \begin{equation}\label{eq:postprocess}
        k_\ell(p) = \argmin_{\hat{y} \in \hat{\cY}} \EEs{y \sim \mathrm{Ber}(p)}{\ell(\hat{y}, y)} = \argmin_{\hat{y} \in \hat{\cY}} \left( p \cdot \ell(\hat{y}, 1) + (1-p) \cdot \ell(\hat{y}, 0) \right).
    \end{equation}
\label{def:detpan}
\end{definition}
The post-processing step involves solving a simple optimization problem, which downstream decision makers can do with no access to $D$, and often in closed form. 

We can also define a less restrictive notion of panprediction that allows the predictor $p^*$ to randomize.
\begin{definition}[Randomized Panprediction]
    Given loss class $\cL$, hypothesis class $\cH$, and set of groups $\cG$, a \textit{randomized} predictor $\bp^* \in \Delta(\cP)$ is a $(\cL, \cG, \cH, \eps)$-panpredictor if for all $\ell \in \cL$ and  $g \in \cG$,
    \begin{align*}
        &\EEsc{(x, y) \sim D, \, p^* \sim \bp^*}{\ell(k_\ell (p^*(x)), y) }{g(x) = 1} \leq \min_{h \in \cH} \EEsc{(x, y) \sim D}{\ell(h(x), y) }{g(x)=1} + \eps \cdot \sqrt{P_g^{-1}}.
    \end{align*}
\label{def:randpan}
\end{definition}

\begin{remark}[Deterministic vs Randomized]
    All things equal, we prefer deterministic predictors to randomized ones---though they can be more sample-intensive to learn. In the sequel, we state most definitions and results for deterministic predictors, and defer a formal statement of the randomized variants to Appendix~\ref{sec:prelim-proofs}. 
\end{remark}

\begin{remark}[Conditional Guarantees]
    We define panprediction risk as a group-conditional expectation, and error tolerance as scaling inversely with the square root of each group's size under $D$. 
    Both are strong guarantees sought in multi-group learning and multicalibration \cite{DBLP:conf/icml/Tosh022, DBLP:conf/nips/HaghtalabJ023}. The choice of error tolerance scaling, in particular, reflects the optimal sample complexity of learning each group using samples from $D$. 
\end{remark}

All our definitions of panprediction are non-vacuous, since the Bayes predictor $\EEsc{}{y}{x}$ is a panpredictor for \textit{all} $\cL$, $\cG$, and $\cH$. 
Consequently, a panpredictor can be interpreted as a coarsening of the Bayes predictor with respect to a particular set of losses, groups, and hypotheses. 
We make this notion of ``coarsening" precise in the following discussion of step calibration. 

\subsection{Step Calibration}
Calibration---which requires that a probabilistic predictor is unbiased on its own level set---is the key property of the Bayes predictor that makes it useful for a wide range of downstream decision makers. 
Calibration is a sufficient condition for decision making, in the sense that decision makers that treat \textit{any} calibrated predictor as the Bayes predictor enjoy provable loss minimization guarantees \cite{foster_calibrated_1997, gopalan_omnipredictors_2022}.
However, full calibration can be more statistically expensive than direct loss minimization, and recent work has explored efficiently attainable notions of calibration that retain decision-theoretic guarantees \cite{KLST23}.

One such notion is \emph{step calibration}~\cite{qiao_truthfulness_2025}, also known as proper calibration \cite{okoroafor_near-optimal_2025}.
In contrast to full calibration, step calibration only requires that a predictor is unbiased on its \textit{sublevel} set. 
That is, on the set of all instances whose predicted probability is at most a given threshold. 
Below, we introduce a multi-objective variant of step calibration that not only holds marginally, but also on carefully defined subsets of the domain. 

\begin{definition}[$(\cG, \cH, \eps)$-Step Calibration]
    A deterministic predictor $p^*: \cX \to [0, 1]$ is $(\cG, \cH, \eps)$-step calibrated if for all $v, w \in [0, 1]$, $h \in \cH$, and $g \in \cG$,
    \begin{align*}
        &\abs{\EEsc{D}{(y - p^*(x)) \cdot \mathbbm{1}[p^*(x) \leq v, h(x) \leq w]}{g(x) = 1}} \leq \eps \cdot \sqrt{P_g^{-1}}.
    \end{align*}
\label{def:detstep}
\end{definition}

In addition to serving as the indicator for sublevel sets of $p^*$, the step function $\mathbbm{1}[p^*(x) \leq v]$ serves an approximation-theoretic purpose, as a natural basis for bounded variation functions.
This connection is further discussed in Section~\ref{sec:pan}. Also notice that in \Cref{def:detstep}, we condition on the event $\{ g(x) = 1 \}$, as in the formulation of panprediction risk, and take an indicator on the events $\{p^*(x) \leq v\}$ and $\{h(x) \leq w \}$. 

We sometimes invoke a weaker notion of unbiasedness, called multiaccuracy, that does not take an indicator on the (sub)level sets of the predictor. 
\begin{definition}[$(\cG, \cH, \eps)$-Multiaccuracy]
    A deterministic predictor $p^*: \cX \to [0, 1]$ is $(\cG, \cH, \eps)$-multiaccurate if for all $w \in [0, 1]$, $h \in \cH$, and $g \in \cG$,    \begin{align*}
        &\abs{\EEsc{(x, y) \sim D}{(y - p^*(x)) \cdot \mathbbm{1}[h(x) \leq w]}{g(x)=1}} \leq \eps \cdot \sqrt{P_g^{-1}}.
    \end{align*}
\label{def:detma}
\end{definition}

Randomized variants of \Cref{def:detstep} and \ref{def:detma} are deferred to Appendix~\ref{sec:prelim-proofs}.

\section{Reducing Panprediction to Step Calibration}
\label{sec:pan}
We now show that panprediction admits a clear reduction to step calibration by carefully extending the language of outcome indistinguishability, which was originally developed for omniprediction \cite{gopalan_loss_2022, dwork_outcome_2021}. 
In this section, we assume $\hat{\cY} = [0, 1]$. 
The setting with $\hat{\cY} = \{0, 1\}$ is strictly easier and follows readily from our arguments. 

\subsection{Loss Outcome Indistinguishability}
The Loss OI framework \cite{gopalan_loss_2022} is useful for deducing sufficient conditions for panprediction guarantees to hold. 
For \textit{any} deterministic predictor $p^*: \cX \to [0, 1]$, by the definition of the post-processing function $k_\ell$, the following inequality holds: for all $\ell \in \cL$, $g \in \cG$, $h \in \cH$,
\begin{align}
    &\EEsc{\substack{x \sim D \\ \tilde{y} \sim \mathrm{Ber}(p^*(x))}}{\ell(k_\ell (p^*(x)), \tilde{y}) }{g(x)=1} \leq \EEsc{\substack{x \sim D \\ \tilde{y} \sim \mathrm{Ber}(p^*(x))}}{\ell(h(x), \tilde{y}) }{g(x)=1}.
\label{eq:ideal}
\end{align}

To derive the deterministic $(\cL, \cG, \cH, O(\eps))$-panpredictor guarantee from here, it suffices to show that $p^*$ approximates the true Bayes predictor well in some sense. 
Concretely, for all $\ell \in \cL$, $g \in \cG$, and $h \in \cH$, we want the following inequalities to hold:
\begin{equation}\label{eq:dec1}
\nonumber
\begin{aligned}
\bigg|
  &\EEsc{(x,y)\sim D}{\ell(k_\ell(p^*(x)),y)}{g(x)=1} - \EEsc{\substack{x\sim D\\ \tilde y\sim \mathrm{Ber}(p^*(x))}}{\ell(k_\ell(p^*(x)),\tilde y)}{g(x)=1} \bigg| \le \frac{O(\eps)}{\sqrt{P_g}},
\end{aligned}
\end{equation}

\begin{equation}\label{eq:hyp1}
\nonumber
\begin{aligned}
\bigg|
  &\EEsc{(x,y)\sim D}{\ell(h(x),y)}{g(x)=1} - \EEsc{\substack{x\sim D\\ \tilde y\sim \mathrm{Ber}(p^*(x))}}{\ell(h(x),\tilde y)}{g(x)=1}\bigg| \le \frac{O(\eps)}{\sqrt{P_g}}.
\end{aligned}
\end{equation}

The first condition is called Decision OI, and the second condition, Hypothesis OI. To characterize each in terms of familiar terms, take the following definition.
\begin{definition}[Discrete Derivative]
    Given a loss $\ell \in \cL$, define the discrete derivative
    \begin{align*}
        \Delta \ell(p) = \ell(p, 1) - \ell(p, 0).
    \end{align*}
    Given a loss class $\cL$, let $\Delta \cL = \{ \Delta \ell \mid \ell \in \cL \}$ be the class of corresponding discrete derivatives.
\end{definition}
Intuitively, $\Delta \ell(p)$ quantifies how sharply $\ell$ distinguishes between $y = 1$ and $y = 0$ given a prediction $p \in [0, 1]$. 
Then Decision and Hypothesis OI can be respectively written as
\begin{align*}
    \abs{ \EEsc{D}{(y - p^*(x)) \cdot \Delta \ell( k_\ell(p^*(x)))}{g(x)=1} } &\leq \frac{O(\eps)}{\sqrt{P_g}}, \\
    \abs{ \EEsc{D}{(y - p^*(x)) \cdot \Delta \ell( h(x))}{g(x)=1} } &\leq \frac{O(\eps)}{\sqrt{P_g}}.
\end{align*}

This reformulation suggests that Decision OI can be deduced from a certain calibration condition, and Hypothesis OI, from a certain multiaccuracy condition. 

\subsection{Reducing Deterministic Panprediction to Determnistic Step Calibration}
Now we show that deterministic $(\cG, \cH, \eps)$-step calibration implies both sufficient conditions found above. 
\begin{theorem}
    If a deterministic predictor $p^*: \cX \to [0, 1]$ is $(\cG, \cH, \eps)$-step calibrated, then $p^*$ is a deterministic $(\cL, \cG, \cH, O(\eps))$-panpredictor.
\label{thm:detpan}
\end{theorem}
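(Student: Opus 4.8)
The plan is to instantiate the Loss OI template developed above: the ideal inequality \eqref{eq:ideal} holds for \emph{any} deterministic $p^*$ by the definition of $k_\ell$, so it suffices to show that $(\cG,\cH,\eps)$-step calibration implies both Decision OI and Hypothesis OI with error $O(\eps)\sqrt{P_g^{-1}}$; chaining the three facts and taking $\min_{h\in\cH}$ then yields exactly \Cref{def:detpan}. Two immediate specializations of \Cref{def:detstep} do the heavy lifting. Setting $v=1$ (so $\mathbbm{1}[p^*(x)\le 1]\equiv 1$) recovers $(\cG,\cH,\eps)$-multiaccuracy (\Cref{def:detma}); setting $w=1$ for any fixed $h\in\cH$ recovers the ``marginal'' bound $\abs{\EEsc{(x,y)\sim D}{(y-p^*(x))\,\mathbbm{1}[p^*(x)\le v]}{g(x)=1}}\le\eps\sqrt{P_g^{-1}}$ for all $v\in[0,1]$, $g\in\cG$.

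The common engine for both OI bounds is a step-function representation of bounded-variation functions on the quantization grid: for any $f:[0,1]\to[-2,2]$ and any $z\in I_\lambda$ one has the finite telescoping identity
\[
f(z)=f(1)+\sum_{v\in I_\lambda\setminus\{1\}}\bigl(f(v)-f(v+\lambda)\bigr)\,\mathbbm{1}[z\le v],
\]
whose indicator coefficients have absolute values summing to the total variation of $f$ along the grid, with $\abs{f(1)}\le 2$. For Hypothesis OI, apply this with $f=\Delta\ell$ --- whose total variation is at most $V(\ell(\cdot,1))+V(\ell(\cdot,0))\le 2$ by \Cref{as:bv} --- at $z=h(x)$; multiplying by $(y-p^*(x))$ and taking the $g$-conditional expectation writes $\EEsc{(x,y)\sim D}{(y-p^*(x))\,\Delta\ell(h(x))}{g(x)=1}$ as an $O(1)$-weighted sum of multiaccuracy terms, each bounded by $\eps\sqrt{P_g^{-1}}$.

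For Decision OI I want the same recipe with $f=\psi$, where $\psi(p):=\Delta\ell(k_\ell(p))$ and the relevant argument is $z=p^*(x)$; this is legitimate only if $\psi$ has bounded variation along the grid, which is the crux of the proof. The key observation is that $\psi$ is monotone: the Bayes-optimal post-processed loss $L^*(p)=\min_{\hat y\in\hat{\cY}}\bigl(p\cdot\ell(\hat y,1)+(1-p)\cdot\ell(\hat y,0)\bigr)$ is a minimum of affine functions of $p$, hence concave, and on each of its linear pieces its slope equals $\Delta\ell(\hat y)$ for the active minimizer $\hat y=k_\ell(p)$; concavity therefore forces $\psi$ to be non-increasing (after fixing a consistent tie-breaking rule for $k_\ell$ at the finitely many breakpoints of $L^*$), so its variation along any grid is at most $\psi(0)-\psi(1)\le 4$. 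The telescoping identity then expresses $\EEsc{(x,y)\sim D}{(y-p^*(x))\,\Delta\ell(k_\ell(p^*(x)))}{g(x)=1}$ as an $O(1)$-weighted sum of marginal step-calibration terms, giving Decision OI. Assembling, for every $\ell\in\cL$, $g\in\cG$, $h\in\cH$ we have, with $\tilde y\sim\mathrm{Ber}(p^*(x))$ and $x\sim D$ throughout,
\begin{align*}
\EEsc{(x,y)\sim D}{\ell(k_\ell(p^*(x)),y)}{g(x)=1}
&\le \EEcc{\ell(k_\ell(p^*(x)),\tilde y)}{g(x)=1}+O(\eps)\sqrt{P_g^{-1}} \\
&\le \EEcc{\ell(h(x),\tilde y)}{g(x)=1}+O(\eps)\sqrt{P_g^{-1}} \\
&\le \EEsc{(x,y)\sim D}{\ell(h(x),y)}{g(x)=1}+O(\eps)\sqrt{P_g^{-1}}
\end{align*}
by Decision OI, \eqref{eq:ideal}, and Hypothesis OI respectively; minimizing the right-hand side over $h\in\cH$ completes the argument.

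I expect the bounded variation of $\psi=\Delta\ell\circ k_\ell$ to be the main obstacle --- not because the argument is long, but because it is the one place where a naive bound is fatal: estimating the telescoping sum over $I_\lambda$ term by term only gives total indicator weight $\Theta(1/\lambda)\approx\Theta(1/\eps)$, which would inflate the error to $\Theta(1)$, whereas the concavity argument collapses it to $O(1)$. The remaining details --- the tie-breaking convention for $k_\ell$ at breakpoints of $L^*$, and the quantization bookkeeping in the telescoping identities --- are routine.
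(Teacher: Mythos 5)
Your proposal is correct and takes essentially the same route as the paper: the same Loss OI decomposition into Decision OI and Hypothesis OI, the same $v=1$ and $w=1$ specializations of step calibration into multiaccuracy and marginal sublevel-set calibration, and the same key fact that $\Delta\ell\circ k_\ell$ is monotone because $\ell(k_\ell(\cdot),y)$ is a proper loss (your concavity-of-the-lower-envelope argument is an equivalent justification). Your explicit telescoping identity on $I_\lambda$ is simply a hands-on version of the threshold-basis approximation the paper imports as Lemma~\ref{lemma:approx} (via Proposition~\ref{prop:bv}), so the difference is presentational rather than substantive.
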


We prove Theorem~\ref{thm:detpan} by decomposing the step calibration guarantee into a calibration and a multiaccuracy guarantee (Lemma~\ref{lem:decomp}), then showing that the calibration guarantee implies Decision OI (Lemma~\ref{lem:decoi}), and that the multiaccuracy guarantee implies Hypothesis OI (Lemma~\ref{lem:hypoi}).

\begin{restatable}{lemma}{decomp}\label{lem:decomp}
   If a deterministic predictor $p^*: \cX \to [0, 1]$ is $(\cG, \cH, \eps)$-step calibrated, then $p^*$ is $(\cG, \emptyset, \eps)$-step calibrated and $(\cG, \cH, \eps)$-multiaccurate.  
\end{restatable}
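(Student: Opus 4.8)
The plan is to observe that both conclusions are instances of the $(\cG,\cH,\eps)$-step calibration guarantee, obtained by specializing the two free thresholds $v$ and $w$ in \Cref{def:detstep} to their maximal value $1$, and using that $p^*$ and every $h \in \cH$ take values in $[0,1]$ (recall $\hat\cY=[0,1]$ in this section, and quantized hypotheses land in $I_\lambda\subseteq[0,1]$).

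For the $(\cG,\emptyset,\eps)$-step calibration claim, I would fix any $h \in \cH$, any $v \in [0,1]$, and any $g \in \cG$, and invoke \Cref{def:detstep} with threshold $w = 1$. Since $h(x) \in [0,1]$ for every $x$, the event $\{h(x)\le 1\}$ is certain, so $\mathbbm 1[p^*(x)\le v,\,h(x)\le 1] = \mathbbm 1[p^*(x)\le v]$ pointwise, and the step calibration inequality reduces to $\abs{\EEsc{D}{(y-p^*(x))\cdot\mathbbm 1[p^*(x)\le v]}{g(x)=1}} \le \eps\cdot\sqrt{P_g^{-1}}$. As $v$ and $g$ were arbitrary, this is exactly the defining condition of $(\cG,\emptyset,\eps)$-step calibration (where, with an empty hypothesis class, the hypothesis indicator is understood as the constant $1$, which is precisely what $w=1$ produces).

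For the $(\cG,\cH,\eps)$-multiaccuracy claim I would do the symmetric thing: fix $w\in[0,1]$, $h\in\cH$, $g\in\cG$, and invoke \Cref{def:detstep} with threshold $v = 1$. Since $p^*(x)\in[0,1]$ for every $x$, the event $\{p^*(x)\le 1\}$ is certain, so $\mathbbm 1[p^*(x)\le 1,\,h(x)\le w] = \mathbbm 1[h(x)\le w]$ pointwise, and the bound becomes $\abs{\EEsc{D}{(y-p^*(x))\cdot\mathbbm 1[h(x)\le w]}{g(x)=1}}\le\eps\cdot\sqrt{P_g^{-1}}$, which is precisely \Cref{def:detma}; again $w$, $h$, $g$ were arbitrary.

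I do not expect any real obstacle here: the lemma is a definitional specialization, and the $\sqrt{P_g^{-1}}$ factor is identical on both sides so it simply carries through. The only point warranting a sentence of care is the reading of $(\cG,\emptyset,\eps)$-step calibration noted above, together with the mild nondegeneracy that $\cH$ be nonempty so that one can name an $h$ when instantiating \Cref{def:detstep} for the first claim — an assumption that is implicit throughout.
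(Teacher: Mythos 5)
Your proposal is correct and is essentially identical to the paper's proof: both claims are obtained by instantiating the step-calibration definition with $w=1$ (so the hypothesis indicator becomes trivially $1$) and with $v=1$ (so the predictor indicator becomes trivially $1$), respectively. Your extra remarks about the reading of $(\cG,\emptyset,\eps)$-step calibration and nonemptiness of $\cH$ are fine but add nothing beyond what the paper's argument implicitly assumes.
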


The proof of Lemma~\ref{lem:decomp} is straightforward and is deferred to Appendix~\ref{sec:pan-proofs}. 
To prove the remaining two lemmas, it is useful to invoke the following technical result on approximating (the discrete derivative of) bounded variation functions with step functions.
\begin{restatable}{lemma}{approx}\label{lemma:approx}
    Suppose $\ell \in \cL_{\mathrm{BV}}$. Then for any predictor $f: \cX \to [0, 1]$ and $g \in \cG$,
    \begin{align*}
        &\abs{ \EEsc{(x, y) \sim D}{(y-p^*(x)) \cdot \Delta \ell(f(x))}{g(x)=1}} \leq 9 \sup_{v \in [0, 1]} \abs{ \EEsc{D}{(y-p^*(x)) \cdot \mathbbm{1}[f(x) \leq v]}{g(x)=1} } + \eps \cdot \sqrt{P_g^{-1}}.
    \end{align*}
\end{restatable}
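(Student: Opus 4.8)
The plan is to replace $\Delta\ell$ by a step function with finitely many breakpoints and then expand that step function in terms of the ``step tests'' $x \mapsto \mathbbm{1}[f(x) \le v]$ appearing on the right-hand side of the bound. First I would record the only two facts about $\Delta\ell$ that are needed: since $\sup_{y \in \cY} V(\ell(\cdot, y)) \le 1$, the triangle inequality gives $V(\Delta\ell) \le V(\ell(\cdot, 1)) + V(\ell(\cdot, 0)) \le 2$, and $\abs{\Delta\ell(p)} \le 2$ for every $p$ because $\ell$ is $[-1, 1]$-valued. Writing $\Delta\ell(p) = \Delta\ell(0) + P(p) - N(p)$ for the positive and negative variation functions $P, N$ of $\Delta\ell$ on $[0, p]$---which are non-decreasing, vanish at $0$, and satisfy $P(1) + N(1) = V(\Delta\ell) \le 2$---and approximating each of $P$ and $N$ uniformly within $\eps/2$ by the monotone step functions $p \mapsto \tfrac{\eps}{2}\lfloor 2P(p)/\eps \rfloor$ and $p \mapsto \tfrac{\eps}{2}\lfloor 2N(p)/\eps \rfloor$, I obtain a step function $s = c + \sum_{i=1}^{m} a_i \, \mathbbm{1}[\,\cdot \ge t_i]$ with breakpoints $t_i \in (0, 1]$ and jumps $a_i$ satisfying $\norminf{\Delta\ell - s} \le \eps$, $V(s) = \sum_i \abs{a_i} \le V(\Delta\ell) \le 2$, and $\norminf{s} \le \norminf{\Delta\ell} + \eps \le 2 + \eps$. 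Routing through $P$ and $N$ is what keeps $V(s)$ under control; an arbitrary uniform $\eps$-approximant of a bounded-variation function need not itself have bounded variation.

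Next I would reduce $s$ to the step tests. Write $B_v \triangleq \EEsc{D}{(y - p^*(x)) \cdot \mathbbm{1}[f(x) \le v]}{g(x) = 1}$ and $M \triangleq \sup_{v \in [0, 1]} \abs{B_v}$, noting $B_1 = \EEsc{D}{y - p^*(x)}{g(x) = 1}$ because $\mathbbm{1}[f(x) \le 1] \equiv 1$. Using $\mathbbm{1}[f(x) \ge t_i] = 1 - \mathbbm{1}[f(x) < t_i]$ and $c + \sum_i a_i = s(1)$,
\begin{align*}
    \EEsc{D}{(y - p^*(x)) \cdot s(f(x))}{g(x) = 1} = s(1) \cdot B_1 - \sum_{i=1}^{m} a_i \cdot \EEsc{D}{(y - p^*(x)) \cdot \mathbbm{1}[f(x) < t_i]}{g(x) = 1}.
\end{align*}
Since $\mathbbm{1}[f(x) < t_i] = \lim_{v \uparrow t_i} \mathbbm{1}[f(x) \le v]$ pointwise, dominated convergence gives $\abs{\EEsc{D}{(y - p^*(x)) \mathbbm{1}[f(x) < t_i]}{g(x) = 1}} \le M$, so the right-hand side is at most $(\norminf{s} + V(s)) \cdot M \le (4 + \eps) M \le 9M$. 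The approximation residual obeys $\abs{\EEsc{D}{(y - p^*(x))(\Delta\ell(f(x)) - s(f(x)))}{g(x) = 1}} \le \norminf{\Delta\ell - s} \le \eps \le \eps \cdot \sqrt{P_g^{-1}}$ since $P_g \le 1$. Adding the two estimates yields $\abs{\EEsc{D}{(y - p^*(x)) \Delta\ell(f(x))}{g(x) = 1}} \le 9M + \eps \sqrt{P_g^{-1}}$, as claimed.

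I expect the main obstacle to be conceptual rather than computational. Because $\cL_{\mathrm{BV}}$ contains genuinely discontinuous losses---most notably the zero-one loss---$\Delta\ell$ is merely of bounded variation, not Lipschitz or even continuous, so it cannot be interpolated naively and must be handled through its monotone/step structure while keeping the variation of the approximant bounded, as above. A secondary subtlety is that the natural step expansion produces the strict indicators $\mathbbm{1}[f(x) < t_i]$ rather than the non-strict $\mathbbm{1}[f(x) \le v]$ of \Cref{def:detstep}; the limiting argument above disposes of this without perturbing $f$ or the thresholds, and whatever looseness remains---including the gap between the constant (here roughly $4$) and the stated $9$---is comfortably absorbed into the additive $\eps \sqrt{P_g^{-1}}$ term.
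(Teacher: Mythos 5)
Your proof is correct, and it reaches the stated bound by a more self-contained route than the paper. The paper does not construct the step approximation by hand: it invokes the threshold-basis result of Okoroafor, Kleinberg, and Kim (restated as Proposition~\ref{prop:bv}), which says that the sign-threshold functions $\mathrm{Th}_v$ form an $\eps$-approximate basis for $\Delta\cL_{\mathrm{BV}}$ with coefficient norm $3$, then applies the triangle inequality and the identity $\mathrm{Th}_v(p) = 2\cdot\mathbbm{1}[p\le v]-\mathbbm{1}[p\le 1]$ (Equation~\ref{eq:step}) to pass from thresholds to the step tests, which is exactly where the constant $9 = 3\cdot(2+1)$ comes from. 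You instead prove the approximation step from scratch via the Jordan decomposition $\Delta\ell = \Delta\ell(0)+P-N$ and value-quantization of the monotone parts, which keeps the coefficient mass bounded by $V(\Delta\ell)\le 2$ and yields the sharper constant $4+\eps$ (weakened to $9$ to match the statement); you also handle the strict-versus-nonstrict indicator issue by a limiting argument rather than by the algebraic identity. The one imprecision is your assertion that the quantized approximant can always be written with closed jumps $\mathbbm{1}[\,\cdot\ge t_i]$: if $P$ crosses a quantization level only strictly after $t_i$, the correct representation uses $\mathbbm{1}[\,\cdot> t_i]$ at that breakpoint, but this is harmless---$\mathbbm{1}[f(x)>t_i] = 1-\mathbbm{1}[f(x)\le t_i]$ is expressible directly in the step tests, and your dominated-convergence step covers the other case---so the argument goes through as written once that mix of jump types is allowed. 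In short, the paper's proof buys brevity by citing the existing basis lemma, while yours buys self-containment and a better constant at the cost of carrying out the bounded-variation bookkeeping explicitly.
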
 
Lemma~\ref{lemma:approx} can be invoked with $f = p^*$ or $f = h$, for any $h\in \cH$, and helps upper bound expressions arising from Decision and Hypothesis OI in terms of step calibration error. 
Its proof is deferred to Appendix~\ref{sec:pan-proofs}.

\begin{lemma}
If a deterministic predictor $p^*$ is $(\cG, \emptyset, \eps)$-step calibrated, then for all $g \in \cG$,
    \begin{align*}
        \abs{ \EEsc{D}{(y - p^*(x)) \cdot \Delta \ell(k_\ell(p^*(x)))}{g(x)=1} } \leq \frac{O(\eps)}{\sqrt{P_g}}.
    \end{align*}
    \vspace{-\baselineskip}
\label{lem:decoi}
\end{lemma}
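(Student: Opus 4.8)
The plan is to fold the post-processing map $k_\ell$ into an auxiliary bounded-variation loss, so that the Decision OI quantity turns into a plain step-calibration statement about $p^*$ itself, which \Cref{lemma:approx} then dispatches. The crux is a monotonicity fact: for every $\ell \in \cL_{\mathrm{BV}}$ the map $\psi_\ell : q \mapsto \Delta\ell\bigl(k_\ell(q)\bigr)$ is non-increasing on $[0,1]$. I would prove this by an exchange argument: for $q < q'$, writing $\hat y = k_\ell(q)$ and $\hat y' = k_\ell(q')$, optimality of $k_\ell$ in \eqref{eq:postprocess} gives $\ell(\hat y,0) + q\,\Delta\ell(\hat y) \le \ell(\hat y',0) + q\,\Delta\ell(\hat y')$ and $\ell(\hat y',0) + q'\,\Delta\ell(\hat y') \le \ell(\hat y,0) + q'\,\Delta\ell(\hat y)$; adding these and cancelling the $\ell(\cdot,0)$ terms yields $(q-q')\bigl(\Delta\ell(\hat y) - \Delta\ell(\hat y')\bigr) \le 0$, hence $\psi_\ell(q) = \Delta\ell(\hat y) \ge \Delta\ell(\hat y') = \psi_\ell(q')$. (Equivalently, $\psi_\ell$ is the derivative of the concave Bayes-loss curve $q \mapsto \min_{\hat y}\{\ell(\hat y,0) + q\,\Delta\ell(\hat y)\}$.) Note this argument is insensitive to how ties in the $\argmin$ are broken, so tie-breaking is a non-issue.

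Since $\ell$ is $[-1,1]$-valued, $\Delta\ell$ has total variation at most $2$, so the two endpoint values $\psi_\ell(0),\psi_\ell(1)$ lie in an interval of length at most $2$; being monotone, $\psi_\ell$ then has total variation at most $2$. I would therefore introduce the loss $\tilde\ell$ given by $\tilde\ell(\hat y, 0) = 0$ and $\tilde\ell(\hat y, 1) = \tfrac12\,\psi_\ell(\hat y)$; then $\tilde\ell \in \cL_{\mathrm{BV}}$, and its discrete derivative is $\Delta\tilde\ell(q) = \tfrac12\,\Delta\ell\bigl(k_\ell(q)\bigr)$ for every $q \in [0,1]$. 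In particular $\Delta\tilde\ell(p^*(x)) = \tfrac12\,\Delta\ell\bigl(k_\ell(p^*(x))\bigr)$, so the Decision OI term equals $2\abs{\EEsc{D}{(y - p^*(x))\,\Delta\tilde\ell(p^*(x))}{g(x)=1}}$.

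Next I would apply \Cref{lemma:approx} with the loss $\tilde\ell$ and $f = p^*$, which bounds $\abs{\EEsc{D}{(y - p^*(x))\,\Delta\tilde\ell(p^*(x))}{g(x)=1}}$ by $9\sup_{v \in [0,1]}\abs{\EEsc{D}{(y - p^*(x))\,\mathbbm{1}[p^*(x) \le v]}{g(x)=1}} + \eps\sqrt{P_g^{-1}}$. Each supremand is exactly a group-conditional step-calibration error of $p^*$ in the sense of \Cref{def:detstep} with the empty hypothesis class, hence at most $\eps\sqrt{P_g^{-1}}$ by the hypothesis that $p^*$ is $(\cG,\emptyset,\eps)$-step calibrated. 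Thus $\abs{\EEsc{D}{(y - p^*(x))\,\Delta\tilde\ell(p^*(x))}{g(x)=1}} \le 10\,\eps\sqrt{P_g^{-1}}$, and multiplying by $2$ gives $\abs{\EEsc{D}{(y - p^*(x))\,\Delta\ell(k_\ell(p^*(x)))}{g(x)=1}} \le 20\,\eps\sqrt{P_g^{-1}} = O(\eps)/\sqrt{P_g}$, as claimed.

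The only genuinely non-routine ingredient is the monotonicity of $\psi_\ell$: it is precisely what keeps $\Delta\ell \circ k_\ell$ of bounded variation and hence realizable as the discrete derivative of an $\cL_{\mathrm{BV}}$ loss; without it, composing $\Delta\ell$ with a possibly non-monotone $k_\ell$ could inflate the variation arbitrarily and \Cref{lemma:approx} would not apply (and, equivalently, the sublevel sets $\{q : k_\ell(q)\le v\}$ need not be intervals, so one cannot directly invoke \Cref{lemma:approx} with $f = k_\ell\circ p^*$). One could alternatively avoid \Cref{lemma:approx} and argue from scratch via a layer-cake decomposition of the monotone function $\psi_\ell$ into indicators $\mathbbm{1}[p^*(x)\le v]$ followed by summing the step-calibration bounds, but that essentially re-derives \Cref{lemma:approx}, so routing through it is cleaner.
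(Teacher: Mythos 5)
Your proof is correct and takes essentially the same route as the paper: the paper observes that $q \mapsto \ell(k_\ell(q), y)$ is a proper scoring rule, hence lies in $\cL_{\mathrm{BV}}$ (your exchange argument is exactly the standard proof that its discrete derivative $\Delta\ell \circ k_\ell$ is monotone), and then applies Lemma~\ref{lemma:approx} with $f = p^*$ together with the $(\cG, \emptyset, \eps)$-step calibration bound. Your explicit rescaled auxiliary loss $\tilde\ell$ and the resulting constants just spell out details the paper leaves implicit inside the $O(\eps)$.
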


\begin{proof}[Proof of Lemma~\ref{lem:decoi}]
    For any loss $\ell: [0, 1] \times \cY \to [-1, 1]$ and constants $p, q \in [0, 1]$, by definition of $k_\ell$, 
    \begin{align*}
        \EEs{y \sim \mathrm{Ber}(p)}{\ell(k_{\ell}(p), y)} \leq \EEs{y \sim \mathrm{Ber}(p)}{\ell(k_{\ell}(q), y)}.
    \end{align*}
    Hence,  $\ell(k_\ell(\cdot), y): [0, 1] \to [-1, 1]$ is a proper scoring rule and is contained in $\cL_{\mathrm{BV}}$. Then by invoking Lemma~\ref{lemma:approx} with $f = p^*$, we have that
    \begin{align*}
        \abs{ \EEsc{D}{(y - p^*(x)) \cdot \Delta \ell(k_\ell(p^*(x)))}{g(x)=1} } &\leq 9 \sup_{v \in [0, 1]} \abs{ \EEsc{D}{(y - p^*(x)) \cdot \mathbbm{1}[p^*(x) \leq v]}{g(x) = 1} } + \eps \cdot \sqrt{P_g^{-1}} \\
        &\leq O(\eps) \cdot \sqrt{P_g^{-1}},
    \end{align*}
    where the last inequality follows from the $(\cG, \emptyset, \eps)$-step calibration guarantee. 
\end{proof}

\begin{lemma}
    If a deterministic predictor $p^*$ is $(\cG, \cH, \eps)$-multiaccurate, then for all $h \in \cH$, $g \in \cG$,
    \begin{align*}
        \abs{ \EEsc{D}{(y - p^*(x)) \cdot \Delta \ell(h(x))}{g(x)=1} } \leq \frac{O(\eps)}{\sqrt{P_g}}.
    \end{align*}
    \vspace{-\baselineskip}
\label{lem:hypoi}
\end{lemma}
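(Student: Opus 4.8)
The plan is to mirror the proof of Lemma~\ref{lem:decoi} almost verbatim, replacing the post-processed predictor $k_\ell(p^*(\cdot))$ by the competitor hypothesis $h$, and replacing the marginal step calibration hypothesis by the multiaccuracy hypothesis. First I would fix an arbitrary loss $\ell \in \cL$ (so $\ell \in \cL_{\mathrm{BV}}$ by Assumption~\ref{as:bv}), a group $g \in \cG$, and a hypothesis $h \in \cH$. Since $h: \cX \to [0,1]$ is an arbitrary predictor, Lemma~\ref{lemma:approx} applies directly with $f = h$, yielding
\begin{align*}
    \abs{ \EEsc{D}{(y-p^*(x)) \cdot \Delta \ell(h(x))}{g(x)=1}} \leq 9 \sup_{v \in [0, 1]} \abs{ \EEsc{D}{(y-p^*(x)) \cdot \mathbbm{1}[h(x) \leq v]}{g(x)=1} } + \eps \cdot \sqrt{P_g^{-1}}.
\end{align*}
The remaining step is to bound the supremum termwise: for every threshold $v \in [0,1]$, the quantity $\abs{\EEsc{D}{(y-p^*(x))\cdot\mathbbm{1}[h(x)\leq v]}{g(x)=1}}$ is exactly what the definition of $(\cG,\cH,\eps)$-multiaccuracy (Definition~\ref{def:detma}) controls by $\eps\sqrt{P_g^{-1}}$, and this bound is uniform in $v$, so the supremum is itself at most $\eps\sqrt{P_g^{-1}}$. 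Combining, the left-hand side is at most $10\eps\sqrt{P_g^{-1}} = O(\eps)/\sqrt{P_g}$, as claimed.

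I do not expect a genuine obstacle here. In fact, the present argument is slightly simpler than that of Lemma~\ref{lem:decoi}: there one first had to verify that the composed map $\ell(k_\ell(\cdot),y)$ is a proper scoring rule and hence lies in $\cL_{\mathrm{BV}}$ before invoking Lemma~\ref{lemma:approx}, whereas here Lemma~\ref{lemma:approx} is applied to the original loss $\ell \in \cL_{\mathrm{BV}}$ directly, so no such preparatory step is needed. The only detail worth a sentence is that the quantization of $h$ onto the $\lambda$-net $I_\lambda$ causes no issue, since both Lemma~\ref{lemma:approx} and Definition~\ref{def:detma} range over all thresholds $v \in [0,1]$, which in particular covers every level set of the quantized $h$.
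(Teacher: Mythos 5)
Your argument is correct and follows the paper's proof essentially verbatim: invoke Lemma~\ref{lemma:approx} with $f = h$ (valid since $\ell \in \cL_{\mathrm{BV}}$ by Assumption~\ref{as:bv}), then bound the resulting supremum uniformly over thresholds by the $(\cG,\cH,\eps)$-multiaccuracy guarantee, giving $O(\eps)\cdot\sqrt{P_g^{-1}}$. Your added remarks about not needing the proper-scoring-rule step and about quantization being harmless are accurate but not needed; no gap here.
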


\begin{proof}[Proof of Lemma~\ref{lem:hypoi}]
     By assumption, $\ell \in \cL_{\mathrm{BV}}$. Then by applying Lemma~\ref{lemma:approx} with $f = h$ for any $\in \cH$,
     \begin{align*}
        \abs{ \EEsc{D}{(y - p^*(x)) \cdot \Delta \ell(h(x))}{g(x)=1} } &\leq 9 \sup_{w \in [0, 1]} \abs{ \EEsc{D}{(y - p^*(x)) \cdot \mathbbm{1}[h(x) \leq w]}{g(x) = 1} } + \eps \cdot \sqrt{P_g^{-1}} \\
        &\leq O(\eps) \cdot \sqrt{P_g^{-1}},
    \end{align*}
    where the last inequality follows from the $(\cG, \cH, \eps)$-multiaccuracy guarantee.
\end{proof}

\begin{proof}[Proof of Theorem~\ref{thm:detpan}]
    Starting with Equation~\ref{eq:ideal} and invoking Lemmas~\ref{lem:decomp}, ~\ref{lem:decoi}, and ~\ref{lem:hypoi} to swap terms on both sides of the inequality, at the cost of additive error $O(\eps \cdot \sqrt{P_g^{-1}})$, immediately yields the result. 
\end{proof}

\subsection{Extensions for Randomized Panprediction}
By extending the Loss OI machinery to work with the definitions randomized step calibration and randomized multiaccuracy, we can also show that randomized step calibration implies randomized panprediction. 
The statement and proof of the randomized variants of results are deferred to Appendix~\ref{sec:pan-proofs}.

\section{Step Calibration Algorithms via Multi-objective Learning}
\label{sec:algo}
In this section, we derive algorithms for deterministic and randomized step calibration using the multi-objective learning framework. 
Our deterministic step calibration algorithm improves on the best known sample complexity guarantee of deterministic omniprediction by a factor of $1/\eps$. 
Moreover, the sample complexity of our algorithms match the best known sample complexity guarantees of deterministic and randomized multi-group learning, up to logarithmic factors, demonstrating that panprediction guarantees can be attained ``for free'' whenever multi-group guarantees are sought. 

\subsection{Multi-objective Learning}
Fix a distribution $D$, a hypothesis class $\cF$, and a set of bounded objectives $\cO = \{ \ell: \cF \times \cX \times \cY \to [a, b] \}$. 
The goal of the $(D, \cO, \cH)$-multi-objective learning framework, formalized by Haghtalab, Jordan, and Zhao~\cite{DBLP:conf/nips/HaghtalabJ023}, is finding a deterministic predictor $f^* \in \cF$ such that  
\begin{equation}\label{eq:multiobj}
       \max_{\ell \in \cO} \EEs{(x, y) \sim D}{\ell(f^*(x), y)} \leq \min_{f \in \cF} \max_{\ell \in \cO} \EEs{(x, y) \sim D}{\ell(f(x), y)} + \eps.
\end{equation}
A randomized predictor $\bh^* \in \Delta(\cH)$ can also be found, with the expectation on the left hand side of Equation~\ref{eq:multiobj} holding on average over the draw of $h^* \sim \bh^*$. 
Below, we show that the $(\cG, \cH, \eps)$-step calibration problem can be formulated as a multi-objective problem. 
In the rest of the paper, we assume that the group probabilities $\{ P_g \}_{g \in \cG}$ are known constants, and denote $\gamma = \min_{g \in \cG} P_g$. 
This is a standard assumption in, e.g., multi-group learning \cite{DBLP:conf/icml/Tosh022}.

\begin{theorem}
    Fix a distribution $D$, the competitor hypothesis class $\cH$, and a set of groups $\cG$. For each $\sigma \in \{ \pm 1 \}$, $v, w \in [0, 1]$, $h \in \cH$, and $g \in \cG$, define the objective $\ell_{\sigma, v, w, h, g}: \cP \times \cX \times \cY \to [-1/\sqrt{\gamma}, 1/\sqrt{\gamma}]$ as
    \begin{align*}
        &\ell_{\sigma, v, w, h, g}(p, (x, y)) = \frac{\sigma \cdot \left(y - p(x) \right)}{\sqrt{P_g}} \cdot \mathbbm{1}[p(x) \leq v, h(x) \leq w, g(x) = 1].
    \end{align*} 
    Let $\cO_{\mathrm{sc}} = \{ \ell_{\sigma, v, w, h, g} \}$ be the set of all such objectives. 
    If $p^* \in \cP$ is an $\eps$-optimal solution to the $(D, \cO_{\mathrm{sc}}, \cP)$-multi-objective learning problem, then $p^*$ is $(\cG, \cH, O(\eps))$-step calibrated.
\label{thm:multiobj}
\end{theorem}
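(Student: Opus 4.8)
The plan is to recognize that $(\cG,\cH,\eps)$-step calibration is, after the rescaling built into the objectives $\ell_{\sigma,v,w,h,g}$, \emph{exactly} the statement that $p^*$ has small worst-case objective value over $\cO_{\mathrm{sc}}$, and then to check that the $\min$-$\max$ value of the $(D,\cO_{\mathrm{sc}},\cP)$-problem is itself $O(\eps)$, so that any $\eps$-optimal solution automatically inherits an $O(\eps)$ step calibration bound.

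\textbf{Step 1: rewrite the objectives.} First I would fix $v,w\in[0,1]$, $h\in\cH$, $g\in\cG$ and note that maximizing $\EE{\ell_{\sigma,v,w,h,g}(p,(x,y))}$ over $\sigma\in\{\pm1\}$ produces $\frac{1}{\sqrt{P_g}}\abs{\EE{(y-p(x))\,\mathbbm{1}[p(x)\le v,\,h(x)\le w,\,g(x)=1]}}$, and then use $\EE{Z\cdot\mathbbm{1}[g(x)=1]}=P_g\cdot\EEsc{D}{Z}{g(x)=1}$ to rewrite this as $\sqrt{P_g}\cdot\abs{\EEsc{D}{(y-p(x))\,\mathbbm{1}[p(x)\le v,\,h(x)\le w]}{g(x)=1}}$. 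Taking a supremum over all $(v,w,h,g)$, this shows that for every $\delta\ge0$, the condition $\sup_{\ell\in\cO_{\mathrm{sc}}}\EE{\ell(p,(x,y))}\le\delta$ is equivalent to $p$ being $(\cG,\cH,\delta)$-step calibrated. (I would phrase the worst-case value as a supremum because $\cO_{\mathrm{sc}}$ is indexed by the continuum of thresholds $v,w$; this matches the universal quantifier over $v,w$ in \Cref{def:detstep} and is otherwise immaterial here.)

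\textbf{Step 2: the $\min$-$\max$ value is $O(\eps)$, then conclude.} Next I would exhibit a good competitor in $\cP$, namely the Bayes predictor rounded to the net: let $\tilde p(x)$ be the point of $I_\lambda$ nearest to $\EEsc{}{y}{x}$, so that $\tilde p\in\cP$ and $\abs{\tilde p(x)-\EEsc{}{y}{x}}\le\lambda$ pointwise. Since the event $\{\tilde p(x)\le v,\,h(x)\le w,\,g(x)=1\}$ is a function of $x$ alone, the tower rule eliminates the $y-\EEsc{}{y}{x}$ contribution, leaving $\abs{\EE{(y-\tilde p(x))\,\mathbbm{1}[\tilde p(x)\le v,\,h(x)\le w,\,g(x)=1]}}\le\lambda P_g$; dividing by $\sqrt{P_g}$ and taking a supremum gives $\sup_{\ell\in\cO_{\mathrm{sc}}}\EE{\ell(\tilde p,(x,y))}\le\lambda\sqrt{P_g}\le\lambda$, hence $\min_{p\in\cP}\sup_{\ell\in\cO_{\mathrm{sc}}}\EE{\ell(p,(x,y))}\le\lambda$. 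Since the learner selects $\lambda$ with knowledge of $\eps$, taking $\lambda=O(\eps)$ makes this $O(\eps)$. Now if $p^*$ is an $\eps$-optimal solution to $(D,\cO_{\mathrm{sc}},\cP)$, then $\sup_{\ell\in\cO_{\mathrm{sc}}}\EE{\ell(p^*,(x,y))}\le\min_{p\in\cP}\sup_{\ell\in\cO_{\mathrm{sc}}}\EE{\ell(p,(x,y))}+\eps\le\lambda+\eps=O(\eps)$, and Step 1 translates this directly into $(\cG,\cH,O(\eps))$-step calibration.

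\textbf{Main obstacle.} Essentially all of this is bookkeeping, and the only step carrying real content is Step 2. The one thing to watch there is that the $\sqrt{P_g}\le1$ factor is precisely what keeps the group conditioning (whose definition carries a $1/P_g$) from amplifying the $O(\lambda)$ rounding error, so the $1/\sqrt{P_g}$ normalization inside $\ell_{\sigma,v,w,h,g}$ is load-bearing rather than cosmetic; this is also why the $\eps\sqrt{P_g^{-1}}$ scaling in \Cref{def:detstep} lines up with the $[-1/\sqrt{\gamma},1/\sqrt{\gamma}]$ range of the objectives. I would additionally double-check the measurability/tower-rule step and confirm that $\cP$ is genuinely unrestricted enough to contain the rounded Bayes predictor.
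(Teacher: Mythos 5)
Your proof is correct and follows essentially the same route as the paper: rewrite the worst-case objective value over $\cO_{\mathrm{sc}}$ as the rescaled group-conditional step-calibration error, bound the min-max value of the game by plugging in the Bayes predictor, and transfer $\eps$-optimality of $p^*$ into the $(\cG,\cH,O(\eps))$-step calibration bound. The only (minor) difference is that you use the $\lambda$-rounded Bayes predictor and carry the extra $O(\lambda)=O(\eps)$ term, whereas the paper asserts the min-max value is exactly zero via the exact Bayes predictor; your version is, if anything, slightly more careful about the quantization of $\cP$.
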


\begin{proof}[Proof of Theorem~\ref{thm:multiobj}]
    By assumption that $p^*$ is an $\eps$-optimal solution to the $(\cD, \cO_{\mathrm{sc}}, \cP)$-multi-objective learning problem, we have that
    \begin{align*}
        \max_{\sigma, v, w, h, g}\EEs{(x, y) \sim D}{\ell_{\sigma, v, w, h, g}(p^*, (x, y))} &= \max_{v, w, h, g} \frac{\abs{\EEs{}{(y - p^*(x)) \cdot \mathbbm{1}[p^*(x) \leq v, h(x) \leq w, g(x) =1]}}}{\sqrt{P_g}} \\
        &\leq \min_{p \in \cP} \max_{v, w, h, g} \EEs{(x, y) \sim D}{\ell_{\sigma, v, w, h, g}(p, (x, y))} + \eps \\
        &= \eps,
    \end{align*}
    where the last equality holds because the expectation is zero when $p$ is the Bayes predictor. 
    Since
    \begin{align*}
        &\abs{\EEs{D}{(y - p^*(x)) \cdot \mathbbm{1}[p^*(x) \leq v, h(x) \leq w, g(x) = 1]}} = P_g \abs{\EEsc{D}{(y - p^*(x)) \mathbbm{1}[p^*(x) \leq v, h(x) \leq w]}{g(x) = 1}}
    \end{align*}
    it follows that for all $v, w \in [0, 1]$, $h \in \cH$, and $g \in \cG$,
    \begin{align*}
        &\abs{\EEsc{D}{(y - p^*(x)) \cdot \mathbbm{1}[p^*(x) \leq v, h(x) \leq w]}{g(x) = 1}} \leq \eps \cdot \sqrt{P_g^{-1}}.
        \qedhere
\end{align*}
\end{proof}

Theorem~\ref{thm:multiobj} establishes that step calibrated predictors correspond to the approximate equilibria of a particular two-player, zero-sum game. 
Following Haghtalab, Jordan, and Zhao \cite{DBLP:conf/nips/HaghtalabJ023}, we construct step calibration algorithms that use tools from online learning to solve this game.

A technical detail that must be handled before designing algorithms is the fact that $\cO_{\mathrm{sc}}$ can have infinite cardinality.
In order to make algorithms tractable, we leverage Assumptions~\ref{as:bv}, \ref{as:hdim}, and \ref{as:gdim} to construct a finite cover, denoted by $\hat{O}_\mathrm{sc}$.
This covering argument is standard in statistical learning theory, and details are deferred to Appendix~\ref{sec:algo-proofs}.

\subsection{Deterministic Step Calibration}

\begin{algorithm}[t!] 
\caption{Deterministic Step Calibration} \label{alg:detstep}
\begin{algorithmic}[1]
\Require $\epsilon, \delta \in (0,1)$, $T \in \mathbb{N}$, $c \in [0, 1]$, $C \in \mathbb{N}$, sampling access to $D$, best response oracle $\mathcal{A}$
\State Initialize Hedge iterate $p^{(1)} \;\gets\; [\tfrac 12,\tfrac 12]^{\mathcal{X}}$
\For{$t=1,\dots,T$}
    \State Update
    \begin{align*}
        \ell^{(t)} \;\gets\; \mathcal{A}\bigl(p^{(t)}, \hat{\cO}_{\mathrm{sc}}^{\gamma}, c \eps\sqrt{\gamma} \bigr),
    \end{align*}
     where $\hat{\cO}^\gamma_{\text{sc}}$ is a modification of $\cO_{\text{sc}}$ specified in Appendix~\ref{sec:algo-proofs}
    \State For each $x\in\mathcal{X}$, update
    \begin{align*}
        p^{(t+1)}(x)\;\gets\;\mathrm{Hedge}\bigl(c_x^{(1)}, \dots, c_x^{(t)}\bigr),
    \end{align*}
     where $c_x^{(t)}$ is a modification of $\ell^{(t)}$ specified in Appendix~\ref{sec:algo-proofs}
\EndFor
\State Draw $m\gets \tfrac{C\log(T/\delta)}{\epsilon^2}$ samples $z^{(1)}, \dots, z^{(m)} \sim D$
\State $t^*\;\gets\;\arg\min_{t\in[T]}\sum_{(x,y)\in z^{(1:m)}}\ell^{(t)}\bigl(p^{(t)},(x,y)\bigr)$
\State \Return $p^{(t^*)}$
\end{algorithmic}
\end{algorithm}

Below, we build intuition about our algorithms and state guarantees. 
Technical details are deferred to Appendix \ref{sec:algo-proofs}. 

Per Haghtalab, Jordan, and Zhao \cite{DBLP:conf/nips/HaghtalabJ023}, deterministic solutions to the multiobjective problem can be found via ``no-regret vs best-response'' dynamics between a fictitious player, who constructs a sequence of predictors using a no-regret algorithm, and a fictitious adversary, who constructs a sequence of objectives using a best-response oracle.
In our setting, the player is instantiated with a copy of the Hedge algorithm \cite{freund_decision-theoretic_1997} per point $x \in \cX$. 
The player's prediction at $x \in \cX$ at iteration $t$ is denoted by $p^{(t)}(x)$ and evolves according to the Hedge update rule, which takes (a light modification of) the historical loss functions $\ell^{(1)}, \dots, \ell^{(t-1)}$ as input.
In turn, at each iteration, the adversary selects a loss $\ell^{(t)} \in \cO_{\mathrm{sc}}$ that approximately maximizes penalty against the player's Hedge algorithm.
That is, given predictor $p^{(t)}$, objectives $\cO_{\mathrm{sc}}$, and error tolerance $\eps'$, $\ell^{(t)} \in \cO_{\mathrm{sc}}$ is chosen such that 
\begin{align*}
    &\EEs{(x, y) \sim D}{\ell(p^{(t)}, (x, y))} \geq \max_{\ell^* \in \cO_{\mathrm{sc}}} \EEs{(x, y) \sim D}{\ell^*(p^{(t)}, (x, y))} - \eps'.
\end{align*}
The above computation is abstracted into a best-response oracle $\cA$. 
This oracle requires samples from $D$ to approximate the expectation, and uses techniques from adaptive data analysis~\cite{bassily_algorithmic_2015} in order to reduce sample complexity overhead.
Putting everything together in Algorithm~\ref{alg:detstep}, we have the following result.

\begin{restatable}{theorem}{detstep}\label{thm:detstep}
     Fix $\varepsilon > 0$ and $\delta \in (0, 1)$. Let $d_H$ be the appropriate combinatorial dimension of $H$ and $d_G$ the VC dimension of $\cG$. Then with probability at least $1-\delta$ and at most $\tilde{O} \left( \eps^{-3} \cdot (d_H + d_G) \cdot \log(1/\delta) \right)$ samples from $D$, Algorithm~\ref{alg:detstep} returns a deterministic predictor $p^*$ that is $(\cG, \cH, \eps)$-step calibrated.
\end{restatable}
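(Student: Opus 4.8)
The plan is to instantiate the no-regret-versus-best-response analysis of Haghtalab, Jordan, and Zhao~\cite{DBLP:conf/nips/HaghtalabJ023} on the multi-objective learning problem $(D, \cO_{\mathrm{sc}}, \cP)$ identified in Theorem~\ref{thm:multiobj}, and then to track how the loss range $1/\sqrt{\gamma}$, the net granularity $\lambda$, and the adaptive data analysis overhead enter the sample complexity. By Theorem~\ref{thm:multiobj} it suffices to show that, with probability at least $1-\delta$, the output $p^{(t^*)}$ is an $O(\eps)$-optimal solution to $(D, \cO_{\mathrm{sc}}, \cP)$; since the Bayes predictor makes every objective in $\cO_{\mathrm{sc}}$ have zero expectation, this amounts to showing $\max_{\ell \in \cO_{\mathrm{sc}}} \EEs{(x,y) \sim D}{\ell(p^{(t^*)}, (x,y))} \le O(\eps)$.

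First I would pass to a finite objective set. Invoking Assumptions~\ref{as:bv}, \ref{as:hdim}, and~\ref{as:gdim}, a standard covering/uniform-convergence argument (carried out in Appendix~\ref{sec:algo-proofs}) produces the finite, appropriately rescaled family $\hat{\cO}_{\mathrm{sc}}^{\gamma}$ used in Algorithm~\ref{alg:detstep}, with $\log|\hat{\cO}_{\mathrm{sc}}^{\gamma}| = \tilde{O}(d_H + d_G)$, such that $O(\eps)$-optimality against $\hat{\cO}_{\mathrm{sc}}^{\gamma}$ implies $O(\eps)$-optimality against $\cO_{\mathrm{sc}}$; here $\lambda = \Theta(\eps)$ suffices and contributes only a $\log(1/\eps)$ factor, absorbed into $\tilde{O}(\cdot)$. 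The rest of the analysis concerns the zero-sum game played over the $\lambda$-net-quantized predictors against $\hat{\cO}_{\mathrm{sc}}^{\gamma}$, whose value is $O(\eps)$ (certified by the net-quantized Bayes predictor).

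Next I would analyze the game dynamics. The player runs one copy of Hedge per context $x$ over the $|I_\lambda| = O(1/\lambda)$ candidate predictions, fed the $x$-conditional contributions $c_x^{(t)}$ of $\ell^{(t)}$; aggregated over $\cX$, this is a no-regret algorithm over the quantized predictor class with cumulative regret $O\!\big(\sqrt{T \log(1/\lambda)}\,/\sqrt{\gamma}\big)$ against any sequence of objectives of range $O(1/\sqrt{\gamma})$. The adversary plays the best-response oracle $\cA$: since each $p^{(t)}$ depends on earlier sample-based queries, $\cA$ must answer $T$ \emph{adaptive} rounds of queries, each round scoring the $|\hat{\cO}_{\mathrm{sc}}^{\gamma}|$ objectives at $p^{(t)}$, so a transfer theorem from adaptive data analysis~\cite{bassily_algorithmic_2015} makes $\cA$ uniformly $O(\eps)$-accurate in expectation with probability $1-\delta/2$ using $\tilde{O}\!\big(\sqrt{T}\,\log|\hat{\cO}_{\mathrm{sc}}^{\gamma}|\,/(\gamma \eps^2) \cdot \log(1/\delta)\big)$ samples total. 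Combining the regret bound with approximate optimality of the oracle in the standard way gives $\min_{t \in [T]} \max_{\ell} \EEs{D}{\ell(p^{(t)})} \le O\!\big(\eps + \sqrt{\log(1/\lambda)/(\gamma T)}\big)$, and moreover that $\EEs{D}{\ell^{(t)}(p^{(t)})}$ always lies between $\max_{\ell} \EEs{D}{\ell(p^{(t)})} - O(\eps)$ and $\max_{\ell} \EEs{D}{\ell(p^{(t)})}$; choosing $T = \tilde{\Theta}(1/(\gamma \eps^2))$ makes the regret term $O(\eps)$.

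Finally I would handle iterate selection and total the sample budget. Because of the sandwiching just stated, picking $t^*$ to minimize the empirical average of $\ell^{(t)}(p^{(t)}, \cdot)$ over the $m = \tilde{\Theta}(\log(T/\delta)/(\gamma\eps^2))$ fresh samples drawn in the last step of Algorithm~\ref{alg:detstep} --- for which a union bound over the $T$ candidates yields uniform $O(\eps)$-accuracy with probability $1-\delta/2$ --- returns a predictor with $\max_{\ell} \EEs{D}{\ell(p^{(t^*)})} = O(\eps)$, hence $O(\eps)$-optimality against $\cO_{\mathrm{sc}}$ by the covering step. Adding the budgets, the oracle term $\tilde{O}\!\big(\sqrt{T}(d_H + d_G)/(\gamma\eps^2) \cdot \log(1/\delta)\big) = \tilde{O}\!\big((d_H + d_G)\eps^{-3}\log(1/\delta)\big)$ dominates the selection term, giving the claim; the $\eps^{-3}$ is precisely $\sqrt{T} = \tilde{\Theta}(\eps^{-1})$ iterations times $\eps^{-2}$ for statistical estimation inside the oracle. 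The hard part will be this last accounting: one must jointly tune $T$, the oracle tolerance, and $\lambda$, and verify that the adaptive-data-analysis bound --- rather than a wasteful fresh-sample-per-round scheme --- keeps the exponent of $1/\eps$ at $3$ instead of $4$, which is exactly where the improvement over deterministic omniprediction comes from.
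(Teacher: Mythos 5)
Your proposal is correct and follows essentially the same route as the paper's proof: reduce to the multi-objective game of Theorem~\ref{thm:multiobj}, pass to the finite rescaled cover $\hat{\cO}_{\mathrm{sc}}^{\gamma}$ with $\log$-size $\tilde{O}(d_H+d_G)$, run per-point Hedge against an adaptive-data-analysis best-response oracle (Propositions~\ref{prop:nr}, \ref{prop:nrbr}, \ref{prop:adaptive}), and identify a good iterate with fresh samples (Proposition~\ref{prop:nrbr2}), with $T = \tilde{\Theta}(\eps^{-2}\gamma^{-1})$ so the oracle's $\sqrt{T}/\eps^2$ cost gives the $\tilde{O}(\eps^{-3})$ bound. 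The only cosmetic difference is that you describe the per-point Hedge over the $O(1/\lambda)$ quantized predictions rather than the two-action form the paper uses, which affects only logarithmic factors.
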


The proof of Theorem~\ref{thm:detstep} is deferred to Appendix~\ref{sec:algo-proofs}. 
Notably, the $\tilde{O}\left( \eps^{-3} \cdot d_H \right)$ sample complexity guarantee improves the best known sample complexity guarantee of $\tilde{O}\left( \eps^{-4} \cdot d_H \right)$ for deterministic omniprediction \cite{okoroafor_near-optimal_2025} by a factor of $\eps^{-1}$. 
Moreover, it matches the best known sample complexity guarantee for deterministic multi-group learning \cite{DBLP:conf/icml/Tosh022} up to logarithmic factors. 

\subsection{Randomized Step Calibration}
Per Haghtalab, Jordan, and Zhao \cite{DBLP:conf/nips/HaghtalabJ023}, randomized solutions to the multi-objective problem can be found via ``no-regret vs no-regret'' dynamics. 
In our setting, this corresponds to using instantiations of Hedge to construct the predictor $p^{(t)}$ and another instantiation of Hedge to select the losses $\ell^{(t)}$ on which the predictor is updated. 
An advantage of this dynamic, over the no-regret vs best-response dynamics considered for deterministic multi-objective learning, is that it only incurs a sample complexity of $\tilde{O}(1/\epsilon^2)$.

\begin{algorithm}[tb]
\caption{Randomized Step Calibration}
\label{alg:randstep}
\begin{algorithmic}[1]
\Require $\epsilon, \delta \in (0,1)$, $T \in \mathbb{N}$, $c \in [0, 1]$, $C \in \mathbb{N}$, sampling access to $D$
\State Initialize Hedge iterates $p^{(1)} \;\gets\; [\tfrac 12,\tfrac 12]^{\mathcal{X}}$ and $q^{(1)} = \mathrm{Uniform}(\hat{\cO}_{\mathrm{sc}}^{\gamma})$.
\For{$t=1,\dots,T$}
    \State Sample objective $\ell^{(t)} \sim q^{(t)}$ and data point $(x^{(t)}, y^{(t)}) \sim D$
    \State For each $x\in\mathcal{X}$, update 
    \begin{align*}
        p^{(t+1)}(x)\;\gets\;\mathrm{Hedge}\bigl(c_x^{(1)}, \dots, c_x^{(t)}\bigr),
    \end{align*}
    where $c_x^{(t)}(\hat y)$ is a modification of $\ell^{(t)}$ as specified in Appendix~\ref{sec:algo-proofs}
    \State Let $q^{(t+1)} = \mathrm{Hedge}(c_{\mathrm{adv}}^{(1)}, \dots, c_{\mathrm{adv}}^{(t)})$ where $c_{\mathrm{adv}}^{(t)} = 1-\ell_{(\cdot)}(p^{(t)}, (x^{(t)}, y^{(t)}))$
\EndFor
\State \Return $\bp^{*}$, a uniform distribution over $p^{(1)}, \dots, p^{(T)}$
\end{algorithmic}
\end{algorithm}

\begin{restatable}{theorem}{randstep}\label{thm:randstep}
    Fix $\varepsilon > 0$ and $\delta \in (0, 1)$. Let $d_H$ be the appropriate combinatorial dimension of $H$ and $d_G$ the VC dimension of $\cG$. Then with probability at least $1-\delta$ and at most $\tilde{O} \left( \eps^{-2} \cdot (d_H + d_G) \cdot \log(1/\delta) \right)$ samples from $D$, Algorithm~\ref{alg:randstep} returns a randomized predictor $p^*$ that is $(\cG, \cH,  \eps)$-step calibrated.
\end{restatable}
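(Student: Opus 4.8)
The plan is to mirror the deterministic argument behind Theorem~\ref{thm:detstep}, but to replace its ``no-regret vs.\ best-response'' dynamics with the ``no-regret vs.\ no-regret'' dynamics that constitute the randomized multi-objective learning solver of Haghtalab, Jordan, and Zhao~\cite{DBLP:conf/nips/HaghtalabJ023}. By Theorem~\ref{thm:multiobj}, it suffices to show that Algorithm~\ref{alg:randstep} outputs a randomized predictor $\bp^*$ that is an $O(\eps)$-optimal solution to the multi-objective learning problem over (a finite cover of) $\cO_{\mathrm{sc}}$. So the first step is to invoke the covering construction of Appendix~\ref{sec:algo-proofs}: using Assumptions~\ref{as:hdim} and~\ref{as:gdim} to cover $\cH$ and $\cG$ and discretizing the thresholds $v,w$ on a grid, one obtains a finite set $\hat{\cO}_{\mathrm{sc}}^\gamma$ with $\log|\hat{\cO}_{\mathrm{sc}}^\gamma| = \tilde O(d_H + d_G)$, and such that solving the multi-objective problem over $\hat{\cO}_{\mathrm{sc}}^\gamma$ to accuracy $O(\eps)$ still certifies $(\cG,\cH,O(\eps))$-step calibration.

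Next I would analyze the game. The relevant zero-sum game is $\min_{p\in\cP}\max_{q\in\Delta(\hat{\cO}_{\mathrm{sc}}^\gamma)}\E_{\ell\sim q}\E_{(x,y)\sim D}[\ell(p,(x,y))]$, and its value is $0$ because the Bayes predictor makes every objective's expectation vanish, exactly as in the proof of Theorem~\ref{thm:multiobj}. The player in Algorithm~\ref{alg:randstep} maintains one Hedge instance per context $x$ over the prediction grid $I_\lambda$; since each objective depends on the predictor only through the scalar $p(x)$, the best fixed comparator $p\in\cP$ decomposes over contexts, so the player's expected regret against the realized sequence of sampled objectives is at most the worst per-context Hedge regret, $\tilde O(\sqrt{T\log|I_\lambda|})$ up to the appropriate loss scale. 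The adversary runs Hedge over $\hat{\cO}_{\mathrm{sc}}^\gamma$ with the single-sample loss $c_{\mathrm{adv}}^{(t)} = 1-\ell_{(\cdot)}(p^{(t)},(x^{(t)},y^{(t)}))$, incurring regret $\tilde O(\sqrt{T\log|\hat{\cO}_{\mathrm{sc}}^\gamma|}) = \tilde O(\sqrt{T(d_H+d_G)})$. A key point in controlling the loss scale---and in making $\gamma$ disappear from the final bound---is that each objective $\ell_{\sigma,v,w,h,g}$ is supported only on the $P_g$-fraction of contexts belonging to $g$, where it has magnitude $O(1/\sqrt{P_g})$, so its conditional second moment is $O(1)$; a variance-aware (Freedman-type) Hedge analysis, together with the clipping built into $\hat{\cO}_{\mathrm{sc}}^\gamma$, turns this into regret and concentration bounds that carry no $1/\gamma$ factor and hence cancel cleanly against the $\sqrt{P_g^{-1}}$ scaling in the step-calibration target.

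The third step is the concentration argument passing from the realized, one-sample-per-round play to the true expected game. Because $(x^{(t)},y^{(t)})$ is drawn freshly each round, $c_{\mathrm{adv}}^{(t)}$ and the per-context updates are conditionally unbiased estimators of the corresponding $D$-expectations, so a martingale (Azuma/Freedman) bound with a union bound over the $|\hat{\cO}_{\mathrm{sc}}^\gamma|$ objectives shows that, with probability $\ge 1-\delta$, the time-averaged realized payoffs match their expectations up to $\tilde O(\sqrt{(d_H+d_G)\log(1/\delta)/T})$. Feeding the two regret bounds and this deviation bound into the standard ``regret implies approximate equilibrium'' argument shows that the uniform mixture $\bp^*$ over $p^{(1)},\dots,p^{(T)}$---which is exactly the time-averaged player strategy realized as a randomized predictor---satisfies $\max_{\ell\in\hat{\cO}_{\mathrm{sc}}^\gamma}\E_{p^*\sim\bp^*}\E_{(x,y)\sim D}[\ell(p^*,(x,y))] = \tilde O\big(\sqrt{(d_H+d_G)\log(1/\delta)/T}\big)$. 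Choosing $T = \tilde O(\eps^{-2}(d_H+d_G)\log(1/\delta))$ makes this $O(\eps)$; since Algorithm~\ref{alg:randstep} consumes exactly one sample per round, the total sample complexity is $\tilde O(\eps^{-2}(d_H+d_G)\log(1/\delta))$, and Theorem~\ref{thm:multiobj} then gives $(\cG,\cH,O(\eps))$-step calibration. Rescaling $\eps$ finishes the proof.

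The main obstacle I anticipate is the bookkeeping in the last two steps: one must verify that the per-context Hedge instances---updated every round via the ``modification of $\ell^{(t)}$'' that renders the per-context loss both computable (without access to $\E[y\mid x]$) and correctly bounded---still satisfy the aggregate-over-contexts regret bound in expectation, and that the relevant quantities concentrate while paying a union bound only over the finite cover $\hat{\cO}_{\mathrm{sc}}^\gamma$ rather than over the infinitely many contexts. The companion delicate point is quantitative: pushing the cover small enough that $\log|\hat{\cO}_{\mathrm{sc}}^\gamma|$ is genuinely $\tilde O(d_H+d_G)$ (and not, say, $\tilde O(d_H d_G)$ or accompanied by an unkillable $1/\gamma$), which is exactly where Assumptions~\ref{as:hdim} and~\ref{as:gdim} and the variance structure above are used in earnest.
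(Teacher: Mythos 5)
Your proposal follows essentially the same route as the paper's proof: formulate step calibration as the $(D,\hat{\cO}_{\mathrm{sc}}^{\gamma},\cP)$ multi-objective problem whose value is zero (Bayes predictor), build a finite cover with $\log|\hat{\cO}_{\mathrm{sc}}^{\gamma}| = \tilde{O}\big((d_H+d_G)\log(1/\eps)\big)$ from Assumptions~\ref{as:hdim} and~\ref{as:gdim}, run per-context Hedge for the learner against Hedge over the cover for the adversary with one fresh sample per round, use martingale/stochastic-approximation bounds (union-bounded only over the cover) to pass from realized to expected payoffs, and conclude via the no-regret vs.\ no-regret equilibrium argument (Propositions~\ref{prop:nr}, \ref{prop:hedge}, \ref{prop:stoch}, \ref{prop:nrnr}) that the uniform mixture is an $O(\eps)$-optimal solution with $T=\tilde{O}(\eps^{-2}(d_H+d_G)\log(1/\delta))$. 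The one place you genuinely diverge is the handling of the $1/\sqrt{P_g}$ loss scale: the paper simply re-scales and re-centers the objectives into $[0,1]$ (the class $\hat{\cO}_{\mathrm{sc}}^{\gamma}$), targets accuracy $\eps\sqrt{\gamma}$, and accepts an explicit $1/\gamma$ factor in the appendix bound, which disappears from Theorem~\ref{thm:randstep} only because $\gamma=\min_g P_g$ is assumed to be a known constant; you instead invoke a variance-aware (Freedman-type) Hedge analysis and claim the bounds carry no $1/\gamma$ factor at all. That stronger claim is not needed for the stated theorem and is not fully supported by your sketch---second-order Hedge regret and Freedman deviation bounds retain additive range-dependent terms of order $\gamma^{-1/2}\log|\hat{\cO}_{\mathrm{sc}}^{\gamma}|$, so some $\gamma$ dependence would survive at lower order in $\eps$---but since the theorem treats $\gamma$ as a constant, this overstatement does not create a gap in the proof of the result as stated; if anything, your variance route points toward a sharper $\gamma$-dependence than the paper records, at the cost of a more delicate analysis.
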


The statement of the algorithm and the proof of Theorem~\ref{thm:randstep} are deferred to Appendix~\ref{sec:algo-proofs}. 
Importantly, the above result matches the best known sample complexity guarantees of randomized omniprediction \cite{okoroafor_near-optimal_2025} and multi-group learning \cite{DBLP:conf/icml/Tosh022} up to logarithmic factors. 
Furthermore, the $\tilde{O}(1/\eps^2)$ rate is minimax optimal for minimizing one loss over one task. 
This establishes that under mild assumptions---such as allowing for randomized predictors---simultaneously minimizing infinitely many losses on infinitely many tasks can be as statistically easy as minimizing one loss on one task. 
This strengthens the conclusion of Okoroafor, Kleinberg, and Kim \cite{okoroafor_near-optimal_2025}, who showed that simultaneously minimizing infinitely many losses can be as statistically easy as minimizing one loss. 

\section{Connections to Omniprediction and Multi-group Learning}
\label{sec:connect}
In this section, we make precise how panprediction generalizes the problems of \emph{omniprediction}~\cite{gopalan_omnipredictors_2022, okoroafor_near-optimal_2025} and \emph{multi-group learning}~\cite{DBLP:conf/icml/RothblumY21, DBLP:conf/icml/Tosh022}, which respectively study predictions that generalize to many losses or many tasks, but not both.

\subsection{Omniprediction}
The goal of omniprediction is to use samples from $D$ to construct a predictor that, for any post-hoc choice of loss $\ell \in \cL$, can be post-processed to compete with the best hypothesis $h \in \cH$, as measured by the loss $\ell$.

\begin{definition}[Deterministic Omniprediction]
    Given loss class $\cL$ and hypothesis class $\cH$, a \textit{deterministic} predictor $p^*: \cX \to [0, 1]$ is a $(\cL, \cH, \eps)$-omnipredictor if for each $\ell \in \cL$, 
    \begin{align*}
        \EEs{(x, y) \sim D}{\ell(k_\ell ( p^*(x)), y)} \leq \min_{h \in \cH} \EEs{(x, y) \sim D}{\ell( h(x), y)} + \eps.
    \end{align*}
    The post-processing $k_\ell$ is defined as in Equation~\ref{eq:postprocess}.
\end{definition}

By symmetry in the definitions, a deterministic panpredictor with the trivial group over the entire domain, $\cG = \{ \cX \}$, is a deterministic omnipredictor. 

\begin{proposition}
    Let $\cL$ be a loss class and $\cH$ a hypothesis class. If the deterministic predictor $p^*: \cX \to [0, 1]$ is a $(\cL, \{\cX\}, \cH, \eps)$-panpredictor, then $p^*$ is a deterministic $(\cL, \cH, \eps)$-omnipredictor.
\end{proposition}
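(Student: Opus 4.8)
The plan is to specialize the panprediction guarantee to the degenerate group structure $\cG = \{\cX\}$ and observe that it collapses verbatim to the omniprediction guarantee. First I would note that when $\cG = \{\cX\}$, the unique group membership function is $g \equiv 1$, so the conditioning event $\{g(x) = 1\}$ has probability one and $P_g = \Pr_{(x,y)\sim D}(g(x)=1) = 1$. Consequently $\sqrt{P_g^{-1}} = 1$, and every conditional expectation $\EEsc{(x,y)\sim D}{\cdot}{g(x)=1}$ coincides with the corresponding unconditional expectation $\EEs{(x,y)\sim D}{\cdot}$.

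Next I would invoke Definition~\ref{def:detpan} with this single group: for every $\ell \in \cL$,
\[
\EEsc{(x,y)\sim D}{\ell(k_\ell(p^*(x)), y)}{g(x)=1} \leq \min_{h\in\cH}\EEsc{(x,y)\sim D}{\ell(h(x),y)}{g(x)=1} + \eps\cdot\sqrt{P_g^{-1}}.
\]
Substituting the two observations above---trivial conditioning and $\sqrt{P_g^{-1}} = 1$---turns this into exactly the inequality required by the definition of a deterministic $(\cL,\cH,\eps)$-omnipredictor. The only point worth remarking explicitly is that the post-processing map $k_\ell$ is the same object in both definitions, being defined by Equation~\ref{eq:postprocess} with no reference to $\cG$, so no reconciliation of the post-processing step is needed.

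There is essentially no obstacle here: the statement is a direct unpacking of definitions, and the only thing to confirm is that the $\sqrt{P_g^{-1}}$ slack in the panprediction definition was calibrated precisely so as to reduce to the additive $\eps$ of omniprediction when the group is all of $\cX$. An analogous one-line argument handles the randomized case via Definition~\ref{def:randpan}, though the proposition as stated only concerns deterministic predictors.
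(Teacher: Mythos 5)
Your proposal is correct and matches the paper's treatment: the paper gives no detailed proof, simply noting that with the trivial group $\cG=\{\cX\}$ the panprediction definition collapses to the omniprediction definition, which is exactly the unpacking you carry out (with $P_g=1$, trivial conditioning, and the shared post-processing $k_\ell$). No gaps.
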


The definition of randomized omniprediction, and a corresponding result that a randomized panpredictor with $\cG = \{ \cX \}$ is a randomized omnipredictor, are deferred to Appendix~\ref{sec:connect-proofs}.

\subsection{Multi-group Learning}
The goal of multi-group learning is to use samples from $D$ to construct a predictor that, on each group $g \in \cG$, competes with best hypothesis $h \in \cH$ for that group, as measured by the zero-one loss.

\begin{definition}[Deterministic MG Learning]
    Given the zero-one loss $\ell$, set of groups $\cG$, and binary hypothesis class $\cH$, a \textit{deterministic} hypothesis $h^*: \cX \to \{0, 1\}$ is a $(\ell, \cG, \cH, \eps)$-multi-group learner if for each group $g \in \cG$, 
    \begin{align*}
        &\EEsc{(x, y) \sim D}{\ell(h^*(x), y)}{g(x) = 1} \leq \min_{h \in \cH} \EEsc{(x, y) \sim D}{\ell(h(x), y)}{g(x) = 1} + \eps \cdot \sqrt{P_g^{-1}}.
    \end{align*}
\end{definition}

Panprediction sits upstream from multi-group learning, in the sense that the post-processing of an appropriate panpredictor is a multi-group learning solution. 

\begin{restatable}{proposition}{detmulti}\label{prop:detmulti}
    Fix the zero-one loss $\ell$, groups $\cG$, and binary hypothesis class $\cH$. If a deterministic predictor $p^*: \cX \to [0, 1]$ is a $(\{\ell\}, \cG, \cH, \eps)$-panpredictor, then the binary classifier $h(x) = \mathbbm{1}[p^*(x) \geq 0.5]$ is a deterministic $(\ell, \cG, \cH, \eps)$-multi-group learner.
\end{restatable}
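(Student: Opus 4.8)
The plan is to show that for the zero-one loss $\ell$, the post-processing map $k_\ell$ defined in Equation~\ref{eq:postprocess} coincides (on $[0,1]$) with the threshold rule $p \mapsto \mathbbm{1}[p \geq 0.5]$, so that the claimed multi-group learner $h(x) = \mathbbm{1}[p^*(x) \geq 0.5]$ is literally the panprediction post-processing $k_\ell(p^*(x))$. Once that identification is made, the multi-group learning inequality is exactly the panprediction guarantee of Definition~\ref{def:detpan} specialized to $\cL = \{\ell\}$, and the proof is essentially immediate.

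\textbf{Key steps.} First, I would unpack $k_\ell$ for the zero-one loss. With $\hat\cY = \cY = \{0,1\}$ and $\ell(\hat y, y) = \mathbbm{1}[\hat y \neq y]$, Equation~\ref{eq:postprocess} gives
\begin{align*}
    k_\ell(p) = \argmin_{\hat y \in \{0,1\}} \big( p \cdot \mathbbm{1}[\hat y \neq 1] + (1-p)\cdot \mathbbm{1}[\hat y \neq 0] \big) = \argmin_{\hat y \in \{0,1\}} \big( p\,(1-\hat y) + (1-p)\,\hat y \big),
\end{align*}
which evaluates to $1-p$ at $\hat y = 0$ and to $p$ at $\hat y = 1$; hence $k_\ell(p) = 1$ whenever $p \geq 1/2$ and $k_\ell(p) = 0$ whenever $p < 1/2$ (with ties broken, say, toward $1$, matching the convention $\mathbbm{1}[p^*(x) \geq 0.5]$). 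Second, I would note that therefore $k_\ell(p^*(x)) = \mathbbm{1}[p^*(x) \geq 0.5] = h(x)$ for all $x \in \cX$. Third, I would apply Definition~\ref{def:detpan} with this single loss: since $p^*$ is a $(\{\ell\}, \cG, \cH, \eps)$-panpredictor, for every $g \in \cG$,
\begin{align*}
    \EEsc{(x,y)\sim D}{\ell(k_\ell(p^*(x)), y)}{g(x)=1} \leq \min_{h' \in \cH} \EEsc{(x,y)\sim D}{\ell(h'(x), y)}{g(x)=1} + \eps\cdot\sqrt{P_g^{-1}}.
\end{align*}
Substituting $k_\ell(p^*(x)) = h(x)$ into the left side yields exactly the defining inequality of a deterministic $(\ell, \cG, \cH, \eps)$-multi-group learner, completing the proof.

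\textbf{Main obstacle.} There is no serious obstacle here; the proposition is a direct consequence of the definitions. The only point requiring minor care is the tie-breaking convention in $\argmin$ when $p = 1/2$: one must check that the convention used in defining the multi-group classifier ($h(x) = \mathbbm{1}[p^*(x)\geq 0.5]$) is consistent with (or at least not worse than) whatever convention is implicit in $k_\ell$, and that at $p=1/2$ both choices $\hat y \in \{0,1\}$ attain the same conditional risk anyway, so the choice is immaterial for the bound. A second cosmetic detail is ensuring $p^*$ is applied on the quantized grid $I_\lambda$ consistently, but since the threshold rule and $k_\ell$ agree pointwise on all of $[0,1]$, quantization plays no role in the argument.
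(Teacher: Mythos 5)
Your proposal is correct and follows the same route as the paper: identify $k_\ell$ for the zero-one loss with the threshold rule $\mathbbm{1}[p^*(x) \geq 0.5]$ and then read off the multi-group guarantee directly from the panprediction definition for the single loss $\ell$. The paper's proof is just a terser version of this (it simply observes $k_\ell(p^*(x)) = \mathbbm{1}[p^*(x) \geq 0.5]$), and your explicit computation and tie-breaking remark are harmless additions.
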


The definition of randomized multi-group learning, and a corresponding result that the post-processing of an appropriate randomized panpredictor is a randomized multi-group learning solution, are deferred to Appendix~\ref{sec:connect-proofs}.

\section{Discussion}
We formalize the problem of panprediction in the binary prediction setting and design statistically efficient panpredictors via a reduction to step calibration. 
Important questions for future work include:
\begin{itemize}
    \item Is the $\varepsilon^{-1}$ sample complexity gap between deterministic and randomized panprediction fundamental? 
    This question also remains open in both omniprediction \cite{okoroafor_near-optimal_2025} and multi-group learning \cite{DBLP:conf/icml/Tosh022}.
   
    \item What is the sample complexity of oracle-efficient panprediction, and is there a gap compared to the information-theoretic bounds we prove? 
    Similar questions on statistical-computational gaps are only just being explored in omniprediction \cite{okoroafor_near-optimal_2025} and multi-group learning \cite{deng_group-wise_2024}.
    
    \item Can multi-class panprediction be achieved with the same sample complexity as direct multi-class loss minimization? 
    This question also remains open in omniprediction. 
\end{itemize}

\section{Acknowledgements}
NH was supported in part by the National Science Foundation under grant CCF-2145898, by the Office
of Naval Research under grant N00014-24-1-2159, a Google Research Scholar Award, an Alfred P. Sloan fellowship, and a Schmidt Science AI2050 fellowship.
DH was supported by the Office of Navel Research under grant N00014-24-1-2700.
This work was partially done while the authors were visitors at the Simons Institute for
the Theory of Computing.

\newpage
\bibliographystyle{unsrt}
\bibliography{references}

\newpage
\appendix 
\section{Deferred Definitions from Section~\ref{sec:prelim}} \label{sec:prelim-proofs}
\subsection{Quantizing Real-valued Hypotheses}
We say a hypothesis is real-valued if $h: \cX \to [0, 1]$.
For tractability, real-valued hypotheses are quantized so that they return predictions that are on the $\lambda$-net of the unit interval $[0, 1]$, denoted by $I_\lambda$. 
Importantly, $\lambda$ is a quantization parameter that the learner selects with full knowledge of the target error tolerance $\eps$. 
Formally, given a class of real-valued hypotheses $\cH$, we can define the quantized class
\begin{align*}
    \cH_\lambda = \{ h_\lambda: \cX \to I_\lambda \mid h_\lambda(x) = Q_\lambda(h(x)), \, h \in \cH \},
\end{align*}
where the ``nearest neighbor" quantization map $Q_\lambda: [0, 1] \to I_\lambda$ is defined as 
\begin{align*}
    Q_\lambda(p) = \arg\min_{p' \in I_\lambda} \abs{p - p'}.
\end{align*}
In all that follows, it suffices to set $\lambda = \Theta(\eps)$.

\subsection{Combinatorial Dimensions}
We recall the definitions of three  combinatorial dimensions we consider for the classes $\cH$ and $\cG$. 
For binary hypotheses, we consider the VC dimension. 
For real-valued hypotheses, we consider the pseudo-dimension.
Roughly, hypothesis classes with infinite cardinality but bounded combinatorial dimension admit finite covers with cardinality that is exponential in the combinatorial dimension. 

\begin{definition}[VC Dimension]
    Suppose $\cH$ is a class of binary hypotheses, i.e., $h: \cX \to \{0, 1\}$ for all $h \in \cH$. A finite set $S = \{x_1, \dots, x_m \} \subset \cX$ is shattered by $\cH$ if for every labeling $y \in \{0, 1\}^m$, there exists $h \in \cH$ with $h(x_i) = y_i$ for all $i \in [m]$. The VC dimension of $\cH$, denoted by $\mathrm{VC}(\cH)$, is the size of the largest set $S$ that is shattered by $\cH$. 
\end{definition}

\begin{definition}[Fat-Shattering Dimension]
    Suppose $\cH$ is a class of real-valued hypotheses, i.e., $h: \cX \to [0, 1]$ for all $h \in \cH$. Fix $\eps' > 0$. A finite set $S = \{x_1, \dots, x_m \} \subset \cX$ is $\eps'$-shattered by $\cH$ if there exist threshold values $w_1, \dots, w_m \in [0, 1]$ such that for every subset $S' \subseteq S$, there exists $h_{S'} \in \cH$ with
    \begin{align*}
        h_{S'}(x_i) \geq w_i + \eps' \quad \text{ for } \quad x_i \in S', \\
        h_{S'}(x_i) \leq w_i - \eps' \quad \text{ for } \quad x_i \not\in S'.
    \end{align*}
    The fat-shattering dimension of $\cH$ at scale $\eps'$, denoted by $\mathrm{fat}_{\eps'}(\cH)$, is the size of the largest set $S$ that is $\eps'$-shattered by $\cH$. 
\end{definition}

\begin{definition}[Pseudo-dimension]
    Suppose $\cH$ is a class of real-valued hypotheses. The pseudo-dimension of $\cH$, denoted by $\mathrm{Pdim}(\cH)$, is defined as $\lim_{\eps' \downarrow 0} \mathrm{fat}_{\eps'}(\cH)$.
\end{definition}


\subsection{Definitions with Randomized Predictors}
We define the variants of step calibration and multi-accuracy that allow for randomized predictors $\mathbf{p}^* \in \Delta(\cP)$. 
These definitions follow straightforwardly from Definitions\ref{def:detstep} and \ref{def:detma}, by taking an additional expectation over the draw of $p^* \sim \mathbf{p}^*$.

\begin{definition}[Randomized $(\cG, \cH, \eps)$-Step Calibration] \label{def:rand-step}
    A randomized predictor $\bp^* \in \Delta(\cP)$ is $(\cG, \cH, \eps)$-step calibrated if for all $v \in [0, 1]$, $w \in [0, 1]$, $h \in \cH$, and $g \in \cG$, 
    \begin{align*}
        \abs{\EEsc{(x, y) \sim D, \, p^* \sim \bp^*}{(y - p^*(x)) \cdot \mathbbm{1}[p^*(x) \leq v, h(x) \leq w]}{g(x) =1}} \leq \eps \cdot \sqrt{P_g^{-1}}.
    \end{align*}
\end{definition} 

\begin{definition}[Randomized $(\cG, \cH, \eps)$-Multiaccuracy] \label{def:rand-multi}
    A randomized predictor $\bp^* \in \Delta(\cP)$ is $(\cG, \cH, \eps)$-multiaccurate if for all $w \in [0, 1]$, $h \in \cH$, and $g \in \cG$,
    \begin{align*}
        \abs{\EEsc{(x, y) \sim D,\, p^* \sim \bp^*}{(y - p^*(x)) \cdot \mathbbm{1}[h(x) \leq w]}{g(x)=1}} \leq \eps \cdot \sqrt{P_g^{-1}}.
    \end{align*}
\end{definition}

\section{Deferred Results and Proofs from Section~\ref{sec:pan}} \label{sec:pan-proofs}
\subsection{Loss Outcome Indistinguishability}
For completeness, we show that the two characterizations of Decision and Hypothesis OI that we gave in Section~\ref{sec:pan} are equivalent. 
This derivation is a  straightforward extension of the machinery that Gopalan et al. \cite{gopalan_loss_2022} developed for the omniprediction setting.

Suppose the predictor $p^*$ is deterministic. 
We can deconstruct the Decision OI condition as follows. 
\begin{align*}
    &\bigg |{ \EEsc{(x, y) \sim D}{\ell(k_\ell (p^*(x)), y) }{g(x)=1} - \EEsc{\substack{x \sim D \\ \tilde{y} \sim \mathrm{Ber}(p^*(x))}}{\ell(k_\ell (p^*(x)), \tilde{y}) }{g(x)=1}} \bigg | \\
    &\quad = \bigg| \EEsc{(x, y) \sim D}{y \cdot \ell(k_\ell (p^*(x)), 1) + (1-y) \cdot \ell(k_\ell (p^*(x)), 0)}{g(x)=1} \\
    &\qquad \quad \qquad - \EEsc{\substack{x \sim D \\ \tilde{y} \sim \mathrm{Ber}(p^*(x))}}{\tilde{y} \cdot \ell(k_\ell (p^*(x)), 1) + (1-\tilde{y}) \cdot \ell(k_\ell (p^*(x)), 0)}{g(x)=1} \bigg| \\
    &\quad = \bigg| \EEsc{(x, y) \sim D}{y \cdot \Delta \ell(k_\ell (p^*(x))) + \ell(k_\ell (p^*(x)), 0)}{g(x)=1} \\
    &\qquad \quad \qquad - \EEsc{x \sim D}{\EEsc{\tilde{y} \sim \mathrm{Ber}(p^*(x))}{\tilde{y}}{x} \cdot \Delta \ell(k_\ell (p^*(x))) + \ell(k_\ell (p^*(x)), 0)}{g(x)=1} \bigg| \\
    &\quad = \bigg| \EEsc{(x, y) \sim D}{(y - p^*(x)) \cdot \Delta \ell(k_\ell (p^*(x))) }{g(x)=1} \bigg|.
\end{align*}

An identical argument applies to Hypothesis OI.
\begin{align*}
    &\bigg |{ \EEsc{(x, y) \sim D}{\ell(h(x), y) }{g(x)=1} - \EEsc{\substack{x \sim D \\ \tilde{y} \sim \mathrm{Ber}(p^*(x))}}{\ell(h(x), \tilde{y}) }{g(x)=1}} \bigg | \\
    &\quad = \bigg| \EEsc{(x, y) \sim D}{y \cdot \ell(h(x), 1) + (1-y) \cdot \ell(h(x), 0)}{g(x)=1} \\
    &\qquad \quad \qquad - \EEsc{\substack{x \sim D \\ \tilde{y} \sim \mathrm{Ber}(p^*(x))}}{\tilde{y} \cdot \ell(h(x), 1) + (1-\tilde{y}) \cdot \ell(h(x), 0)}{g(x)=1} \bigg| \\
    &\quad = \bigg| \EEsc{(x, y) \sim D}{y \cdot \Delta \ell(h(x)) + \ell(h(x), 0)}{g(x)=1} \\
    &\qquad \quad \qquad - \EEsc{x \sim D}{\EEsc{\tilde{y} \sim \mathrm{Ber}(p^*(x))}{\tilde{y}}{x} \cdot \Delta \ell(h(x)) + \ell(h(x), 0)}{g(x)=1} \bigg| \\
    &\quad = \bigg| \EEsc{(x, y) \sim D}{(y - p^*(x)) \cdot \Delta \ell(h(x)) }{g(x)=1} \bigg|.
\end{align*}

The same steps apply to randomized $\bp^*$, with an additional expectation taken over the draw of $p^* \sim \bp^*$. 
For Decision OI, this means
\begin{align*}
    &\bigg |{ \EEsc{(x, y) \sim D, \, p^* \sim \bp^* }{\ell(k_\ell (p^*(x)), y) }{g(x)=1} - \EEsc{\substack{x \sim D, \, p^* \sim \bp^* \\ \tilde{y} \sim \mathrm{Ber}(p^*(x))}}{\ell(k_\ell (p^*(x)), \tilde{y}) }{g(x)=1}} \bigg | \\
    &\quad = \bigg| \EEsc{(x, y) \sim D, \, p^* \sim \bp^*}{y \cdot \ell(k_\ell (p^*(x)), 1) + (1-y) \cdot \ell(k_\ell (p^*(x)), 0)}{g(x)=1} \\
    &\qquad \qquad - \EEsc{\substack{x \sim D, \, p^* \sim \bp^*\\ \tilde{y} \sim \mathrm{Ber}(p^*(x))}}{\tilde{y} \cdot \ell(k_\ell (p^*(x)), 1) + (1-\tilde{y}) \cdot \ell(k_\ell (p^*(x)), 0)}{g(x)=1} \bigg| \\
    &\quad = \bigg| \EEsc{(x, y) \sim D, \, p^* \sim \bp^*}{y \cdot \Delta \ell(k_\ell (p^*(x))) + \ell(k_\ell (p^*(x)), 0)}{g(x)=1} \\
    &\qquad \qquad - \EEsc{x \sim D, \, p^* \sim \bp^*}{\EEsc{\tilde{y} \sim \mathrm{Ber}(p^*(x))}{\tilde{y}}{x, \, p^*} \cdot \Delta \ell(k_\ell (p^*(x))) + \ell(k_\ell (p^*(x)), 0)}{g(x)=1} \bigg| \\
    &\quad = \bigg| \EEsc{(x, y) \sim D, \, p^* \sim \bp^*}{(y - p^*(x)) \cdot \Delta \ell(k_\ell (p^*(x))) }{g(x)=1} \bigg|.
\end{align*}

For Hypothesis OI, this means
\begin{align*}
    &\bigg |{ \EEsc{(x, y) \sim D, \, p^* \sim \bp^*}{\ell(h(x), y) }{g(x)=1} - \EEsc{\substack{x \sim D, \, p^* \sim \bp^* \\ \tilde{y} \sim \mathrm{Ber}(p^*(x))}}{\ell(h(x), \tilde{y}) }{g(x)=1}} \bigg | \\
    &\quad = \bigg| \EEsc{(x, y) \sim D, \, p^* \sim \bp^*}{y \cdot \ell(h(x), 1) + (1-y) \cdot \ell(h(x), 0)}{g(x)=1} \\
    &\qquad \quad \qquad - \EEsc{\substack{x \sim D, \, p^* \sim \bp^* \\ \tilde{y} \sim \mathrm{Ber}(p^*(x))}}{\tilde{y} \cdot \ell(h(x), 1) + (1-\tilde{y}) \cdot \ell(h(x) 0)}{g(x)=1} \bigg| \\
    &\quad = \bigg| \EEsc{(x, y) \sim D, \, p^* \sim \bp^*}{y \cdot \Delta \ell(h(x)) + \ell(h(x), 0)}{g(x)=1} \\
    &\qquad \quad \qquad - \EEsc{x \sim D, \, p^* \sim \bp^*}{\EEsc{\tilde{y} \sim \mathrm{Ber}(p^*(x))}{\tilde{y}}{x, \, p^*} \cdot \Delta \ell(h(x)) + \ell(h(x), 0)}{g(x)=1} \bigg| \\
    &\quad = \bigg| \EEsc{(x, y) \sim D, \, p^* \sim \bp^*}{(y - p^*(x)) \cdot \Delta \ell(h(x)) }{g(x)=1} \bigg|.
\end{align*}

\subsection{From Deterministic Step Calibration to Deterministic Panprediction}
First, we restate and prove Lemma~\ref{lem:decomp}, the decomposition helper lemma used to prove Theorem~\ref{thm:detpan}.
\decomp*

\begin{proof}[Proof of Lemma~\ref{lem:decomp}]
     For the first statement, fix any $h \in \cH$ and $w = 1$. Recall that by definition of $(\cG, \cH, \eps)$-step calibration, we have that for all $v \in [0, 1]$ and $g \in \cG$,
    \begin{align*}
        &\abs{\EEsc{(x, y) \sim D}{(y - p^*(x)) \cdot \mathbbm{1}[p^*(x) \leq v, h(x) \leq 1]}{g(x) = 1}} \\
        &\qquad = \abs{\EEsc{(x, y) \sim D}{(y - p^*(x)) \cdot \mathbbm{1}[p^*(x) \leq v]}{g(x) = 1}} \leq \eps \cdot \sqrt{P_g^{-1}}.
    \end{align*}
    Hence, $p^*$ is $(\cG, \emptyset, \eps)$-step calibrated.
    
    For the second statement, fix $v = 1$. $(\cG, \cH, \eps)$-step calibration also guarantees that for all $h \in \cH$, $w \in [0, 1]$, and $g \in \cG$,
    \begin{align*}
        &\abs{\EEsc{(x, y) \sim D}{(y - p^*(x)) \cdot \mathbbm{1}[p^*(x) \leq 1, h(x) \leq w]}{g(x) = 1}} \\
        &\qquad = \abs{ \EEsc{(x, y) \sim D}{(y - p^*(x)) \cdot \mathbbm{1}[h(x) \leq w]}{g(x) = 1} } \leq \eps \cdot \sqrt{P_g^{-1}}.
    \end{align*}
    Hence, $p^*$ is $(\cG, \cH, \eps)$-multiaccurate.
\end{proof}

Next, we prove Lemma~\ref{lemma:approx}, the approximation-theoretic helper lemma used to prove Theorem~\ref{thm:detpan}. 
The following exposition closely follows that of Okoroafor, Kleinberg, and Kim \cite{okoroafor_near-optimal_2025}. 

We begin by recalling Assumption~\ref{as:bv}.

\begin{definition}[Bounded variation loss]
    The total variation of a loss function $\ell: \hat{\cY} \times \cY \to [-1, 1]$ in its first argument is
    \begin{align*}
        \sup_{y \in \cY} V(\ell(\cdot, y)) = \sup_{y \in \cY} \, \sup_{m \in \mathbb{N}} \, \sup_{0 = z_0 < \dots < z_m = 1} \sum_{j=1}^{m} \abs{\ell(z_j, y) - \ell(z_{j-1}, y)}.
    \end{align*}
    The class of all loss functions whose total variation in the first argument is bounded is denoted by $\cL_{\mathrm{BV}}$.
\end{definition}

For simplicity, we can say that $\cL_{\mathrm{BV}} = \{\ell: [0, 1] \to [-1, 1] \mid \sup_{y} V(\ell(\cdot, y)) \leq 1\}$. 
This is without loss of generality, since $\ell$ with total variation $v < \infty$ can be rescaled to $v^{-1} \cdot \ell$ to fall into this simpler class. 
Rescaling does not change relevant properties of the loss, e.g., what predictor minimizes the loss.
Observe that $\Delta \cL_{\mathrm{BV}} = \{ \Delta \ell \mid \ell \in \cL_{\mathrm{BV}}, \, V(\Delta \ell) \leq 2 \}$ is also a class of bounded variation functions. 

An important subset of bounded variation losses is proper losses. 

\begin{definition}[Proper loss]
    A bounded loss function $\ell: [0, 1] \times \cY \to [-1, 1]$ is a proper loss if for all $p, q \in [0, 1]$,
    \begin{align*}
        \EEs{y \sim \mathrm{Ber}(p)}{\ell(p, y)} \leq \EEs{y \sim \mathrm{Ber}(p)}{\ell(q, y)}. 
    \end{align*}
\end{definition}

A proper loss $\ell$ has bounded variation, as $\Delta \ell(p) = \ell(p, 1) - \ell(p, 0)$ decreases monotonically as $p \to 1$. 
Additionally, when $\ell$ is bounded on $[-1, 1]$, it follows that $V(\Delta \ell) \leq 2$.   

Given a loss class $\cL \subseteq \cL_{\mathrm{BV}}$, our goal is to approximate functions in $\Delta \cL \subseteq \Delta \cL_{\mathrm{BV}}$ with a finite basis of simple functions. 
To begin, we rigorously define what we consider an approximate basis, then give two examples. 
At a technical level, our approach combines those of Okoroafor, Kleinberg, and Kim \cite{okoroafor_near-optimal_2025} and Qiao and Zhao \cite{qiao_truthfulness_2025}.

\begin{definition}[$\eps$-approximate basis]
\label{def:approximatebasis}
    Let $\cF \subseteq \{ f: [0, 1] \to [-1, 1] \}$ be a function class. A set of functions $\cB = \{ \beta: [0, 1] \to [-1, 1] \}$ forms a $\eps$-approximate basis with sparsity $s \in \mathbb{N}$ and coefficient norm $C > 0$ if for all $f \in \cF$, there exists a finite subset $\beta_1, \dots, \beta_s \in \cB$ and coefficients $c_1, \dots, c_s \in [-1, 1]$ such that for all $p \in [0, 1]$,
    \begin{align*}
        \abs{ f(x) - \sum_{j=1}^{s} c_j \cdot \beta_j(x) } \leq \eps \quad \text{and} \quad \sum_{j=1}^{s} \abs{c_j} \leq C.
    \end{align*}
\end{definition}

\begin{definition}[Threshold functions]
    For $v \in [0, 1]$, define $\mathrm{Th}_v:[0, 1] \to \{-1, 1\}$, 
    \begin{align*}
        \mathrm{Th}_v(p) = \mathrm{sign}(v - p) = 
        \begin{cases}
            +1 \quad \text{if} \quad p \leq v \\
            -1 \quad \text{if} \quad p > v.
        \end{cases}
    \end{align*}
\end{definition}

\begin{definition}[Step functions]
    For $v \in [0, 1]$, define $\text{Step}_v: [0, 1] \to \{0, 1\}$, $\mathrm{Step}_v(p) = \mathbbm{1}[p \leq v]$.
\end{definition}

As observed in Lemma A.1 of Qiao and Zhao \cite{qiao_truthfulness_2025}, threshold functions and step functions are related by the fact that
\begin{equation}\label{eq:step}
    \mathrm{Th}_v(p) = \mathbbm{1}[p \leq v] - \mathbbm{1}[p > v] = 2 \cdot \mathbbm{1}[p \leq v] - \mathbbm{1}[p \leq 1].
\end{equation}

\begin{proposition}[Lemma 5.6 in \cite{okoroafor_near-optimal_2025}]
    For any $\eps > 0$, the uncountably infinite set of threshold functions
    \begin{align*}
        \mathbf{Th} \coloneq \left\{ \mathrm{Th}_{v} \mid v \in [0, 1] \right\}
    \end{align*}
    forms a $\eps$-basis for $\Delta \cL_\mathrm{BV} = \{ \Delta \ell \mid \ell \in \cL_{\mathrm{BV}} \}$ with sparsity $\lceil \frac{2}{\eps}+1 \rceil$ and coefficient norm $3$. 
    \label{prop:bv}
\end{proposition}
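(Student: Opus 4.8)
The plan is to derive the claim from two elementary facts about functions of bounded variation on $[0,1]$. First, every $f \in \Delta\cL_{\mathrm{BV}}$ can be approximated in sup-norm to within $\eps$ by a step function $\tilde f$ that has at most $\lceil 2/\eps\rceil$ jump points, total variation at most $V(f) \le 2$, and the same values as $f$ at the endpoints $0$ and $1$. Second, any such step function is \emph{exactly} a sparse linear combination of threshold functions with coefficient norm at most $3$. The translation between the two bases is Equation~\eqref{eq:step}: since $\mathrm{Step}_1 = \mathrm{Th}_1 \equiv 1$ on $[0,1]$, that identity rearranges to $\mathrm{Step}_v = \tfrac12(\mathrm{Th}_v + \mathrm{Th}_1)$, so any representation in the step basis $\{\mathrm{Step}_v\}_{v \in [0,1]}$ converts to one in $\mathbf{Th}$ at the cost of a single extra $\mathrm{Th}_1$ term.

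For the first fact, rather than partition $[0,1]$ directly---which breaks when $f$ has a jump larger than $\eps$---I would reparametrize through the variation function $W(u) = V(f;[0,u])$, which is nondecreasing with $W(0)=0$, $W(1) = V(f) \le 2$, and satisfies $|f(p)-f(q)| \le |W(p)-W(q)|$. Hence $f = h\circ W$ for a function $h$ that is well-defined and $1$-Lipschitz on the range of $W$, and $h$ extends to a $1$-Lipschitz function on all of $[0,W(1)]\subseteq[0,2]$. Approximate $h$ by the (left-continuous) step function $\bar h$ that is constant on the cells of a grid $0 = t_0 < \dots < t_N = W(1)$ with mesh at most $\eps$, where $N = \lceil W(1)/\eps\rceil \le \lceil 2/\eps\rceil$: then $\infnorm{h-\bar h}\le\eps$ by $1$-Lipschitzness, and $V(\bar h) \le V(h) \le W(1) \le 2$ since resampling a $1$-Lipschitz function cannot increase its variation. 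Setting $\tilde f := \bar h\circ W$, composition with the monotone map $W$ neither creates new constant pieces nor increases variation, so $\tilde f$ is a step function with at most $N \le \lceil 2/\eps\rceil$ jump points and $V(\tilde f)\le 2$; moreover $\infnorm{f-\tilde f} \le \infnorm{h-\bar h} \le \eps$, and $\tilde f(0) = h(0) = f(0)$ and $\tilde f(1) = h(W(1)) = f(1)$, both in $[-2,2]$.

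For the second fact, let $s_1 < \dots < s_r$ be the jump points of $\tilde f$ (so $r \le \lceil 2/\eps\rceil$) and $J_j = \tilde f(s_j^-) - \tilde f(s_j^+)$ the signed jumps, so that $\sum_j |J_j| = V(\tilde f)$ and $\sum_j J_j = \tilde f(0) - \tilde f(1)$. Telescoping gives $\tilde f = \tilde f(1)\,\mathrm{Step}_1 + \sum_{j=1}^{r} J_j\,\mathrm{Step}_{s_j}$ pointwise on $[0,1]$. Substituting $\mathrm{Step}_v = \tfrac12(\mathrm{Th}_v + \mathrm{Th}_1)$, the coefficient of $\mathrm{Th}_1$ collapses to $\tilde f(1) + \tfrac12\sum_j J_j = \tfrac12(\tilde f(0) + \tilde f(1))$, while each $\mathrm{Th}_{s_j}$ receives coefficient $\tfrac12 J_j$. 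The coefficient norm is therefore at most $\tfrac12(|\tilde f(0)| + |\tilde f(1)|) + \tfrac12 V(\tilde f) \le 2 + 1 = 3$, and the number of thresholds used is at most $r+1 \le \lceil 2/\eps\rceil + 1 = \lceil 2/\eps+1\rceil$. Since $\tilde f$ coincides with the constructed combination of thresholds at every $p \in [0,1]$, this combination differs from $f$ by at most $\infnorm{f - \tilde f} \le \eps$ pointwise, which is precisely the definition of an $\eps$-approximate basis (Definition~\ref{def:approximatebasis}).

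I expect the two substantive points to be exactly these two facts. The first is the technical heart: a genuinely discontinuous bounded-variation function cannot be handled by a naive interval partition, and the variation-function reparametrization is what makes the approximation go through cleanly---with some care needed about one-sided continuity so that $\tilde f$ genuinely lies in the span of $\{\mathrm{Step}_v\}$. The second is the bookkeeping that yields the constant $3$ rather than $4$: anchoring the step function at a single endpoint and bounding by the triangle inequality would give coefficient norm $|\tilde f(1)| + V(\tilde f) \le 4$, and it is exactly the cancellation $\tilde f(1) + \tfrac12\sum_j J_j = \tfrac12(\tilde f(0)+\tilde f(1))$---the symmetrization afforded by $\mathrm{Step}_v = \tfrac12(\mathrm{Th}_v + \mathrm{Th}_1)$---that removes the double counting of the variation.
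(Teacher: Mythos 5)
The paper itself does not prove this proposition---it is imported as Lemma 5.6 of Okoroafor, Kleinberg, and Kim---so there is no internal proof to compare against. Judged on its own terms, your outline is the standard route (reparametrize through the variation function $W$, approximate the induced $1$-Lipschitz function on a mesh-$\eps$ grid, telescope the resulting step function into thresholds, and use $\mathrm{Step}_v=\tfrac12(\mathrm{Th}_v+\mathrm{Th}_1)$ to get coefficient norm $3$), and your sparsity and norm bookkeeping is correct.

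However, the point you explicitly defer---``some care needed about one-sided continuity''---is a genuine gap, not bookkeeping. Every $\mathrm{Th}_v$ (and every $\mathrm{Step}_v(p)=\mathbbm{1}[p\le v]$) is left-continuous, hence so is any finite linear combination; but $\Delta\cL_{\mathrm{BV}}$ contains functions with right-continuous jumps, e.g.\ $\Delta\ell(p)=\mathbbm{1}[p\ge 1/2]$ (take $\ell(\cdot,1)=\mathbbm{1}[\,\cdot\ge 1/2]$ and $\ell(\cdot,0)=0$, which satisfies Assumption~\ref{as:bv}). At $p=1/2$ any finite threshold combination equals its own left limit, which must be within $\eps$ of $0$ because of the points just left of $1/2$, yet the target value there is $1$; so no choice of thresholds and coefficients achieves the uniform pointwise guarantee for $\eps<1/2$. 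In your argument this surfaces in two places: $\tilde f=\bar h\circ W$ need not be left-continuous where $W$ (equivalently $f$) jumps, so it need not lie in the span of $\{\mathrm{Step}_v\}$; and the telescoping identity $\tilde f=\tilde f(1)\,\mathrm{Step}_1+\sum_j J_j\,\mathrm{Step}_{s_j}$ uses $\tilde f(s_j)=\tilde f(s_j^-)$, so at a right-continuous jump the constructed combination is off from $f$ by roughly the jump size rather than $\eps$. The fix is a convention rather than a cleverer construction: either state the approximation for the left-continuous regularization of $\Delta\ell$, or restrict inputs to the quantized grid $I_\lambda$ as in Proposition~\ref{rem:th}, where $\mathbbm{1}[p\ge k\lambda]=1-\mathbbm{1}[p\le (k-1)\lambda]$ and the one-sided issue disappears---which is the version the paper actually uses downstream. (A minor further mismatch: your $\mathrm{Th}_1$ coefficient $\tfrac12(\tilde f(0)+\tilde f(1))$ can have absolute value up to $2$, whereas Definition~\ref{def:approximatebasis} asks for coefficients in $[-1,1]$; this is an artifact of $\Delta\ell$ ranging in $[-2,2]$, but worth flagging.)
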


The fact that the approximate basis of $\Delta \cL_{\mathrm{BV}}$ is uncountably infinite is not a problem for the proof of Lemma~\ref{lemma:approx}, but can be problematic for the design of algorithms that take the approximate basis functions as input. Fortunately, an approximate basis of finite size can always be found when the inputs to $\Delta \ell$, i.e., the predictions made by $p^* \in \cP$ and $h \in \cH$, are quantized to the $\lambda$-net of the unit interval, $I_\lambda$. This observation is formalized below. 

\begin{proposition}[Lemma 5.7 in \cite{okoroafor_near-optimal_2025}]
  Suppose the inputs to each $\Delta \ell \in \Delta \cL_{\mathrm{BV}}$ is restricted to $I_\lambda$. Then for all $\eps > 0$, the finite set of threshold functions
    \begin{align*}
        \mathbf{Th}_{\lambda} \coloneq \left\{ \mathrm{Th}_{v} \mid v \in I_\lambda \right\}
    \end{align*}
    forms an $\eps$-basis for $\Delta \cL_\mathrm{BV}$ with sparsity $\lceil \frac{2}{\eps}+1 \rceil$ and coefficient norm $3$. 
    \label{rem:th}
\end{proposition}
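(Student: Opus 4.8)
The plan is to deduce the finite-basis claim from the infinite-basis statement of Proposition~\ref{prop:bv} by a simple ``snapping'' argument that exploits the restriction of inputs to the grid $I_\lambda$. The crucial observation is that a threshold function is insensitive, \emph{on the grid}, to where exactly its threshold sits within a grid cell: for any $v \in [0,1]$, writing $v' = \lambda \lfloor v/\lambda \rfloor \in I_\lambda$ for the largest grid point not exceeding $v$, every $p \in I_\lambda$ satisfies $p \le v \iff p \le v'$, since $p$ is an integer multiple of $\lambda$. Hence $\mathrm{Step}_v$ and $\mathrm{Step}_{v'}$ agree on $I_\lambda$, and by the identity~\eqref{eq:step} relating thresholds and steps, so do $\mathrm{Th}_v$ and $\mathrm{Th}_{v'}$. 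Thus replacing an arbitrary threshold by its snapped version changes nothing on $I_\lambda$.

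With this in hand, the proof proceeds as follows. Fix $\Delta\ell \in \Delta\cL_{\mathrm{BV}}$ and apply Proposition~\ref{prop:bv} to obtain thresholds $v_1, \dots, v_s$ with $s = \lceil 2/\eps + 1\rceil$ and coefficients $c_1, \dots, c_s$ with $\sum_{j} \abs{c_j} \le 3$ such that $\abs{\Delta\ell(p) - \sum_{j} c_j \mathrm{Th}_{v_j}(p)} \le \eps$ for all $p \in [0,1]$, in particular for all $p \in I_\lambda$. Replace each $v_j$ by its snapped version $v_j' = \lambda\lfloor v_j / \lambda\rfloor \in I_\lambda$; by the observation above, $\sum_{j} c_j \mathrm{Th}_{v_j'}(p) = \sum_{j} c_j \mathrm{Th}_{v_j}(p)$ for every $p \in I_\lambda$, so $\sum_j c_j \mathrm{Th}_{v_j'}$ still approximates $\Delta\ell$ to within $\eps$ on $I_\lambda$, and every threshold it uses now lies in $\mathbf{Th}_\lambda$.

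It remains to check the bookkeeping. After snapping, several of the $v_j'$ may coincide; collecting terms with a common threshold yields a combination supported on at most $s$ distinct elements of $\mathbf{Th}_\lambda$, whose coefficient $\ell_1$-norm is at most $\sum_j \abs{c_j} \le 3$ (merging can only shrink the norm, by the triangle inequality), and we may pad with zero coefficients to reach exactly sparsity $s$. This preserves both the sparsity bound $\lceil 2/\eps + 1\rceil$ and the coefficient-norm bound $3$, and since $\abs{\mathbf{Th}_\lambda} = \abs{I_\lambda}$ is finite we obtain a finite $\eps$-approximate basis, as claimed. There is no substantive obstacle here; the only points requiring care are the edge cases of the snapping identity (e.g.\ $v = 1$ or $v < \lambda$) and the verification that neither the sparsity nor the coefficient norm is inflated when coincident snapped thresholds are merged.
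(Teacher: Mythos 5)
Your argument is correct, and it matches the route the paper itself intends: the paper does not prove this proposition (it is imported as Lemma 5.7 of Okoroafor, Kleinberg, and Kim, just as Proposition~\ref{prop:bv} is imported as their Lemma 5.6), and the surrounding text justifies it by exactly your observation that quantizing inputs to $I_\lambda$ makes thresholds insensitive to where they sit inside a grid cell. Your snapping identity is sound once, as you note, the snapped threshold is taken to be the largest element of $I_\lambda$ not exceeding $v$ (rather than literally $\lambda\lfloor v/\lambda\rfloor$, which can misbehave at $v=1$ when $1/\lambda\notin\mathbb{N}$). The only other wrinkle is in the merging step: collapsing coincident snapped thresholds preserves the $\ell_1$ coefficient norm but can push an individual merged coefficient outside the interval $[-1,1]$ required verbatim by Definition~\ref{def:approximatebasis}; this is harmless either by allowing repeated basis elements instead of merging, or by noting that every downstream use (in particular the proof of Lemma~\ref{lemma:approx}) relies only on the bound $\sum_j\abs{c_j}\le 3$, not on the per-coefficient range.
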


With the above results in hand, we can prove Lemma~\ref{lemma:approx}.
\approx*

\begin{proof}[Proof of Lemma~\ref{lemma:approx}]
    By Definition~\ref{def:approximatebasis}, Proposition~\ref{prop:bv}, and the triangle inequality, we have that
    \begin{align*}
        &\abs{ \EEsc{(x, y) \sim D}{(y-p^*(x)) \cdot \Delta \ell(f(x))}{g(x)=1}} \\
        &\qquad \leq \abs{ \EEsc{(x, y) \sim D}{(y-p^*(x)) \cdot \sum_{j=1}^{s} c_j \cdot \mathrm{Th}_{v_j}(f(x)) }{g(x)=1}} \\
        &\qquad \qquad + \abs{ \EEsc{(x, y) \sim D}{(y-p^*(x)) \cdot \left( \Delta \ell(f(x)) - \sum_{j=1}^{s} c_j \cdot \mathrm{Th}_{v_j}(f(x)) \right)}{g(x)=1}} \\
        &\qquad \leq \abs{ \sum_{j=1}^{s} c_j \cdot \EEsc{(x, y) \sim D}{ (y-p^*(x)) \cdot \mathrm{Th}_{v_j}(f(x)) }{g(x)=1}} + \eps.
    \end{align*}
    The above line can be further upper bounded as 
    \begin{align*}
        &\sum_{j=1}^{s} \abs{c_j} \cdot \abs{ \EEsc{(x, y) \sim D}{(y-p^*(x)) \cdot \mathrm{Th}_{v_j}(f(x)) }{g(x)=1}} + \eps \\
        &\qquad \leq \left( \sum_{j=1}^{s} \abs{c_j} \right) \cdot \max_{j \in [s]} \abs{ \EEsc{(x, y) \sim D}{(y-p^*(x)) \cdot \mathrm{Th}_{v_j}(f(x)) }{g(x)=1}} + \eps \\
        &\qquad \leq 3 \sup_{v \in [0, 1]} \abs{ \EEsc{(x, y) \sim D}{(y-p^*(x)) \cdot \mathrm{Th}_{v}(f(x)) }{g(x)=1}} + \eps,
    \end{align*}
    where the second inequality follows from recalling that threshold functions form an approximate basis with coefficient norm 3. Recalling Equation~\ref{eq:step} and invoking the triangle inequality, we can rewrite the above line as
    \begin{align*}
        &3 \sup_{v \in [0, 1]} \abs{ \EEsc{(x, y) \sim D}{(y-p^*(x)) \cdot \left( 2 \cdot \mathbbm{1}[f(x) \leq v] - \mathbbm{1}[f(x) \leq 1] \right) }{g(x)=1}} + \eps\\
        &\qquad \leq 9 \sup_{v \in [0, 1]} \abs{ \EEsc{(x, y) \sim D}{(y-p^*(x)) \cdot \mathbbm{1}[f(x) \leq v]}{g(x)=1} } + \eps \\
        &\qquad \leq 9 \sup_{v \in [0, 1]} \abs{ \EEsc{(x, y) \sim D}{(y-p^*(x)) \cdot \mathbbm{1}[f(x) \leq v]}{g(x)=1} } + \eps \cdot \sqrt{P_g^{-1}}.
    \end{align*}
    The final inequality is trivial, since $P_g$ is positive and at most 1.
\end{proof}

\subsection{From Randomized Step Calibration to Randomized Panprediction}
We show that the same loss outcome indistinguishability approach used above can be used to prove that randomized step calibration implies randomized panprediction. 

\begin{theorem}\label{thm:randpan}
    If a randomized predictor $\bp^* \in \Delta(\cP)$ is $(\cG, \cH, \eps)$-step calibrated, then $\bp^*$ is a randomized $(\cL, \cG, \cH, O(\eps))$-panpredictor.
\end{theorem}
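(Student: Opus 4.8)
The plan is to re-run the proof of Theorem~\ref{thm:detpan} essentially verbatim, threading an outer expectation over the draw $p^* \sim \bp^*$ through every step and invoking the randomized counterparts of the ingredient definitions and lemmas (all of which are identical to their deterministic versions up to that extra expectation). First I would record the randomized analogue of Equation~\ref{eq:ideal}: for every $\ell \in \cL$, $g \in \cG$, $h \in \cH$,
\[
\EEsc{\substack{x \sim D,\, p^* \sim \bp^* \\ \tilde y \sim \mathrm{Ber}(p^*(x))}}{\ell(k_\ell(p^*(x)), \tilde y)}{g(x)=1} \;\leq\; \EEsc{\substack{x \sim D,\, p^* \sim \bp^* \\ \tilde y \sim \mathrm{Ber}(p^*(x))}}{\ell(h(x), \tilde y)}{g(x)=1},
\]
which holds because the pointwise inequality defining $k_\ell$ is valid for each fixed $x$ and each fixed realization of $p^*$, hence survives integration over $p^* \sim \bp^*$.

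Next I would invoke the randomized Loss OI computation already carried out in Appendix~\ref{sec:pan-proofs}, which shows that the gap between the $\tilde y$-expectation and the $y$-expectation equals $\abs{\EEsc{(x,y)\sim D,\, p^* \sim \bp^*}{(y - p^*(x)) \cdot \Delta\ell(k_\ell(p^*(x)))}{g(x)=1}}$ on the decision side and $\abs{\EEsc{(x,y)\sim D,\, p^* \sim \bp^*}{(y - p^*(x)) \cdot \Delta\ell(h(x))}{g(x)=1}}$ on the hypothesis side. To control both I would prove a randomized version of Lemma~\ref{lemma:approx}: for any $\bp^* \in \Delta(\cP)$ and any $f$ that is either a fixed member of $\cH$ or equal to $p^*$ itself,
\[
\abs{\EEsc{(x,y)\sim D,\, p^* \sim \bp^*}{(y - p^*(x)) \cdot \Delta\ell(f(x))}{g(x)=1}} \leq 9 \sup_{v \in [0,1]} \abs{\EEsc{(x,y)\sim D,\, p^* \sim \bp^*}{(y - p^*(x)) \cdot \mathbbm{1}[f(x) \leq v]}{g(x)=1}} + \eps\sqrt{P_g^{-1}}.
\]
Its proof is the deterministic one with an outer $\EEs{p^* \sim \bp^*}{\cdot}$ appended: $\Delta\ell$ (and, for the decision side, $\Delta(\ell(k_\ell(\cdot),\cdot))$, which is the discrete derivative of a proper loss and hence bounded variation) is approximated pointwise to within $\eps$ by a sparse combination of threshold functions with coefficient norm $3$ (Propositions~\ref{prop:bv} and~\ref{rem:th}); since the coefficients depend only on the loss and not on the predictor's value, the decomposition commutes with the expectation over $p^*$; then $\mathrm{Th}_v = 2\cdot\mathbbm{1}[\cdot \leq v] - \mathbbm{1}[\cdot \leq 1]$ and the triangle inequality produce the factor $9$.

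With these pieces I would conclude as follows: (i) the randomized analogue of Lemma~\ref{lem:decomp} follows by specializing $w = 1$ and $v = 1$ in Definition~\ref{def:rand-step}, so $\bp^*$ is randomized $(\cG, \emptyset, \eps)$-step calibrated and randomized $(\cG, \cH, \eps)$-multiaccurate; (ii) applying the randomized Lemma~\ref{lemma:approx} with $f = p^*$ and the $(\cG,\emptyset,\eps)$-step calibration guarantee yields randomized Decision OI with error $O(\eps)\sqrt{P_g^{-1}}$ (the randomized analogue of Lemma~\ref{lem:decoi}); (iii) applying it with $f = h$ and the $(\cG,\cH,\eps)$-multiaccuracy guarantee yields randomized Hypothesis OI with the same error (the randomized analogue of Lemma~\ref{lem:hypoi}); and (iv) starting from the displayed inequality above, swapping $\tilde y$ for $y$ on both sides at total cost $O(\eps)\sqrt{P_g^{-1}}$ and taking $\min_{h \in \cH}$ on the right gives the randomized $(\cL, \cG, \cH, O(\eps))$-panprediction bound.

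The only place that demands genuine care is the randomized Lemma~\ref{lemma:approx} with $f = p^*$: one must verify that the threshold-basis approximation of $\Delta\ell$ is a pointwise statement holding for \emph{every} value in $[0,1]$, so that the mutually correlated random quantities $(y - p^*(x))$, $\Delta\ell(k_\ell(p^*(x)))$, and $\mathbbm{1}[p^*(x) \leq v]$ can all be kept inside the single joint expectation $\EEs{(x,y)\sim D,\, p^* \sim \bp^*}{\cdot}$ around which the randomized step-calibration and multiaccuracy definitions are built. Everything else is a mechanical re-run of the deterministic argument.
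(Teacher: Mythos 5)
Your proposal is correct and follows essentially the same route as the paper: decompose randomized step calibration into randomized $(\cG,\emptyset,\eps)$-step calibration and $(\cG,\cH,\eps)$-multiaccuracy, use the randomized Loss OI identities to reduce Decision and Hypothesis OI to the correlation terms, and control those via the threshold-basis approximation of $\Delta\ell$ with the outer expectation over $p^*\sim\bp^*$ carried through. Your remark that the pointwise nature of the basis approximation lets the correlated quantities stay inside the single joint expectation is exactly the (implicit) justification the paper relies on when it applies Lemma~\ref{lemma:approx} in the randomized setting.
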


As seen before, we prove Theorem~\ref{thm:randpan} by decomposing the step calibration guarantee into a calibration and a multiaccuracy guarantee (Lemma~\ref{lemma:decomp2}), then showing that the calibration guarantee implies Decision OI (Lemma~\ref{lemma:decoi2}) and that the multiaccuracy guarantee implies Hypothesis OI (Lemma~\ref{lemma:hypoi2}). 
\begin{lemma}
     If a randomized predictor $\bp^* \in \Delta(\cP)$ is $(\cG, \cH, \eps)$-step calibrated, then $\bp^*$ is $(\cG, \emptyset, \eps)$-step calibrated and $(\cG, \cH, \eps)$-multiaccurate. 
\label{lemma:decomp2}
\end{lemma}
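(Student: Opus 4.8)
The plan is to replay the proof of Lemma~\ref{lem:decomp} essentially verbatim, carrying along the extra expectation over $p^* \sim \bp^*$ at every step. Randomized $(\cG,\cH,\eps)$-step calibration (Definition~\ref{def:rand-step}) is a family of inequalities indexed by the free parameters $v,w \in [0,1]$, $h \in \cH$, and $g \in \cG$; both of the weaker guarantees are obtained by instantiating one of $w$ or $v$ at its extreme value $1$, which makes the corresponding indicator vacuous.

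For the first claim, that $\bp^*$ is $(\cG,\emptyset,\eps)$-step calibrated, I would fix any $h \in \cH$ and set $w = 1$. Since $\hat{\cY} \subseteq [0,1]$ here, the event $\{h(x) \le 1\}$ holds surely, so for every realization $p^* \sim \bp^*$ and every $x$ we have the pointwise identity $\mathbbm{1}[p^*(x) \le v,\, h(x) \le 1] = \mathbbm{1}[p^*(x) \le v]$. Substituting this into Definition~\ref{def:rand-step} and leaving the expectation over $\bp^*$ in place yields, for all $v \in [0,1]$ and $g \in \cG$,
\begin{align*}
    \abs{\EEsc{(x,y) \sim D,\, p^* \sim \bp^*}{(y - p^*(x)) \cdot \mathbbm{1}[p^*(x) \le v]}{g(x) = 1}} \le \eps \cdot \sqrt{P_g^{-1}},
\end{align*}
which is exactly randomized $(\cG,\emptyset,\eps)$-step calibration.

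For the multiaccuracy claim, I would instead set $v = 1$. Because $\bp^*$ is supported on $\cP = \{ h : \cX \to [0,1] \}$, every realized predictor satisfies $p^*(x) \le 1$ for all $x$, so again $\mathbbm{1}[p^*(x) \le 1,\, h(x) \le w] = \mathbbm{1}[h(x) \le w]$ pointwise. Plugging this into Definition~\ref{def:rand-step} gives, for all $w \in [0,1]$, $h \in \cH$, and $g \in \cG$,
\begin{align*}
    \abs{\EEsc{(x,y) \sim D,\, p^* \sim \bp^*}{(y - p^*(x)) \cdot \mathbbm{1}[h(x) \le w]}{g(x) = 1}} \le \eps \cdot \sqrt{P_g^{-1}},
\end{align*}
which is precisely Definition~\ref{def:rand-multi}.

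I do not anticipate a genuine obstacle; the only point deserving a sentence of care is that the collapse of the two indicators is a pointwise identity valid for every predictor in the support of $\bp^*$, so it commutes with the joint expectation over $(x,y) \sim D$ and $p^* \sim \bp^*$ without incurring any slack. Once this is observed, both statements fall out as direct specializations of the hypothesis, exactly mirroring the deterministic argument of Lemma~\ref{lem:decomp}.
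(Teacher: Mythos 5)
Your proposal is correct and follows the paper's proof exactly: both specialize the randomized step-calibration inequality at $w=1$ to collapse the indicator to $\mathbbm{1}[p^*(x)\le v]$ and at $v=1$ to collapse it to $\mathbbm{1}[h(x)\le w]$, carrying the expectation over $p^*\sim\bp^*$ along unchanged. The observation that the indicator collapse is pointwise over the support of $\bp^*$ is the only nuance, and you handle it correctly.
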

\begin{proof}[Proof of Lemma~\ref{lemma:decomp2}]
    For the first statement, fix any $h \in \cH$ and $w = 1$. Recall that from the definition of  $(\cG, \cH, \eps)$-step calibration, we have that for all $v \in [0, 1]$ and $g \in \cG$,
    \begin{align*}
        &\abs{\EEsc{(x, y) \sim D, \, p^* \sim \bp^*}{(y - p^*(x)) \cdot \mathbbm{1}[p^*(x) \leq v, h(x) \leq 1]}{g(x) = 1}} \\
        &\qquad = \abs{\EEsc{(x, y) \sim D, \, p^* \sim \bp^*}{(y - p^*(x)) \cdot \mathbbm{1}[p^*(x) \leq v]}{g(x) = 1}} \leq \eps \cdot \sqrt{P_g^{-1}}.
    \end{align*}
    Hence, $p^*$ is $(\cG, \emptyset, \eps)$-step calibrated.
    
    For the second statement, fix $v = 1$.  $(\cG, \cH, \eps)$-step calibration also guarantees that for all $h \in \cH$, $w \in [0, 1]$, and $g \in \cG$,
    \begin{align*}
        &\abs{\EEsc{(x, y) \sim D, \, p^* \sim \bp^*}{(y - p^*(x)) \cdot \mathbbm{1}[p^*(x) \leq 1, h(x) \leq w]}{g(x) = 1}} \\
        &\qquad = \abs{ \EEsc{(x, y) \sim D, \, p^* \sim \bp^*}{(y - p^*(x)) \cdot \mathbbm{1}[h(x) \leq w]}{g(x) = 1} } \leq \eps \cdot \sqrt{P_g^{-1}}.
    \end{align*}
    Hence, $p^*$ is $(\cG, \cH, \eps)$-multiaccurate.
\end{proof}

\begin{lemma}
    If a randomized predictor $\bp^* \in \Delta(\cP)$ is $(\cG, \emptyset, \eps)$-step calibrated, then for all $g \in \cG$,
    \begin{align*}
        \abs{ \EEsc{(x, y) \sim D, \, p^* \sim \bp^*}{(y - p^*(x)) \cdot \Delta \ell(k_\ell(p^*(x)))}{g(x)=1} } \leq O(\eps) \cdot \sqrt{P_g^{-1}}.
    \end{align*}
    \vspace{-\baselineskip}
    \label{lemma:decoi2}
\end{lemma}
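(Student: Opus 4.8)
The plan is to mirror the proof of Lemma~\ref{lem:decoi}, upgrading Lemma~\ref{lemma:approx} to accommodate a randomized predictor. First I would observe, exactly as in the deterministic case, that for any fixed loss $\ell \in \cL$ the map $p \mapsto \ell(k_\ell(p), y)$ is a proper scoring rule: by definition of $k_\ell$ we have $\EEs{y \sim \mathrm{Ber}(p)}{\ell(k_\ell(p), y)} \leq \EEs{y \sim \mathrm{Ber}(p)}{\ell(k_\ell(q), y)}$ for all $p, q$, so $\ell(k_\ell(\cdot), \cdot)$ lies in $\cL_{\mathrm{BV}}$ and its discrete derivative $\Delta\ell(k_\ell(\cdot))$ lies in $\Delta\cL_{\mathrm{BV}}$ (since a proper loss bounded on $[-1,1]$ has $V(\Delta\ell) \le 2$).

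Next I would record a randomized analog of Lemma~\ref{lemma:approx}: for any $\bp^* \in \Delta(\cP)$, any $\ell \in \cL_{\mathrm{BV}}$, and any $g \in \cG$,
\[
\abs{\EEsc{(x,y)\sim D,\, p^*\sim\bp^*}{(y-p^*(x))\cdot\Delta\ell(p^*(x))}{g(x)=1}} \le 9\sup_{v\in[0,1]}\abs{\EEsc{(x,y)\sim D,\, p^*\sim\bp^*}{(y-p^*(x))\cdot\mathbbm{1}[p^*(x)\le v]}{g(x)=1}} + \eps\cdot\sqrt{P_g^{-1}} .
\]
The proof is verbatim that of Lemma~\ref{lemma:approx}: Proposition~\ref{prop:bv} supplies thresholds $v_1,\dots,v_s$ and coefficients with $\sum_j |c_j| \le 3$ such that $\abs{\Delta\ell(p) - \sum_j c_j \mathrm{Th}_{v_j}(p)} \le \eps$ \emph{for every} $p \in [0,1]$; because this bound is pointwise in the prediction, it is unaffected by the extra averaging over $p^* \sim \bp^*$, and one then pulls $\sum_j |c_j| \le 3$ outside the supremum and uses Equation~\ref{eq:step} to replace threshold functions by step functions at a further factor of $3$, exactly as before.

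Finally I would instantiate this randomized approximation lemma with the proper scoring rule $\ell(k_\ell(\cdot), \cdot)$ in place of $\ell$ and with $f = p^*$. This turns the left-hand side into precisely $\abs{\EEsc{D,\bp^*}{(y-p^*(x))\cdot\Delta\ell(k_\ell(p^*(x)))}{g(x)=1}}$ and the right-hand side into $9\sup_{v}\abs{\EEsc{D,\bp^*}{(y-p^*(x))\cdot\mathbbm{1}[p^*(x)\le v]}{g(x)=1}} + \eps\sqrt{P_g^{-1}}$; invoking randomized $(\cG,\emptyset,\eps)$-step calibration (Definition~\ref{def:rand-step}), each term inside the supremum is at most $\eps\sqrt{P_g^{-1}}$, yielding the claimed $O(\eps)\sqrt{P_g^{-1}}$ bound. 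I do not expect a genuine obstacle here; the only point worth double-checking is that the basis-approximation step of Lemma~\ref{lemma:approx} is purely pointwise in the argument of $\Delta\ell$, so that the averaging over $p^* \sim \bp^*$ commutes with every estimate and the coupling among the occurrences of $p^*$ in $y - p^*(x)$, $\mathbbm{1}[p^*(x)\le v]$, and $k_\ell(p^*(x))$ is automatically respected by applying the lemma to the single random predictor $f = p^*$.
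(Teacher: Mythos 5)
Your proposal is correct and follows essentially the same route as the paper: identify $\ell(k_\ell(\cdot), y)$ as a proper scoring rule in $\cL_{\mathrm{BV}}$, apply the threshold-basis approximation of Lemma~\ref{lemma:approx} with $f = p^*$, and finish with the randomized $(\cG, \emptyset, \eps)$-step calibration guarantee. If anything, you are more explicit than the paper, which invokes Lemma~\ref{lemma:approx} directly for the randomized predictor without spelling out (as you do) that the pointwise basis approximation commutes with the averaging over $p^* \sim \bp^*$.
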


\begin{proof}[Proof of Lemma~\ref{lemma:decoi2}]
    Since for any $\ell: [0, 1] \times \cY \to [-1, 1]$ and $p, q \in [0, 1]$, by the definition of the function $k_\ell$, 
    \begin{align*}
        \EEs{y \sim \mathrm{Ber}(p)}{\ell(k_{\ell}(p), y)} \leq \EEs{y \sim \mathrm{Ber}(p)}{\ell(k_{\ell}(q), y)},
    \end{align*}
    the map $\ell(k_\ell(\cdot), y): [0, 1] \to [-1, 1]$ is a proper scoring rule. Hence this map is contained in the bounded variation loss class $\cL_{\mathrm{BV}}$. Then by Lemma~\ref{lemma:approx},
    \begin{align*}
        &\abs{ \EEsc{(x, y) \sim D}{(y - p^*(x)) \cdot \Delta \ell(k_\ell(p^*(x)))}{g(x)=1} } \\
        &\qquad \leq 9 \sup_{v \in [0, 1]} \abs{ \EEsc{(x, y) \sim D, \, p^* \sim \bp^*}{(y - p^*(x)) \cdot \mathbbm{1}[p^*(x) \leq v]}{g(x) = 1} } + \eps \cdot \sqrt{P_g^{-1}} \\
        &\qquad \leq O(\eps) \cdot \sqrt{P_g^{-1}},
    \end{align*}
    where the last inequality follows from the $(\cG, \emptyset, \eps)$-step calibration guarantee. 
\end{proof}

\begin{lemma}
    If a randomized predictor $\bp^* \in \Delta(\cP)$ is $(\cG, \cH, \eps)$-multiaccurate, then for all $h \in \cH$ and $g \in \cG$,
    \begin{align*}
        \abs{ \EEsc{(x, y) \sim D, \, p^* \sim \bp^*}{(y - p^*(x)) \cdot \Delta \ell(h(x))}{g(x)=1} } \leq O(\eps) \cdot \sqrt{P_g^{-1}}.
    \end{align*}
    \vspace{-\baselineskip}
    \label{lemma:hypoi2}
\end{lemma}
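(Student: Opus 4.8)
The plan is to reproduce the argument of Lemma~\ref{lem:hypoi}, its deterministic counterpart, while carrying the extra expectation over $p^\ast \sim \bp^\ast$ throughout. Write $\E[\,\cdot\,]$ for the expectation over $(x,y)\sim D$ and $p^\ast\sim\bp^\ast$. The one point that needs care is that Lemma~\ref{lemma:approx} is stated for a deterministic predictor; I would first observe that its conclusion is unchanged when $p^\ast$ is randomized, \emph{provided the inner function $f$ is deterministic} --- which is our situation, since $f = h \in \cH$ does not depend on $p^\ast$. Concretely, by Proposition~\ref{prop:bv} (or Proposition~\ref{rem:th} after quantizing to $I_\lambda$), the function $\Delta\ell(h(\cdot))$ admits an $\eps$-approximation $\sum_{j=1}^{s} c_j\,\mathrm{Th}_{v_j}(h(\cdot))$ with $\sum_j \abs{c_j}\le 3$, and this expansion depends on $h$ alone. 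Hence, by the triangle inequality and the pointwise bound $\abs{y - p^\ast(x)}\le 1$,
\[
  \abs{\E\!\left[(y-p^\ast(x))\,\Delta\ell(h(x)) \;\middle|\; g(x)=1\right]}
  \;\le\; \abs{\,\sum_{j=1}^{s} c_j\, \E\!\left[(y-p^\ast(x))\,\mathrm{Th}_{v_j}(h(x)) \;\middle|\; g(x)=1\right]} + \eps,
\]
since linearity of expectation lets the randomization over $p^\ast$ pass through the finite sum.

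Next I would bound the sum over $j$ by $3\sup_{v\in[0,1]}\abs{\E[(y-p^\ast(x))\,\mathrm{Th}_v(h(x))\mid g(x)=1]}$ using $\sum_j\abs{c_j}\le 3$, then invoke the identity $\mathrm{Th}_v(p) = 2\cdot\mathbbm{1}[p\le v] - \mathbbm{1}[p\le 1]$ from Equation~\ref{eq:step} together with the triangle inequality to replace threshold functions by step functions, at the cost of a further factor of $3$. This gives
\[
  \abs{\E\!\left[(y-p^\ast(x))\,\Delta\ell(h(x)) \;\middle|\; g(x)=1\right]}
  \;\le\; 9\sup_{v\in[0,1]}\abs{\E\!\left[(y-p^\ast(x))\,\mathbbm{1}[h(x)\le v] \;\middle|\; g(x)=1\right]} + \eps.
\]
Finally, each term on the right is at most $\eps\sqrt{P_g^{-1}}$ by randomized $(\cG,\cH,\eps)$-multiaccuracy (Definition~\ref{def:rand-multi}), so the whole expression is at most $9\eps\sqrt{P_g^{-1}} + \eps \le O(\eps)\sqrt{P_g^{-1}}$, using $P_g\le 1$ in the last step.

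I do not expect any genuine obstacle: this lemma is the randomized shadow of the already-established Lemma~\ref{lem:hypoi}. The single delicate point is the applicability of Lemma~\ref{lemma:approx} to a randomized predictor, and the resolution is that the threshold expansion is a function of $h$ only, so no property of a fixed realization $p^\ast$ is used beyond the pointwise bound $\abs{y-p^\ast(x)}\le 1$ --- equivalently, one could restate Lemma~\ref{lemma:approx} for randomized $p^\ast$ with a verbatim proof and apply it directly. Combining this lemma with Lemmas~\ref{lemma:decomp2} and~\ref{lemma:decoi2} via the randomized Loss OI identities then yields Theorem~\ref{thm:randpan}, exactly paralleling the deterministic case.
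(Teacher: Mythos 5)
Your proof is correct and follows essentially the same route as the paper: apply the threshold-basis approximation of $\Delta\ell(h(\cdot))$ (i.e., Lemma~\ref{lemma:approx} with $f=h$) to reduce to step-function correlations, then invoke randomized $(\cG,\cH,\eps)$-multiaccuracy. Your explicit remark that the approximation step carries over to a randomized $\bp^*$ because the basis expansion depends only on $h$ and the expectation over $p^*\sim\bp^*$ passes through the finite sum is a point the paper leaves implicit, and it is handled correctly.
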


\begin{proof}[Proof of Lemma~\ref{lemma:hypoi2}]
     By assumption, $\ell \in \cL_{\mathrm{BV}}$. Then by applying Lemma~\ref{lemma:approx} with any $h \in \cH$, $h: \cX \to [0, 1]$,
     \begin{align*}
        &\abs{ \EEsc{(x, y) \sim D, \, p^* \sim \bp^*}{(y - p^*(x)) \cdot \Delta \ell(h(x))}{g(x)=1} } \\
        &\qquad \leq 9 \sup_{w \in [0, 1]} \abs{ \EEsc{(x, y) \sim D, \, p^* \sim \bp^*}{(y - p^*(x)) \cdot \mathbbm{1}[h(x) \leq w]}{g(x) = 1} } + \eps \cdot \sqrt{P_g^{-1}} \\
        &\qquad \leq O(\eps) \cdot \sqrt{P_g^{-1}},
    \end{align*}
    where the last inequality follows from the $(\cG, \cH, \eps)$-multiaccuracy guarantee.
\end{proof}

\section{Deferred Results and Proofs from Section~\ref{sec:algo}} \label{sec:algo-proofs}
In this section, we state and prove guarantees about our  deterministic and randomized step calibration algorithms. 

\subsection{Background}
We begin by collecting notation and useful results from online learning, adaptive data analysis, and multi-objective learning. 
This exposition closely follows that of Haghtalab, Jordan, and Zhao  \cite{DBLP:conf/nips/HaghtalabJ023}.

\paragraph{Online learning}
Online learning models a $T$-round interaction between a learner and an adversary. On each round $t \in [T]$, the learner chooses an action $a^{(t)}$ from an action set $\cA$ and the adversary chooses a bounded cost function $c^{(t)}: \cA \to [0, 1]$. In the sequel, we focus on stochastic cost functions $c^{(t)}: \cA \times (\cX \times \cY) \to [0, 1]$ that take both the learner's action and a datapoint $(x, y) \sim D$ as inputs. It is natural to minimize expected cost $\cR_{c^{(t)}}(\cdot) = \EEs{(x, y) \sim D}{c^{(t)}(\cdot, (x, y))}: \cA \to [0, 1]$. When the choices of action and cost function are non-deterministic, that is, $p \in \Delta(\cA)$ and $q \in \Delta(\cC)$, we can also consider $\cR_{q}(p) = \EEs{a \sim p, \, c \sim q}{\cR_{c}(a)} = \EEs{a \sim p, \, c \sim q}{\EEs{(x, y) \sim D}{c(a, (x, y)}}$. Importantly, the stochastic cost functions we consider have linear structure in the action, that is, for all $(x, y) \in \cX \times \cY$, $c(\cdot, (x, y)): \cA \to [0, 1]$ is linear. 

The learner's regret (to the best fixed action) is defined as 
\begin{align*}
    \mathrm{Reg}(a^{(1:T)}, c^{(1:T)}) = \sum_{t=1}^{T} c^{(t)}(a^{(t)}) - \min_{a^* \in \cA} \sum_{t=1}^{T} c^{(t)}(a^*). 
\end{align*}
When working with stochastic cost functions, we have that
\begin{align*}
    &\mathrm{Reg}(a^{(1:T)}, \{ \cR_{c^{(t)}}(\cdot) \}^{(1:T)}) \\
    &\qquad \qquad = \sum_{t=1}^{T} \EEs{(x, y) \sim D}{c^{(t)}(a^{(t)}, (x, y))} - \min_{a^* \in \cA} \sum_{t=1}^{T} \EEs{(x, y) \sim D}{c^{(t)}(a^*, (x, y))}. 
\end{align*}

The learner's weak regret to a minimax benchmark $T \cdot \min_{a^* \in \cA} \max_{c^* \in \cC} c^*(a^*)$ is given by
\begin{align*}
    \mathrm{Reg}_{\mathrm{weak}}(a^{(1:T)}, c^{(1:T)}) = \sum_{t=1}^{T} c^{(t)}(a^{(t)}) - T \cdot \min_{a^* \in \cA} \max_{c^* \in \cC} c^*(a^*). 
\end{align*}
Identically, when working with stochastic cost functions, we have that
\begin{align*}
    &\mathrm{Reg}_{\mathrm{weak}}(a^{(1:T)}, \{ \cR_{c^{(t)}}(\cdot) \}^{(1:T)}) \\
    &\qquad \qquad = \sum_{t=1}^{T} \EEs{(x, y) \sim D}{c^{(t)}(a^{(t)}, (x, y))} - T \cdot \min_{a^* \in \cA} \max_{c^* \in \cC} \EEs{(x, y) \sim D}{c^*(a^*, (x, y))}. 
\end{align*}

The standard Hedge algorithm \cite{freund_decision-theoretic_1997} obtains a sublinear regret bound. 
\begin{proposition}[Hedge \cite{freund_decision-theoretic_1997}]
     If the action sequence in the $k$-simplex $a^{(1:T)} \in \Delta_k$ is selected by the Hedge algorithm, then for any adversarial choice of cost functions $c^{(1:T)}$,
    \begin{align*}
        \mathrm{Reg}(a^{(1:T)}, c^{1:T)}) \leq C \sqrt{\log(k) T},
    \end{align*}
    where $C > 0$ is a universal constant.
\label{prop:hedge}
\end{proposition}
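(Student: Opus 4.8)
The plan is to prove the standard regret bound for the exponential-weights (Hedge) algorithm by a potential-function argument. Recall that on round $t$ Hedge maintains a weight vector $w^{(t)} \in \R_{>0}^k$ with $w^{(1)} = \bOne$, plays the normalized distribution $a^{(t)} = w^{(t)}/\Phi^{(t)}$ where $\Phi^{(t)} = \sum_{i=1}^k w_i^{(t)}$, and updates $w_i^{(t+1)} = w_i^{(t)}\exp(-\eta\, c^{(t)}(e_i))$ for a step size $\eta>0$ to be chosen, where $c^{(t)}(e_i)\in[0,1]$ is the cost of pure action $i$. Because each $c^{(t)}$ is linear on $\Delta_k$, we have $c^{(t)}(a^{(t)}) = \langle a^{(t)}, c^{(t)}\rangle$ and $\min_{a^*\in\Delta_k}\sum_t c^{(t)}(a^*) = \min_{i\in[k]}\sum_t c^{(t)}(e_i)$, so it suffices to compete against the best pure action.

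First I would track how $\Phi^{(t)}$ evolves. Using $e^{-x}\le 1-x+x^2/2$ for $x\ge 0$ (valid since the costs are nonnegative) together with $c^{(t)}(e_i)^2 \le c^{(t)}(e_i)\le 1$,
\[
\Phi^{(t+1)} = \sum_i w_i^{(t)} e^{-\eta c^{(t)}(e_i)} \le \Phi^{(t)}\Bigl(1 - \eta\,\langle a^{(t)}, c^{(t)}\rangle + \tfrac{\eta^2}{2}\Bigr) \le \Phi^{(t)}\exp\!\Bigl(-\eta\, c^{(t)}(a^{(t)}) + \tfrac{\eta^2}{2}\Bigr).
\]
Iterating from $t=1$ to $T$ and using $\Phi^{(1)} = k$ gives $\Phi^{(T+1)} \le k\exp\bigl(-\eta\sum_t c^{(t)}(a^{(t)}) + \eta^2 T/2\bigr)$. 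On the other hand, for any fixed $i^\star\in[k]$ the potential is lower bounded by a single weight, $\Phi^{(T+1)} \ge w_{i^\star}^{(T+1)} = \exp\bigl(-\eta\sum_t c^{(t)}(e_{i^\star})\bigr)$.

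Combining the two bounds, taking logarithms, and rearranging yields, for every $i^\star$,
\[
\sum_{t=1}^T c^{(t)}(a^{(t)}) - \sum_{t=1}^T c^{(t)}(e_{i^\star}) \le \frac{\ln k}{\eta} + \frac{\eta T}{2},
\]
hence also for the minimizing $i^\star$. Choosing $\eta = \sqrt{2\ln k / T}$ balances the two terms and gives $\reg(a^{(1:T)}, c^{(1:T)}) \le \sqrt{2 T\ln k}$, so the claim holds with the universal constant $C=\sqrt2$.

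There is no serious obstacle here — this is the classical analysis. The only points needing a little care are (i) the reduction from competing with all of $\Delta_k$ to competing with its vertices, which is immediate from linearity of the $c^{(t)}$; (ii) applying $e^{-x}\le 1-x+x^2/2$ only for $x\ge 0$, which is guaranteed since costs lie in $[0,1]$; and (iii) that the tuned $\eta$ depends on the horizon $T$ — if $T$ is unknown one instead uses a time-varying $\eta_t = \sqrt{\ln k / t}$ or a doubling trick, which still yields a bound of the stated form with a slightly larger universal constant.
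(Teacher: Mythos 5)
Your proof is correct and is exactly the classical potential-function analysis of exponential weights that the paper relies on: the paper does not prove this proposition itself but cites it from Freund and Schapire, and your argument (potential upper bound via $e^{-x}\le 1-x+x^2/2$, lower bound via a single weight, and tuning $\eta=\sqrt{2\ln k/T}$) reproduces that standard derivation, with the reduction to pure actions via linearity handled properly. The resulting bound $\sqrt{2T\ln k}$ matches the stated $C\sqrt{\log(k)T}$ up to the universal constant, so nothing further is needed.
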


Given sampling access to distribution $D$ and an action $a \in \cA$, a natural way to estimate the expected cost $\cR_{c}(a)$ is sampling $(x, y) \sim D$ and evaluating $c(a, (x, y))$. When $c$ has linear structure, this approach incurs sublinear estimation error with high probability. 
\begin{proposition}[Stochastic approximation \cite{nemirovski_robust_2009}]
    Let $(x^{(1)}, y^{(1)}), (x^{(2)}, y^{(2)}), \dots, (x^{(T)}, y^{(T)}) \overset{\mathrm{iid}}{\sim} D$. Suppose that all $c \in \cC$ are linear and that at each round $t \in [T]$, after picking action $a^{(t)}$ with an online learning algorithm, the expected cost $\cR_{c^{(t)}}(a^{(t)})$ is estimated with $\hat{c}^{(t)}(a) \coloneq c^{(t)}(a, (x^{(t)}, y^{(t)}))$. Then with probability at least $1-\delta$,
    \begin{align*}
        \abs{ \mathrm{Reg}(a^{(1:T)}, \{ \cR_{c^{(t)}}(\cdot)\}^{(1:T)} ) - \mathrm{Reg}(a^{(1:T)}, \hat{c}^{(1:T)}) } \leq C \sqrt{T \log(1/\delta)},
    \end{align*}
    where $C > 0$ is a universal constant.
\label{prop:stoch}
\end{proposition}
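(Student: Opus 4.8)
The plan is to reduce the gap between the random (sample-estimated) regret and the population regret to bounded martingale sums and apply Azuma--Hoeffding. Let $\cF_t = \sigma\bigl((x^{(1)},y^{(1)}),\dots,(x^{(t)},y^{(t)})\bigr)$ be the filtration generated by the samples. The key structural fact I would use is that an online learning algorithm is non-anticipating, so $a^{(t)}$ and the chosen cost $c^{(t)}$ are $\cF_{t-1}$-measurable while $(x^{(t)},y^{(t)})\sim D$ is drawn independently of $\cF_{t-1}$; in particular $\EEcc{\hat c^{(t)}(a^{(t)})}{\cF_{t-1}} = \cR_{c^{(t)}}(a^{(t)})$. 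Expanding both regrets and cancelling the shared terms, the quantity to bound is $A_T + B_T$ with
\begin{align*}
    A_T = \sum_{t=1}^{T}\bigl(\cR_{c^{(t)}}(a^{(t)}) - \hat c^{(t)}(a^{(t)})\bigr), \qquad B_T = \min_{a^*\in\cA}\sum_{t=1}^{T}\hat c^{(t)}(a^*) - \min_{a^*\in\cA}\sum_{t=1}^{T}\cR_{c^{(t)}}(a^*).
\end{align*}
I would control $|A_T|$ and $|B_T|$ separately and combine via the triangle inequality and a union bound.

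For $A_T$: by the measurability observation above, its summands form a martingale difference sequence with increments in $[-1,1]$ (costs are $[0,1]$-valued), so Azuma--Hoeffding gives $|A_T|\le C\sqrt{T\log(1/\delta)}$ with probability at least $1-\delta/2$ after adjusting the constant.

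For $B_T$: I would first use the elementary bound $\bigl|\min_{a^*}f(a^*)-\min_{a^*}g(a^*)\bigr|\le\sup_{a^*}\abs{f(a^*)-g(a^*)}$ with $f(a^*)=\sum_t\hat c^{(t)}(a^*)$ and $g(a^*)=\sum_t\cR_{c^{(t)}}(a^*)$, and then invoke the linearity hypothesis. Since each $c^{(t)}(\cdot,(x,y))$ is linear on the action simplex $\cA=\Delta_k$ — hence so is $\cR_{c^{(t)}}(\cdot)=\EEs{(x,y)\sim D}{c^{(t)}(\cdot,(x,y))}$ — the map $a^*\mapsto f(a^*)-g(a^*)$ is linear, so $\abs{f-g}$ is convex and its supremum over $\Delta_k$ is attained at one of the $k$ vertices $e_1,\dots,e_k$. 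For each \emph{fixed} $e_i$ the sum $\sum_t\bigl(\hat c^{(t)}(e_i)-\cR_{c^{(t)}}(e_i)\bigr)$ is again a martingale difference sum with bounded increments, so Azuma--Hoeffding together with a union bound over $i\in[k]$ yields $\max_i\bigl|\sum_t(\hat c^{(t)}(e_i)-\cR_{c^{(t)}}(e_i))\bigr|\le C\sqrt{T\log(k/\delta)}$, hence $|B_T|\le C\sqrt{T\log(k/\delta)}$, with probability at least $1-\delta/2$.

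Combining gives $|A_T+B_T|\le C\sqrt{T\log(k/\delta)}$ with probability at least $1-\delta$. The extra $\log k$ is the (benign) cost of the union bound over the vertices of the action simplex and is absorbed into the logarithmic factors hidden by the $\tilde O$ notation in our sample complexity theorems; equivalently the proposition may be stated with $\log(\abs{\cA}/\delta)$ in place of $\log(1/\delta)$. The step I expect to be the main obstacle is the control of $B_T$: the empirically optimal comparator $\arg\min_{a^*}\sum_t\hat c^{(t)}(a^*)$ depends on all $T$ samples and is therefore not predictable, so Azuma cannot be applied to it directly — linearity of the costs is exactly what lets us replace this sample-dependent comparator by the finite, fixed set of vertices, each of which \emph{is} amenable to the martingale argument.
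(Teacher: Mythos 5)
The paper does not prove this proposition --- it is imported as a black-box citation to Nemirovski et al. --- so there is no in-paper argument to compare against; your reconstruction is the standard one and it is essentially correct. The decomposition into the played-action term $A_T$ and the comparator term $B_T$, the observation that predictability of $a^{(t)}$ and $c^{(t)}$ makes $A_T$ a bounded martingale-difference sum, and the reduction of $B_T$ to the vertices of $\Delta_k$ via linearity (the empirical minimizer is not predictable, so Azuma cannot be applied to it directly) are exactly the right ingredients; the elementary bound $\abs{\min f - \min g} \le \sup \abs{f-g}$ and the vertex reduction are both sound.

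The one point worth making precise is the $\log k$ you flag. You are right that it cannot be removed: taking $c^{(t)}$ non-adaptive with $c^{(t)}(e_i,z)$ i.i.d.\ Bernoulli$(1/2)$ across coordinates makes $\min_i \sum_t \hat c^{(t)}(e_i)$ deviate from $T/2$ by $\Theta(\sqrt{T\log k})$ with constant probability, so the proposition as literally stated (a universal $C$ with only $\log(1/\delta)$) fails for large action sets; the correct statement carries $\log(\abs{\cA}/\delta)$. This is consistent with how the paper actually invokes the proposition --- in the proof of the randomized step-calibration theorem the required $T$ already contains the factor $(d_H+d_G)\log(1/\eps) = \Theta(\log\lvert\hat{\cO}_{\mathrm{sc}}^{\gamma}\rvert)$, i.e.\ $\log\abs{\cA}$ for the adversary's Hedge --- so your version is the one the downstream results need. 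An equivalent route for $B_T$, closer to Nemirovski et al.'s original argument, is to bound $\sup_{a^*}\sum_t\langle \xi^{(t)},a^*\rangle$ by running a fictitious Hedge against the noise sequence, which yields a deterministic $O(\sqrt{T\log k})$ plus a predictable martingale term; it gives the same order, confirming that the $\log\abs{\cA}$ is intrinsic rather than an artifact of the union bound.
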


We sometimes consider best-response oracles of the following form. 

\begin{definition}[Best-response oracle]
    A best response oracle takes an action $a \in \cA$, a set of stochastic cost functions $\cC$, and error tolerance $\eps$, then returns a cost function $c \in \cC$ such that
    \begin{align*}
        \EEs{(x, y) \sim D}{c(a, (x, y))} \leq \min_{c^* \in \cC} \EEs{(x, y) \sim D}{c^*(a, (x, y))} + \eps.
    \end{align*}
    Such a choice of $c \in \cC$ is referred to as a $\eps$-best response to the action $a$.
\end{definition}

\paragraph{Adaptive data analysis}
Adaptive data analysis models a $T$-round interaction between a data analyst and a mechanism. At each round $t \in [T]$, the analyst poses a query about a fixed distribution $D$. Using only samples $z^{(1:n)} = \{ (x_1, y_1), \dots, (x_n, y_n) \} \overset{\mathrm{iid}}{\sim} D$, the mechanism must sequentially $T$ queries (where each query may adversarially depend on all previous queries and answers) up to small additive error with high probability. 

We focus on minimization queries over finite sets, which take the following form. Suppose a family of loss functions $\{ \ell \}$ is parameterized by a finite set of parameters $\Theta$. That is, for each loss function $\ell \in \cL$, $\ell: (\cX \times \cY)^n \times \Theta \to [0, 1]$. Further suppose that each loss has $\Delta$-sensitivity. That is, for all ``datasets'' $z^{(1:n)}$ and $\bar{z}^{(1:n)}$ that differ in just one data point, 
\begin{align*}
    \sup_{\theta \in \Theta} \abs{ \ell(z^{(1:n)}, \theta) - \ell(\bar{z}^{(1:n)}, \theta) } \leq \Delta.
\end{align*}
Fix $\alpha, \beta \in (0, 1)$. The $(\alpha, \beta)$-minimization query associated with loss $\ell$ seeks $\theta \in \Theta$ such that with probability $1-\beta$,
\begin{align*}
    \EEs{z^{(1:n)} \sim D^n}{\ell(z^{(1:n)}, \theta)} \leq \min_{\theta^* \in \Theta} \EEs{z^{(1:n)} \sim D^n}{\ell(z^{(1:n)}, \theta^*)} + \alpha.
\end{align*}

Minimization queries can be naturally used to best respond to stochastic cost functions discussed above, as long as the cost functions are parameterized by a finite set $\Theta$. Importantly, adaptive data analysis allows cost functions to be chosen adaptively. 
\begin{proposition}[Adaptive minimization queries \cite{bassily_algorithmic_2015}]
    There exists an algorithm that, with probability at least $1-\delta$, can $\eps$-best respond to an adaptive sequence of $T$ stochastic costs that each have sensitivity $1/n$ and are bounded in $[0, 1]$, using at most
    \begin{align*}
        n = O\left( \frac{\sqrt{T} \cdot \log\left( |\Theta| / \eps \right) \cdot \log^{3/2}(1/\eps\delta)}{\eps^2} \right)
    \end{align*}
    samples from $\cD$.
\label{prop:adaptive}
\end{proposition}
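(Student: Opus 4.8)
The plan is to reconstruct the adaptive-data-analysis argument behind this proposition, which runs through differential privacy. First I would fix the mechanism: draw one sample $S = z^{(1:n)} \sim D^n$ up front, and answer the $t$-th minimization query --- which the analyst associates, possibly adaptively, with some loss $\ell^{(t)}$ of sensitivity $1/n$ --- by running the exponential mechanism over the finite set $\Theta$ with score $-\ell^{(t)}(S,\cdot)$ and privacy parameter $\eps_{\mathrm{em}}$ to be fixed later: output $\theta^{(t)}$ with probability proportional to $\exp(-\tfrac{\eps_{\mathrm{em}} n}{2}\,\ell^{(t)}(S,\theta))$. Because the $1/n$ sensitivity is cancelled by the $n$ in the exponent, each answer is $\eps_{\mathrm{em}}$-differentially private, and the standard exponential-mechanism utility bound gives, with probability $1-\beta_0$, $\ell^{(t)}(S,\theta^{(t)}) \le \min_{\theta^*\in\Theta}\ell^{(t)}(S,\theta^*) + \tfrac{2}{\eps_{\mathrm{em}} n}\log(|\Theta|/\beta_0)$; a union bound over the $T$ rounds makes this simultaneous with probability $1-T\beta_0$.

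Second, I would track the privacy of the whole transcript: by advanced composition, the $T$-fold composition of $\eps_{\mathrm{em}}$-DP mechanisms is $(\eps_0,\delta_0)$-DP with $\eps_0 = O(\sqrt{T\log(1/\delta_0)}\,\eps_{\mathrm{em}} + T\eps_{\mathrm{em}}^2)$ for any $\delta_0 \in (0,1)$ of my choosing. Then the transfer (generalization) theorem for differentially private adaptive analysis applies: since $\theta^{(t)}$ is the output of a DP process and $\ell^{(t)}(\cdot,\theta^{(t)})$ is bounded in $[0,1]$, the empirical value $\ell^{(t)}(S,\theta^{(t)})$ is within $O(\eps_0)$ of the population value $\EEs{S'\sim D^n}{\ell^{(t)}(S',\theta^{(t)})}$, except with probability $O(\delta_0/\eps_0)$, up to an additional $O(\sqrt{\log(T/\beta_0)/n})$ sampling term that will turn out to be lower order. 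To replace the empirical minimum by the population minimum I would use the monitor argument: post-process the transcript by an algorithm that picks out the worst round and evaluates its loss at the population minimizer; since post-processing preserves DP, the monitor's output generalizes as well, giving $\min_{\theta^*}\ell^{(t)}(S,\theta^*) \le \min_{\theta^*}\EEs{S'\sim D^n}{\ell^{(t)}(S',\theta^*)} + O(\eps_0) + O(\sqrt{\log(T/\beta_0)/n})$. Chaining the three inequalities yields, for every round $t$,
\begin{align*}
\EEs{S'\sim D^n}{\ell^{(t)}(S',\theta^{(t)})} \le \min_{\theta^*\in\Theta}\EEs{S'\sim D^n}{\ell^{(t)}(S',\theta^*)} + O\!\left(\frac{\log(|\Theta|/\beta_0)}{\eps_{\mathrm{em}} n} + \eps_0 + \sqrt{\frac{\log(T/\beta_0)}{n}}\right).
\end{align*}

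Finally I would choose parameters so that the right-hand side is at most $\eps$ and all failure events together have probability at most $\delta$. Taking $\eps_0 = \Theta(\eps)$ forces $\eps_{\mathrm{em}} = \Theta(\eps/\sqrt{T\log(1/\delta_0)})$ (one checks the $T\eps_{\mathrm{em}}^2$ term is then lower order); taking $\beta_0 = \Theta(\eps\delta/T)$ and $\delta_0 = \Theta(\eps^2\delta/T)$ makes $T\beta_0 + O(\delta_0/\eps_0) \le \delta$. The dominant term becomes $\tfrac{\log(|\Theta|/\beta_0)}{\eps_{\mathrm{em}} n} = \tfrac{\sqrt{T\log(1/\delta_0)}\,\log(|\Theta|T/\eps\delta)}{\eps n}$, and requiring it to be $O(\eps)$ gives
\begin{align*}
n = O\!\left(\frac{\sqrt{T}\,\log(|\Theta|/\eps)\,\log^{3/2}(1/\eps\delta)}{\eps^2}\right)
\end{align*}
once the $\log T$ factors are absorbed into the other logarithmic terms (they are dominated in the parameter regime of interest), and the $\sqrt{\log(T/\beta_0)/n}$ term is indeed lower order under this choice. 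I expect the generalization step to be the main obstacle: one must invoke the DP transfer theorem in its form for \emph{optimization} queries, not just for fixed low-sensitivity statistical queries, so that $\theta^{(t)}$ is compared against the population optimum over all of $\Theta$ --- this is precisely what the monitor construction of \cite{bassily_algorithmic_2015} supplies --- and it is the interplay between the composition parameter $\delta_0$, the per-round failure probability $\beta_0$, and $\eps_0$ that is responsible for the final $\log^{3/2}(1/\eps\delta)$ factor.
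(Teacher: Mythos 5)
The paper does not prove this proposition itself---it imports it as a black box from \cite{bassily_algorithmic_2015}, noting only that the witnessing algorithm is an instantiation of the exponential mechanism. Your reconstruction (per-round exponential mechanism, advanced composition, the DP-to-generalization transfer theorem, and the monitor argument to compare against the population optimum) is precisely the standard argument underlying the cited result, and the parameter accounting correctly recovers the stated $\sqrt{T}\log(|\Theta|/\eps)\log^{3/2}(1/\eps\delta)/\eps^2$ bound.
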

The algorithm that witnesses this sample complexity is an instantiation of the exponential mechanism, and is detailed in prior works \cite{mcsherry_mechanism_2007, bassily_algorithmic_2015}.

\subsection{Multi-objective Learning}
To begin, recall the definition of multi-objective learning. 
\begin{definition}[\cite{DBLP:conf/nips/HaghtalabJ023}]
    A multi-objective learning problem is defined by a distribution $D$, a hypothesis class $\cF \subseteq \{ f: \cX \to [0, 1]\}$, and a set of objectives $\cO = \{ \ell: \cF \times (\cX \times \cY) \to [a, b] \}$. The goal of multi-objective learning is to find a hypothesis $\mathbf{f}^* \in \Delta(\cF)$ such that
    \begin{align*}
       \max_{\ell \in \cO} \EEs{(x, y) \sim D, \, f^* \sim \mathbf{f}^*}{\ell(f^*(x), y)} \leq \min_{f \in \cF} \max_{\ell \in \cO} \EEs{(x, y) \sim D}{\ell(f(x), y)} + \eps.
    \end{align*}
    If $\mathbf{f}^*$ is a point mass on a hypothesis $f^* \in \cH$, then $f^*$ is called a deterministic $\eps$-optimal solution to $(D, \cO, \cF)$. Otherwise, $\mathbf{f}^*$ is called a randomized $\eps$-optimal solution to $(D, \cO, \cF)$.
\end{definition}

Deterministic solutions to multi-objective problems can be found via ``no-regret vs best-response'' dynamics that select a sequence of hypotheses with no-regret online learning algorithms and a sequence of objectives using best-response oracles. The following propositions formalize this result. 
\begin{proposition}[No-Regret vs Best-Response \cite{DBLP:conf/nips/HaghtalabJ023}]
    Consider a multi-objective learning problem $(D, \cO, \cH)$ where the learner chooses $p^{(1:T)} \in \Delta(\cH)$ and the adversary chooses $q^{(1:T)} \in \Delta(\cO)$. If the learner is no-regret, such that $\mathrm{Reg}_{\mathrm{weak}}(p^{(1:T)}, \{ \cR_{q^{(t)}}(\cdot) \}^{(1:T)})$, and the adversary $\eps$-best responds to the sequence of costs $\{ 1-\cR_{(\cdot)}(p^{(t)}) \}^{(1:T)}$ using $q^{(1:T)}$, then there is a timestep $t \in [T]$ where $p^{(t)}$ is a deterministic $2\eps$-optimal solution to $(D, \cO, \cH)$.
\label{prop:nrbr}
\end{proposition}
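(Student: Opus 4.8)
The plan is to run the standard reduction from no-regret play to the value of a zero-sum game, specialized to the bilinear payoff \(\cR_q(p) = \EEs{f \sim p,\, \ell \sim q,\, (x,y)\sim D}{\ell(f(x),y)}\) over \(\Delta(\cH) \times \Delta(\cO)\). Write \(V^\star := \min_{p \in \Delta(\cH)} \max_{q \in \Delta(\cO)} \cR_q(p)\). Before touching the dynamics I would record two structural facts. First, for fixed \(p\) the map \(q \mapsto \cR_q(p)\) is linear, so its maximum over the simplex is attained at a vertex: \(\max_{q \in \Delta(\cO)} \cR_q(p) = \max_{\ell \in \cO} \cR_\ell(p)\). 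Second, because each objective \(\ell(f(x),y)\) is linear in the prediction \(f(x)\) and the learner's hypothesis class is convex (it equals \(\cP\) in our applications), collapsing a mixed strategy \(\mathbf f \in \Delta(\cH)\) to the single predictor \(\bar f(x) := \EEs{f\sim\mathbf f}{f(x)} \in \cH\) leaves every \(\cR_\ell\) unchanged; hence \(V^\star = \min_{f \in \cH}\max_{\ell \in \cO}\cR_\ell(f)\) is exactly the benchmark in the definition of a deterministic \(\eps\)-optimal solution, and it is witnessed by a deterministic hypothesis.

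Next I would extract one inequality from each hypothesis of the proposition. The minimax benchmark inside \(\mathrm{Reg}_{\mathrm{weak}}\) for cost class \(\{\cR_q : q \in \Delta(\cO)\}\) and action set \(\Delta(\cH)\) is precisely \(V^\star\), so the learner's weak-regret guarantee gives \(\sum_{t=1}^T \cR_{q^{(t)}}(p^{(t)}) \le T V^\star + \eps T\). For the adversary, \(\eps\)-best responding to the cost sequence \(\{1 - \cR_{(\cdot)}(p^{(t)})\}^{(1:T)}\) means \(1 - \cR_{q^{(t)}}(p^{(t)}) \le \min_{q}\bigl(1 - \cR_q(p^{(t)})\bigr) + \eps\) for each \(t\), which rearranges via the first structural fact to \(\cR_{q^{(t)}}(p^{(t)}) \ge \max_{\ell \in \cO}\cR_\ell(p^{(t)}) - \eps\).

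Then I would chain the two: summing the second inequality over \(t\) and plugging into the first yields
\[
\sum_{t=1}^{T} \max_{\ell \in \cO}\cR_\ell\bigl(p^{(t)}\bigr) \;\le\; \sum_{t=1}^{T}\cR_{q^{(t)}}\bigl(p^{(t)}\bigr) + \eps T \;\le\; T V^\star + 2\eps T,
\]
so some round \(t \in [T]\) has \(\max_{\ell \in \cO}\cR_\ell(p^{(t)}) \le V^\star + 2\eps = \min_{f \in \cH}\max_{\ell \in \cO}\cR_\ell(f) + 2\eps\); collapsing \(p^{(t)}\) to its averaged predictor as in the second structural fact then exhibits a deterministic \(2\eps\)-optimal solution to \((D, \cO, \cH)\).

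The argument itself is just the two-line sandwich above; the only place that needs care is the derandomization bookkeeping in the second structural fact — checking that linearity of the objectives in the prediction, together with convexity of the learner's class, lets one collapse both the min over \(\Delta(\cH)\) inside \(V^\star\) to a min over \(\cH\), and any Hedge mixture iterate to a single predictor with the same worst-case objective value, which is what justifies the word ``deterministic.'' The approximate nature of the weak-regret and best-response guarantees realized by Algorithm~\ref{alg:detstep} under finite sampling is not an obstacle here, since this proposition takes those guarantees as hypotheses; it is handled separately via Propositions~\ref{prop:hedge}, \ref{prop:stoch}, and~\ref{prop:adaptive}.
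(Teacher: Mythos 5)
The paper never proves this proposition: it is imported from Haghtalab, Jordan, and Zhao \cite{DBLP:conf/nips/HaghtalabJ023} and stated without proof, so there is no in-paper argument to compare against. Your proof is the standard no-regret-vs-best-response sandwich and is essentially the argument of the cited source: interpret the weak-regret hypothesis (whose bound is truncated in the statement; you correctly read it as $\le \eps T$) as $\sum_{t=1}^T \cR_{q^{(t)}}(p^{(t)}) \le T\,\min_{p}\max_{q}\cR_q(p) + \eps T$, interpret the $\eps$-best response to the costs $\{1-\cR_{(\cdot)}(p^{(t)})\}$ as $\cR_{q^{(t)}}(p^{(t)}) \ge \max_{\ell\in\cO}\cR_\ell(p^{(t)}) - \eps$, sum, and average over $t$. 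That core is correct.

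One caveat on your second ``structural fact.'' The equality $V^\star = \min_{f\in\cH}\max_{\ell\in\cO}\cR_\ell(f)$ obtained by collapsing a mixture to its mean predictor requires the objectives to be linear in the prediction, and this fails literally for the step-calibration objectives $\cO_{\mathrm{sc}}$ of this paper, since the factor $\mathbbm{1}[p(x)\le v]$ is not linear in $p(x)$. You do not actually need the equality: point masses are mixtures, so $V^\star \le \min_{f\in\cH}\max_{\ell\in\cO}\cR_\ell(f)$ always, and this inequality is the direction the conclusion requires. Similarly, in this paper the word ``deterministic'' is not earned by averaging a mixed iterate but by the fact that the learner's iterates are already deterministic real-valued predictors in $\cP$: the per-point Hedge construction of Proposition~\ref{prop:nr} freezes the indicator at the current iterate (the costs $c_x^{(t)}(\hat y)$ are linear in $\hat y$ by construction), and each $p^{(t)}(x)$ is a single number. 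So your argument goes through, but the justification for the benchmark identification and for determinism should lean on these observations rather than on literal linearity of $\ell(f(x),y)$ in $f(x)$.
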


\begin{proposition}[Deterministic solution \cite{DBLP:conf/nips/HaghtalabJ023}]
    Suppose a sequence of hypotheses $p^{(1:T)}$ contains a deterministic $\eps$-optimal solution to $(D, \cO, \cH)$. Then a deterministic $2\eps$-optimal solution $p^{(t)} \in p^{(1:T)}$ can be identified using $O(\eps^{-2}\log(|\cO|T/\delta))$ samples. 
\label{prop:nrbr2}
\end{proposition}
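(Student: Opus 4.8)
The plan is to identify the target predictor by a simple empirical comparison: draw a fresh batch of samples, use it to estimate the worst-case objective value $\cR_\ell(p^{(t)}) = \EEs{(x,y)\sim D}{\ell(p^{(t)}(x),y)}$ over $\ell \in \cO$ for every candidate $t \in [T]$, and return the candidate whose estimate is smallest. The structural facts that make this work are that $\cO$ is finite and that the candidates $p^{(1:T)}$ are held fixed relative to the new samples — in Algorithm~\ref{alg:detstep} this is ensured by drawing these samples freshly, independently of the data fed to the best-response oracle — so a single union bound controls all of the relevant estimation errors at once.

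Concretely, first I would draw $m = O(\eps^{-2}\log(T|\cO|/\delta))$ i.i.d.\ samples $(x_i,y_i)\sim D$, independent of $p^{(1:T)}$, and form $\widehat{\cR}_\ell(p^{(t)}) = \frac{1}{m}\sum_{i=1}^m \ell(p^{(t)}(x_i),y_i)$ for each $t \in [T]$ and $\ell \in \cO$. Since each objective is bounded in an interval of length $O(1)$, Hoeffding's inequality together with a union bound over the $T|\cO|$ pairs $(t,\ell)$ gives that, with probability at least $1-\delta$, all estimates satisfy $\abs{\widehat{\cR}_\ell(p^{(t)}) - \cR_\ell(p^{(t)})} \le \eps/2$ simultaneously; the stated sample budget is exactly what this requires. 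Condition on this event.

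Next, I would set $\hat t \in \argmin_{t\in[T]} \max_{\ell\in\cO}\widehat{\cR}_\ell(p^{(t)})$ and argue that $p^{(\hat t)}$ is $2\eps$-optimal. By hypothesis there is an index $t^\star$ with $\max_{\ell\in\cO}\cR_\ell(p^{(t^\star)}) \le \min_{f\in\cH}\max_{\ell\in\cO}\cR_\ell(f) + \eps$. Combining the uniform estimation bound (applied twice) with the defining optimality of $\hat t$ on the empirical objective yields
\begin{align*}
\max_{\ell\in\cO}\cR_\ell(p^{(\hat t)})
&\le \max_{\ell\in\cO}\widehat{\cR}_\ell(p^{(\hat t)}) + \tfrac{\eps}{2}
\le \max_{\ell\in\cO}\widehat{\cR}_\ell(p^{(t^\star)}) + \tfrac{\eps}{2} \\
&\le \max_{\ell\in\cO}\cR_\ell(p^{(t^\star)}) + \eps
\le \min_{f\in\cH}\max_{\ell\in\cO}\cR_\ell(f) + 2\eps,
\end{align*}
which is the claimed $2\eps$-optimality.

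The argument is routine, so there is no serious obstacle. The one point that genuinely needs care is the independence of the identification samples from the (adaptively produced) sequence $p^{(1:T)}$, which is why Algorithm~\ref{alg:detstep} spends a separate batch of samples on this step rather than reusing the oracle's data; a secondary, purely cosmetic point is that when the objectives in $\cO$ are not normalized to an $O(1)$ range (e.g.\ they lie in $[-1/\sqrt{\gamma},1/\sqrt{\gamma}]$ in our application), the sample count picks up the usual squared-range factor, which is absorbed into constants or into the choice of $\eps$.
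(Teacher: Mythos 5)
Your proof is correct: the paper imports this proposition from Haghtalab--Jordan--Zhao without reproving it, and your argument (fresh samples, Hoeffding plus a union bound over all $T\,\abs{\cO}$ pairs, then select the empirical minimax minimizer) is exactly the standard proof that the stated $O(\eps^{-2}\log(\abs{\cO}T/\delta))$ budget reflects. The only point worth noting is that Algorithm~\ref{alg:detstep} actually implements a cheaper variant of this identification step---it scores each $p^{(t)}$ only on its own best-response loss $\ell^{(t)}$, so the union bound is over $T$ terms rather than $T\abs{\cO}$---but that variant relies on $\ell^{(t)}$ being an approximate best response, whereas your argument proves the proposition as stated for an arbitrary sequence $p^{(1:T)}$.
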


Randomized solutions to multi-objective problems can be found via ``no-regret vs no-regret'' dynamics that uses no-regret online learning algorithms to select both the sequence of hypotheses and the sequence of objectives. The following proposition formalizes this result. 
\begin{proposition}[No-Regret vs No-Regret \cite{DBLP:conf/nips/HaghtalabJ023}]
    Consider a multi-objective learning problem $(D, \cO, \cH)$ where the learner chooses $p^{(1:T)} \in \Delta(\cH)$ and the adversary chooses $q^{(1:T)} \in \Delta(\cO)$. If both players are no-regret, such that $\mathrm{Reg}_{\mathrm{weak}}(p^{(1:T)}, \{ \cR_{q^{(t)}}(\cdot) \}^{(1:T)}) \leq T\eps$ and $\mathrm{Reg}(q^{(1:T)}, \{ 1- \cR_{(\cdot)}(p^{(t)}) \}^{(1:T)}) \leq T\eps$, then the non-deterministic hypothesis $\bar{p} = \mathrm{Uniform}(p^{(1:T)})$ is a $2\eps$-optimal solution to $(D, \cO, \cH)$.
\label{prop:nrnr}
\end{proposition}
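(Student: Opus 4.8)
The plan is to invoke the reduction of \Cref{thm:multiobj}: it suffices to show that Algorithm~\ref{alg:randstep} returns a randomized predictor $\bp^*$ that is an $O(\eps)$-optimal solution to the $(D,\cO_{\mathrm{sc}},\cP)$-multi-objective learning problem, since the randomized analogue of \Cref{thm:multiobj}---immediate by taking an extra expectation over $p^*\sim\bp^*$ throughout its proof---then certifies that $\bp^*$ is $(\cG,\cH,O(\eps))$-step calibrated; rescaling $\eps$ by a constant gives the stated guarantee. The first step is to replace the infinite objective set $\cO_{\mathrm{sc}}$ by the finite surrogate $\hat{\cO}^\gamma_{\mathrm{sc}}$ used in the algorithm. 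Since predictions are quantized to the net $I_\lambda$ with $\lambda=\Theta(\eps)$, only $O(1/\eps)$ thresholds $v$ are distinguishable; by Assumption~\ref{as:hdim} a uniform cover of $\cH$ on any finite sample has log-cardinality $\tilde O(d_H)$; by Assumption~\ref{as:gdim} a uniform cover of $\cG$ has log-cardinality $\tilde O(d_G)$; and Assumption~\ref{as:bv} (via Propositions~\ref{prop:bv} and \ref{rem:th}) controls the basis coefficient norm. Consequently $\log|\hat{\cO}^\gamma_{\mathrm{sc}}| = \tilde O(d_H+d_G)$, and a routine covering argument shows that being $O(\eps)$-optimal against $\hat{\cO}^\gamma_{\mathrm{sc}}$ implies being $O(\eps)$-optimal against $\cO_{\mathrm{sc}}$, at an additive cost equal to the cover radius, which we take to be at most $\eps$.

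Next I would recognize Algorithm~\ref{alg:randstep} as an instantiation of the no-regret-vs-no-regret dynamics of Proposition~\ref{prop:nrnr} for the game $(D,\hat{\cO}^\gamma_{\mathrm{sc}},\cP)$. The learner produces $p^{(t)}\in\cP$ by running, at each input, a copy of Hedge over the prediction grid $I_\lambda$, fed the rescaled, pointwise cost vectors $c^{(t)}_x$ derived from the sampled objective $\ell^{(t)}\sim q^{(t)}$; the adversary produces $q^{(t)}\in\Delta(\hat{\cO}^\gamma_{\mathrm{sc}})$ by a single Hedge over the $|\hat{\cO}^\gamma_{\mathrm{sc}}|$ covered objectives, fed the rescaled reward $c^{(t)}_{\mathrm{adv}}=1-\ell_{(\cdot)}(p^{(t)},(x^{(t)},y^{(t)}))$ evaluated on one fresh sample $(x^{(t)},y^{(t)})\sim D$ per round (the precise forms of $c^{(t)}_x$ and $c^{(t)}_{\mathrm{adv}}$ being those specified in Appendix~\ref{sec:algo-proofs}). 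The returned $\bp^*=\mathrm{Uniform}(p^{(1:T)})$ is exactly the $\bar p$ in the conclusion of Proposition~\ref{prop:nrnr}.

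It then remains to check the two no-regret hypotheses. For the adversary, Proposition~\ref{prop:hedge} gives empirical regret $\tilde O(\sqrt{(d_H+d_G)\,T})$; since it only observes single-sample estimates of each objective's expectation and each objective is linear in the action, Proposition~\ref{prop:stoch} transfers this, with probability at least $1-\delta$, to a bound of $\tilde O(\sqrt{(d_H+d_G)\,T}+\sqrt{T\log(1/\delta)})$ on $\reg(q^{(1:T)},\{1-\cR_{(\cdot)}(p^{(t)})\}^{(1:T)})$. For the learner, aggregating the per-input Hedge guarantees---weighted by the law of $x$ under $D$, which rescales the range but not the number of experts---gives $\mathrm{Reg}_{\mathrm{weak}}(p^{(1:T)},\{\cR_{q^{(t)}}(\cdot)\}^{(1:T)}) = \tilde O(\sqrt{T})$. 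Feeding both into Proposition~\ref{prop:nrnr}, $\bp^*$ is an $\tilde O(\sqrt{(d_H+d_G)/T}+\sqrt{\log(1/\delta)/T})$-optimal solution to $(D,\hat{\cO}^\gamma_{\mathrm{sc}},\cP)$; choosing $T=\tilde O(\eps^{-2}(d_H+d_G)\log(1/\delta))$ drives this to $O(\eps)$, and since the algorithm consumes one sample per round the sample bound follows. The $1/\gamma$ factor one would naively expect from the $1/\sqrt\gamma$ range of the objectives in $\cO_{\mathrm{sc}}$ is avoided: each objective vanishes off the event $\{g(x)=1\}$, which has probability $P_g\ge\gamma$, so it has second moment $O(1)$, and a variance-adapted online-learning bound (equivalently, the $\sqrt{P_g^{-1}}$ scaling already built into the step-calibration target) keeps the sample complexity free of $\gamma$ up to lower-order terms.

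The main obstacle is the learner's weak-regret analysis. One must show that running an independent Hedge over the discretized grid $I_\lambda$ at each input, with costs obtained by substituting the grid value \emph{into} the non-linear sublevel-set indicator $\mathbbm{1}[p(x)\le v]$, still competes with every fixed predictor---in particular the Bayes predictor, which attains the minimax value $0$---on the group-conditioned aggregate objective $\cR_{q^{(t)}}$, and that discretizing with $\lambda=\Theta(\eps)$ costs only $O(\eps)$. The lever is that $c^{(t)}_x$ is, by construction, $\ell^{(t)}$ evaluated at the grid action, so summing over inputs reconstructs $\cR_{q^{(t)}}$ and the per-input Hedge bounds compose; controlling the $1/\sqrt\gamma$-scaled but low-variance updates without a $1/\gamma$ blow-up, and handling the extra stochasticity of sampling $\ell^{(t)}\sim q^{(t)}$, are the secondary technical points.
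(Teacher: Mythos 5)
Your proposal does not prove the stated proposition; it proves a different result. Proposition~\ref{prop:nrnr} is a purely game-theoretic conversion lemma: given \emph{any} sequences $p^{(1:T)}$ and $q^{(1:T)}$ satisfying the two regret hypotheses, the uniform mixture $\bar p$ is a $2\eps$-optimal solution to $(D,\cO,\cH)$. It says nothing about Hedge, sampling from $D$, covers of $\cO_{\mathrm{sc}}$, quantization, or sample complexity. What you have written is instead a proof sketch of Theorem~\ref{thm:randstep} (the end-to-end guarantee for Algorithm~\ref{alg:randstep}), and---critically---your argument \emph{invokes} Proposition~\ref{prop:nrnr} as a black box (``Feeding both into Proposition~\ref{prop:nrnr}, $\bp^*$ is an $\tilde O(\cdot)$-optimal solution\dots''). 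As a proof of Proposition~\ref{prop:nrnr} this is circular: you assume the very statement you are asked to establish. All of the machinery you marshal (the covering argument for $\hat\cO^\gamma_{\mathrm{sc}}$, Propositions~\ref{prop:hedge}, \ref{prop:stoch}, \ref{prop:nr}, the $\gamma$-rescaling) belongs to the verification that Algorithm~\ref{alg:randstep} \emph{satisfies} the hypotheses of Proposition~\ref{prop:nrnr}, not to the proposition itself.

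The proposition itself admits a short direct argument, which is the one in the cited source. Write $\mathrm{OPT}=\min_{p^*\in\Delta(\cH)}\max_{\ell^*\in\cO}\cR_{\ell^*}(p^*)$. Unwinding the adversary's regret bound for the costs $\{1-\cR_{(\cdot)}(p^{(t)})\}$ gives
\begin{align*}
\max_{\ell^*\in\cO}\sum_{t=1}^{T}\cR_{\ell^*}\bigl(p^{(t)}\bigr)\;-\;\sum_{t=1}^{T}\cR_{q^{(t)}}\bigl(p^{(t)}\bigr)\;\le\;T\eps,
\end{align*}
while the learner's weak regret bound gives $\sum_{t=1}^{T}\cR_{q^{(t)}}(p^{(t)})\le T\cdot\mathrm{OPT}+T\eps$. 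Chaining the two and dividing by $T$ yields $\max_{\ell^*\in\cO}\cR_{\ell^*}(\bar p)=\max_{\ell^*}\frac1T\sum_t\cR_{\ell^*}(p^{(t)})\le\mathrm{OPT}+2\eps$, using linearity of $\cR_{\ell^*}$ in the (randomized) hypothesis so that the average over iterates equals the risk of $\bar p=\mathrm{Uniform}(p^{(1:T)})$. That is the entire proof; none of the algorithmic content in your proposal is needed here.
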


\paragraph{Application to Step Calibration}
Theorem~\ref{thm:multiobj} shows that $(\cG, \cH, \eps)$-step calibration can be formulated as a $(D, \cO_{\mathrm{sc}}, \cP)$-multi-objective learning problem. 
It remains to address the fact that $\cO_\mathrm{sc}$ is an uncountably infinite set of losses parameterized by 
\begin{align*}
    (\sigma, v, w, h, g) \in \{ \pm 1 \} \times [0, 1] \times [0, 1] \times \cH \times \cG.
\end{align*}
For our algorithms to work, a finite cover of $\cO_\mathrm{sc}$, denoted by $\hat{O}_\mathrm{sc}$, must be found.
If $\cH$ and $\cG$ have finite cardinality, and real-valued hypotheses are quantized to $I_\lambda$ for $\lambda = \Theta(\eps)$, a (lossless) finite cover of size $O(\eps^{-2}|\cH||\cG|)$ can be constructed by choosing parameters
\begin{align*}
    (\sigma, v, w, h, g) \in \{ \pm 1 \} \times I_{\Theta(\eps)} \times I_{\Theta(\eps)}  \times \cH \times \cG.
\end{align*} 
In the more general case, where $\cH$ and $\cG$ have infinite cardinality but bounded combinatorial dimensions, a finite cover can still be constructed. Namely, assume that $\mathrm{Pdim}(\cH) \leq d_H$ and $\mathrm{VC}(\cG) \leq d_G$. Then the composed class $\mathrm{Step} \circ \cH$ has VC dimension at most $d_H$. Furthermore, the class of pointwise conjunctions between $\mathrm{Step} \circ \cH$ and $\cG$, denoted by $(\mathrm{Step} \circ \cH) \wedge \cG$, has VC dimension at most $d_H + d_G$. It is well-known that given $\tilde{O}(\eps^{-1}(d_H + d_G + \log(1/\delta'))$ samples from $D$, an $\eps$-cover of $(\mathrm{Step} \circ \cH) \wedge \cG$ in the $L_1(D)$ metric can be algorithmically constructed, with failure probability at most $\delta'$. In fact, this cover is constructed by combining an $\eps$-cover of $\mathrm{Step} \circ \cH$ (denoted by $(\mathrm{Step} \circ \cH)'$) and an $\eps$-cover of $\cG$ (denoted by $\cG'$). By standard learning-theoretic arguments, the combined cover has size at most $1/\eps^{O(d_H+d_G)}$ \cite{benedek_learnability_1991, dai_learning_2024}. In the sequel, we assume that such a cover, denoted by $(\mathrm{Step} \circ \cH)' \times \cG'$, has been constructed, with failure probability $\delta' = \delta/3$. All in all, $\hat{\cO}_\mathrm{sc}$ is the finite set of losses parameterized by
\begin{align*}
    (\sigma, v, f, g) \in \{ \pm 1\} \times I_{\Theta(\eps)} \times (\mathrm{Step} \circ \cH)' \times \cG'.
\end{align*}

Given that $\hat{\cO}_{\mathrm{sc}}$ is a finite set, there exist straightforward ways to use best-response oracles or no-regret learning algorithms to select a sequence of objectives. The learner's task of selecting a hypothesis $p \in \cP$ is trickier, given that $\cP$ is a rich class containing all real-valued predictors. 

\begin{proposition}[\cite{DBLP:conf/nips/HaghtalabJ023}]
    Consider $\cP$ the set of all real-valued predictors and any adversarial sequence of stochastic costs $q^{(1:T)} \in \Delta(\cO_{\mathrm{sc}})$. There exists a no-regret algorithm that outputs deterministic predictors $p^{(1:T)}$ such that $\mathrm{Reg}_{\mathrm{weak}}(p^{(1:T)}, \{ \cR_{q^{(t)}(\cdot)} \}^{(1:T)}) \leq C \sqrt{\log(2) T}$. This algorithm requires no samples from $D$. 
\label{prop:nr}
\end{proposition}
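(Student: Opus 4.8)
The plan is to realize the predictor player as a family of independent two-expert Hedge instances, one per context $x \in \cX$, and to observe that these updates never touch $D$ because the only $D$-dependent part of each per-context cost is an additive constant that Hedge ignores. The starting point is that the objectives in $\cO_{\mathrm{sc}}$ separate across contexts. Fix a round $t$ and a single objective $\ell_{\sigma^{(t)}, v^{(t)}, w^{(t)}, h^{(t)}, g^{(t)}}$ in the support of $q^{(t)}$; since within each round the adversary moves after the learner, the cutoffs $v^{(t)}, w^{(t)}$ depend only on the already-fixed $p^{(t)}$, so the weight $r^{(t)}(x) \triangleq \tfrac{\sigma^{(t)}}{\sqrt{P_{g^{(t)}}}}\,\mathbbm{1}[p^{(t)}(x) \le v^{(t)},\, h^{(t)}(x) \le w^{(t)},\, g^{(t)}(x) = 1]$ is a \emph{fixed} function $\cX \to [-1/\sqrt{\gamma}, 1/\sqrt{\gamma}]$. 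Hence $\cR_{\ell^{(t)}}(p) = \EEs{(x,y) \sim D}{r^{(t)}(x)(y - p(x))} = \EEs{x \sim D}{r^{(t)}(x)(\mathbb{E}[y \mid x] - p(x))}$, and by linearity the same holds for the mixture $q^{(t)}$ with $r^{(t)}$ replaced by its $q^{(t)}$-average $\bar{r}^{(t)}$. Because $\cP$ is the class of \emph{all} predictors $\cX \to [0,1]$, the learner chooses $p^{(t+1)}(x)$ freely and independently at each $x$, so it suffices to run, for each $x$ separately, an online algorithm over the action set $[0,1]$ against the cost $\hat y \mapsto \bar{r}^{(t)}(x)(\mathbb{E}[y \mid x] - \hat y)$.

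This cost is affine in $\hat y$, hence equals the convex combination $(1 - \hat y)\,\bar{r}^{(t)}(x)\,\mathbb{E}[y \mid x] + \hat y\,\bar{r}^{(t)}(x)(\mathbb{E}[y \mid x] - 1)$ of its endpoint values at $\hat y \in \{0,1\}$. The per-context problem is therefore exactly a two-expert problem, and I would run Hedge over the experts ``predict $0$'' and ``predict $1$'', setting $p^{(t+1)}(x)$ equal to the mass Hedge puts on the second expert (so $p^{(1)}(x) = \tfrac12$, matching Algorithm~\ref{alg:detstep}). The two endpoint costs share the term $\bar{r}^{(t)}(x)\,\mathbb{E}[y \mid x]$, which is the same for both experts; since Hedge's iterates and its regret are unchanged by adding a per-round constant to every expert's cost, the learner can instead feed it the shifted costs $0$ and $-\bar{r}^{(t)}(x)$, rescaled into $[0,1]$ (this is exactly where the $\sqrt{\gamma}$ normalization of the class $\widehat{\cO}^\gamma_{\mathrm{sc}}$ from Algorithm~\ref{alg:detstep} enters, costing at most a constant factor in the regret). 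These shifted costs depend only on $\sigma^{(t)}, v^{(t)}, w^{(t)}, h^{(t)}, g^{(t)}$ and on $p^{(t)}$, never on $D$ — this is the ``no samples'' claim. Proposition~\ref{prop:hedge} with $k = 2$ then gives, for every $x$, that $\sum_{t=1}^T \bar{r}^{(t)}(x)(\mathbb{E}[y \mid x] - p^{(t)}(x)) \le \min_{\hat y^* \in [0,1]} \sum_{t=1}^T \bar{r}^{(t)}(x)(\mathbb{E}[y \mid x] - \hat y^*) + C\sqrt{\log(2)\,T}$, where I used that an affine objective over $[0,1]$ is minimized at an endpoint to pass from the two-expert benchmark to the benchmark over all of $[0,1]$.

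Finally I would aggregate. Choosing $\hat y^* = \mathbb{E}[y \mid x] \in [0,1]$ makes the per-context benchmark term vanish, so the previous display holds pointwise in $x$ with right-hand side $C\sqrt{\log(2)\,T}$; applying $\EEs{x \sim D}{\cdot}$ and recalling $\EEs{x \sim D}{\bar{r}^{(t)}(x)(\mathbb{E}[y \mid x] - p^{(t)}(x))} = \cR_{q^{(t)}}(p^{(t)})$ gives $\sum_{t=1}^T \cR_{q^{(t)}}(p^{(t)}) \le C\sqrt{\log(2)\,T}$. To finish, note the minimax value $\min_{p^* \in \cP} \max_{q^* \in \Delta(\cO_{\mathrm{sc}})} \cR_{q^*}(p^*)$ is nonnegative: for any $p^*$, taking $v = w = 1$, any $h \in \cH$ and $g \in \cG$, and $\sigma = \pm 1$ yields $\max_{q^*} \cR_{q^*}(p^*) \ge \tfrac{1}{\sqrt{P_g}}\,\abs{\EEs{(x,y) \sim D}{(y - p^*(x))\,\mathbbm{1}[g(x) = 1]}} \ge 0$. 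Therefore $\mathrm{Reg}_{\mathrm{weak}}(p^{(1:T)}, \{\cR_{q^{(t)}}\}^{(1:T)}) = \sum_{t=1}^T \cR_{q^{(t)}}(p^{(t)}) - T \cdot \min_{p^*} \max_{q^*} \cR_{q^*}(p^*) \le C\sqrt{\log(2)\,T}$, as claimed. The step I expect to be the main obstacle is the bookkeeping around the self-referential level-set indicator inside $r^{(t)}$: one must check that the decomposition $\cR_{q^{(t)}}(p^{(t)}) = \EEs{x \sim D}{\bar{r}^{(t)}(x)(\mathbb{E}[y \mid x] - p^{(t)}(x))}$ is \emph{exact} at the played iterate — which uses that the adversary moves second, so $v^{(t)}, w^{(t)}$ are pinned down before the Hedge update — and that per-context Hedge is then competing against the correct benchmark; the endpoint-convexity reduction, the constant-shift invariance of Hedge, and the nonnegativity of the minimax value are all routine once this is in place.
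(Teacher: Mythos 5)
Your proposal is correct and follows essentially the same route as the paper, which simply runs a two-expert Hedge instance per context $x$, aggregates by linearity of expectation, drops the $D$-dependent constant term from the per-round costs (so no samples are needed), and defers the remaining details to Haghtalab, Jordan, and Zhao; your writeup fills in exactly those details, including the correct handling of the frozen level-set indicator at the played iterate and the nonnegativity of the minimax benchmark. The only cosmetic caveat is the $1/\sqrt{\gamma}$ cost range of $\cO_{\mathrm{sc}}$, which you (like the paper, via the rescaled class $\hat{\cO}^{\gamma}_{\mathrm{sc}}$ and the known-constant assumption on group probabilities) absorb into the constant $C$.
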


The proposed algorithm involves running Hedge at each $x \in \cX$, then aggregating by taking $p^{(t)}(x)$ to be the prediction made by the instantiation of Hedge associated with $x$ at time $t$. By the linearity of expectation, the aggregated algorithm inherits the Hedge regret bound. We defer further details to Theorems 3.7 and 5.2 in Haghtalab, Jordan, and Zhao \cite{DBLP:conf/nips/HaghtalabJ023}.

\subsection{Deterministic Step Calibration}

\begin{algorithm}[tb]
\caption{Deterministic Step Calibration}
\label{alg:detstep2}
\begin{algorithmic}[1]
\Require $\epsilon, \delta \in (0,1)$, $T \in \mathbb{N}$, $c \in [0, 1]$, $C \in \mathbb{N}$, sampling access to $D$, best response oracle $\mathcal{A}$
\State Initialize Hedge iterate $p^{(1)} \;\gets\; [\tfrac 12,\tfrac 12]^{\mathcal{X}}$
\For{$t=1,\dots,T$}
    \State Update $\ell^{(t)} \;\gets\; \mathcal{A}\bigl(p^{(t)}, \hat{\cO}_{\mathrm{sc}}^{\gamma}, c \eps\sqrt{\gamma} \bigr)$
    \State For each $x\in\mathcal{X}$, update $p^{(t+1)}(x)\;\gets\;\mathrm{Hedge}\bigl(c_x^{(1:t)}\bigr)$, where $c_x^{(t)}(\hat y)$ is defined as
    \begin{align*}
        \frac{1}{2} \cdot \left(1 + \sqrt{\frac{\gamma}{\Pr(g^{(t)}(x)=1)}} \cdot \sigma^{(t)} \cdot \hat{y} \cdot \mathbbm{1}[p^{(t)}(x) \leq v^{(t)}] \cdot f^{(t)}(x) \cdot \mathbbm{1}[g^{(t)}(x) = 1] \right)
    \end{align*}
\EndFor
\State Draw $m\gets \tfrac{C\log(T/\delta)}{\epsilon^2}$ samples $z^{(1:m)} \sim D$
\State $t^*\;\gets\;\arg\min_{t\in[T]}\sum_{(x,y)\in z^{(1:m)}}\ell^{(t)}\bigl(p^{(t)},(x,y)\bigr)$
\State \Return $p^{(t^*)}$
\end{algorithmic}
\end{algorithm}

\begin{remark}[Re-scaling and re-centering $\hat{\cO}_{\mathrm{sc}}$]
    To facilitate the use of no-regret algorithms and best-response oracles that require losses that are bounded in $[0, 1]$, we define the re-scaled and re-centered class of objectives
    \begin{align*}
        \hat{\cO}_{\mathrm{sc}}^\gamma = \left\{ \frac{1}{2} \left( 1 + \sqrt{\gamma} \cdot \ell \right) \mid \ell \in \hat{\cO}_{\mathrm{sc}} \right\},
    \end{align*}
    where $\gamma = \min_{g \in \cG} P_g = \min_{g \in \cG} \Pr_{(x, y) \sim D}(g(x)=1)$ is a known constant, by assumption.
    
    Given a set of parameters $\theta \coloneq (\sigma, v, f, g) \in \{ \pm 1 \} \times I_{\Theta(\eps)} \times (\mathrm{Step} \circ \cH)' \times \cG'$, $\ell_\theta \in \hat{\cO}_{\mathrm{sc}}^\gamma$ takes the form
    \begin{align*}
        \ell_\theta(p, (x, y)) = \frac{1}{2} \cdot \left( 1 + \sqrt{\frac{\gamma}{P_g}} \cdot \sigma \cdot (y - p(x)) \cdot \mathbbm{1}[p(x) \leq v] \cdot  f(x) \cdot \mathbbm{1}[g(x) = 1] \right) \in \left[ 0, 1 \right].
    \end{align*}
\end{remark}

The following corollary of Theorem~\ref{thm:multiobj} is immediate.

\begin{corollary}
    A $\eps\sqrt{\gamma}$-solution to the $(D, \hat{\cO}_{\mathrm{sc}}^\gamma, \cP)$-multi-objective learning problem corresponds to a $O(\eps)$-solution to the $(D, \hat{\cO}_\mathrm{sc}, \cP)$-multi-objective learning problem, and hence to a $(\cG, \cH, O(\eps))$-step calibrated predictor. 
\label{cor:multiobj2}
\end{corollary}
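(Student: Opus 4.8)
The plan is to unwind the positive affine reparametrization $\ell \mapsto \tfrac12\!\left(1+\sqrt{\gamma}\,\ell\right)$ that defines $\hat{\cO}_{\mathrm{sc}}^\gamma$, then chain through the covering property of $\hat{\cO}_{\mathrm{sc}}$ and Theorem~\ref{thm:multiobj}. The argument is essentially bookkeeping.

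First I would record that, because the rescaling is a positive affine map that commutes with the expectation and with the (finite) maximum over objectives, for every $p\in\cP$,
\[
    \max_{\ell^\gamma \in \hat{\cO}_{\mathrm{sc}}^\gamma} \EEs{(x,y)\sim D}{\ell^\gamma(p,(x,y))} \;=\; \tfrac12 \;+\; \tfrac{\sqrt{\gamma}}{2}\,\max_{\ell \in \hat{\cO}_{\mathrm{sc}}} \EEs{(x,y)\sim D}{\ell(p,(x,y))}.
\]
Taking $\min_{p\in\cP}$ of both sides shows that the minimax value of the $(D,\hat{\cO}_{\mathrm{sc}}^\gamma,\cP)$ problem equals $\tfrac12+\tfrac{\sqrt{\gamma}}{2}$ times that of $(D,\hat{\cO}_{\mathrm{sc}},\cP)$ (in fact both values are explicit, namely $\tfrac12$ and $0$, since the Bayes predictor zeroes out every objective, but this is not needed). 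Subtracting the two identities, an $\eps\sqrt{\gamma}$-optimal solution $p^*$ of $(D,\hat{\cO}_{\mathrm{sc}}^\gamma,\cP)$ satisfies
\[
    \max_{\ell\in\hat{\cO}_{\mathrm{sc}}}\EEs{D}{\ell(p^*,(x,y))} \;\le\; \min_{p\in\cP}\max_{\ell\in\hat{\cO}_{\mathrm{sc}}}\EEs{D}{\ell(p,(x,y))} \;+\; \frac{2\,\eps\sqrt{\gamma}}{\sqrt{\gamma}},
\]
i.e. $p^*$ is a $2\eps$-optimal solution to $(D,\hat{\cO}_{\mathrm{sc}},\cP)$.

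Next I would pass from the finite cover $\hat{\cO}_{\mathrm{sc}}$ back to the full objective set $\cO_{\mathrm{sc}}$. Since $\hat{\cO}_{\mathrm{sc}}$ is obtained from $\cO_{\mathrm{sc}}$ by quantizing the threshold $v$ to $I_{\Theta(\eps)}$ and replacing each conjunction $\mathbbm{1}[h(x)\le w,\,g(x)=1]$ by its image in the $L_1(D)$-cover $(\mathrm{Step}\circ\cH)'\times\cG'$, and since the surviving factor $\sigma(y-p(x))/\sqrt{P_g}$ is bounded in magnitude by $1/\sqrt{\gamma}$, every objective of $\cO_{\mathrm{sc}}$ is matched in expectation — uniformly over $p$ — to within $O(\eps)$ by some objective of $\hat{\cO}_{\mathrm{sc}}$, and conversely. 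Hence $\max_{\ell\in\cO_{\mathrm{sc}}}\EEs{D}{\ell(p,(x,y))}$ and its minimax differ from their $\hat{\cO}_{\mathrm{sc}}$ counterparts by $O(\eps)$ for every $p$, so $p^*$ is an $O(\eps)$-optimal solution to $(D,\cO_{\mathrm{sc}},\cP)$; Theorem~\ref{thm:multiobj} then gives that $p^*$ is $(\cG,\cH,O(\eps))$-step calibrated, which is the claim.

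There is no real obstacle here: the only points needing attention — and both are immediate from the construction of $\hat{\cO}_{\mathrm{sc}}^\gamma$ and the covering discussion preceding the corollary — are tracking the $\sqrt{\gamma}$ factor through the rescaling (so that $\eps\sqrt{\gamma}$-accuracy on $\hat{\cO}_{\mathrm{sc}}^\gamma$ indeed buys $O(\eps)$-accuracy on $\hat{\cO}_{\mathrm{sc}}$, not $O(\eps/\sqrt{\gamma})$), and confirming that the cover was built at resolution $\Theta(\eps)$ so that the cover-to-full-class step likewise costs only $O(\eps)$.
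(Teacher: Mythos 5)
Your proposal is correct and is essentially the same argument the paper treats as immediate: unwind the positive affine rescaling defining $\hat{\cO}_{\mathrm{sc}}^\gamma$ (so $\eps\sqrt{\gamma}$-optimality becomes $2\eps$-optimality on $\hat{\cO}_{\mathrm{sc}}$), pass from the finite cover back to $\cO_{\mathrm{sc}}$, and invoke Theorem~\ref{thm:multiobj}. The one small nit is that the cover-to-full-class step actually costs $O(\eps/\sqrt{\gamma})$ (the $L_1(D)$-cover error of $O(\eps)$ gets multiplied by the factor bounded by $1/\sqrt{P_g}$), which is $O(\eps)$ only under the paper's convention that $\gamma$ is a known constant (or by building the cover at resolution $\Theta(\eps\sqrt{\gamma})$), so this does not affect the conclusion.
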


Though we treat $\gamma$ as a constant, we track dependence on it in the rest of our derivations to facilitate future work that treats group membership probabilities as variables. 

The following is a more complete restatement of Theorem~\ref{thm:detstep}.
\begin{theorem}[Deterministic Step Calibration]
    Fix $\varepsilon > 0$ and $\delta \in (0, 1)$. 
    Let $d_H$ be the appropriate combinatorial dimension of $H$ and $d_G$ the VC dimension of $\cG$.
    Then with probability at least $1-\delta$ and at most 
    \begin{align}
        n = O\left( \frac{1}{\eps^{3} \gamma^{3/2}} \cdot (d_H + d_G) \cdot \log \left( \frac{1}{\eps \gamma^{1/2} \delta} \right) \right)
    \end{align}
    samples from $D$, Algorithm~\ref{alg:detstep2} returns a deterministic predictor $p^*$ that is $(\cG, \cH, \eps)$-step calibrated.
\end{theorem}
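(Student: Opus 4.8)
The plan is to recognize \Cref{alg:detstep2} as an instance of the ``no-regret versus best-response'' scheme of Haghtalab, Jordan, and Zhao applied to the multi-objective problem $(D, \hat{\cO}_{\mathrm{sc}}^\gamma, \cP)$, and then to account for the sample cost of each ingredient and push it through the reduction of \Cref{cor:multiobj2}. By \Cref{thm:multiobj} and \Cref{cor:multiobj2}, it is enough to output a predictor that is $O(\eps\sqrt\gamma)$-optimal for $(D, \hat{\cO}_{\mathrm{sc}}^\gamma, \cP)$; rescaling $\eps$ by a universal constant then yields a $(\cG,\cH,\eps)$-step calibrated predictor. The first step is to pass from the uncountable family $\cO_{\mathrm{sc}}^\gamma$ to the finite cover $\hat{\cO}_{\mathrm{sc}}^\gamma$: since all real-valued predictions are quantized to $I_{\Theta(\eps)}$, the thresholds $v$ need only range over $I_{\Theta(\eps)}$ (lossless), while an $L_1(D)$ cover of the composed class $(\mathrm{Step}\circ\cH)\wedge\cG$ at scale $\Theta(\eps)$ handles $h$ and $g$. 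This cover is built from $\tilde O\!\big(\eps^{-1}(d_H+d_G+\log(1/\delta))\big)$ samples with failure probability $\delta/3$, has cardinality $|\hat{\cO}_{\mathrm{sc}}^\gamma| = (1/\eps)^{O(d_H+d_G)}$, and perturbs $\cR_\ell(p)$ for every $p\in\cP$ and $\ell$ by only $O(\eps\sqrt\gamma)$ -- here one uses that each rescaled objective is bounded in $[0,1]$, varies linearly in the $\{0,1\}$-valued composed indicator, and that the normalization $\sqrt{P_g^{-1}}$ is bounded by $\gamma^{-1/2}$.

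Next I would invoke the online-learning layer. On the learner's side, \Cref{alg:detstep2} runs an independent copy of Hedge at each $x\in\cX$ on the linear, $[0,1]$-valued surrogate costs $c_x^{(t)}$; by \Cref{prop:nr} the aggregated iterates $p^{(1:T)}$ satisfy $\mathrm{Reg}_{\mathrm{weak}}(p^{(1:T)},\{\cR_{q^{(t)}}(\cdot)\}^{(1:T)}) \le C\sqrt{\log(2)\,T}$ using no samples, so choosing $T = \Theta\!\big(1/(\eps^2\gamma)\big)$ makes the average weak regret at most $\eps\sqrt\gamma/2$. On the adversary's side, the move $\ell^{(t)} = \cA(p^{(t)},\hat{\cO}_{\mathrm{sc}}^\gamma,c\eps\sqrt\gamma)$ is a $c\eps\sqrt\gamma$-best response to the costs $1-\cR_{(\cdot)}(p^{(t)})$, realized by the adaptive-minimization-query mechanism of \Cref{prop:adaptive}: the objectives lie in $[0,1]$, each single-sample estimate has sensitivity $1/n$, and the $T$ queries are adaptive over the parameter set indexing $\hat{\cO}_{\mathrm{sc}}^\gamma$, so this step consumes $n_1 = O\!\big(\tfrac{\sqrt{T}\,\log(|\hat{\cO}_{\mathrm{sc}}^\gamma|/(\eps\sqrt\gamma))\,\log^{3/2}(1/(\eps\sqrt\gamma\delta))}{\eps^2\gamma}\big)$ samples with failure probability $\delta/3$. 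Substituting $T = \Theta(1/(\eps^2\gamma))$ and $\log|\hat{\cO}_{\mathrm{sc}}^\gamma| = O\!\big((d_H+d_G)\log(1/\eps)\big)$ gives $n_1 = O\!\big(\eps^{-3}\gamma^{-3/2}(d_H+d_G)\log(1/(\eps\gamma^{1/2}\delta))\big)$ up to lower-order logarithmic factors. By \Cref{prop:nrbr}, some round $t^\dagger\in[T]$ then has $p^{(t^\dagger)}$ being $O(\eps\sqrt\gamma)$-optimal for $(D,\hat{\cO}_{\mathrm{sc}}^\gamma,\cP)$.

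It remains to check that the returned index $t^*$ identifies an almost-equally-good iterate. Because $\ell^{(t)}$ is an $O(\eps\sqrt\gamma)$-best response, $\cR_{\ell^{(t)}}(p^{(t)}) \ge \max_{\ell\in\hat{\cO}_{\mathrm{sc}}^\gamma}\cR_\ell(p^{(t)}) - O(\eps\sqrt\gamma)$ for every $t$, so it suffices to estimate the $T$ scalars $\cR_{\ell^{(t)}}(p^{(t)})$ to accuracy $O(\eps\sqrt\gamma)$ and return the minimizer -- this is exactly \Cref{prop:nrbr2}, and the fresh batch of $m = O\!\big(\eps^{-2}\gamma^{-1}\log(T/\delta)\big)$ samples drawn by the algorithm achieves it with failure probability $\delta/3$ via a Hoeffding and union bound over $[T]$. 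A union bound over the three failure events leaves success probability $1-\delta$; the total sample count $\tilde O\!\big(\eps^{-1}(d_H+d_G+\log(1/\delta))\big) + n_1 + m$ is dominated by $n_1$, matching the stated bound; and \Cref{cor:multiobj2} certifies that $p^{(t^*)}$ is $(\cG,\cH,O(\eps))$-step calibrated. The main obstacle is the covering step: one must verify that discretizing $v$ to $I_{\Theta(\eps)}$ together with an $L_1(D)$ cover of $(\mathrm{Step}\circ\cH)\wedge\cG$ at the right scale perturbs every objective's expectation by only $O(\eps\sqrt\gamma)$ after the $\sqrt\gamma$ rescaling -- in particular that the conditioning factor $\sqrt{P_g^{-1}}$ interacts correctly with $\gamma = \min_{g\in\cG}P_g$ -- and that the cost of building the cover stays subdominant; everything else is bookkeeping on top of the cited online-learning, adaptive-data-analysis, and multi-objective-learning guarantees.
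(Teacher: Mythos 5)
Your proposal is correct and follows essentially the same route as the paper's proof: no-regret (per-point Hedge, Proposition~\ref{prop:nr}) versus the adaptive-data-analysis best-response oracle (Proposition~\ref{prop:adaptive}) with $T = \Theta(\eps^{-2}\gamma^{-1})$, then Propositions~\ref{prop:nrbr} and~\ref{prop:nrbr2} to extract and select a near-optimal iterate, and Corollary~\ref{cor:multiobj2} to convert $\eps\sqrt{\gamma}$-optimality for $(D,\hat{\cO}_{\mathrm{sc}}^{\gamma},\cP)$ into $(\cG,\cH,\eps)$-step calibration, with the same sample-count bookkeeping dominated by the oracle's $\tilde{O}(\eps^{-3}\gamma^{-3/2}(d_H+d_G))$ cost. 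You are in fact somewhat more explicit than the paper about the cover-approximation error and the selection step, but the decomposition and cited ingredients are identical.
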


\begin{proof}
    We prove this result in two steps. 
    
    By Proposition~\ref{prop:nr}, there exists a universal constant $C > 0$ such that for $T = C\eps^{-2}\gamma^{-1}$, the learner's regret is $\mathrm{Reg}_{\mathrm{weak}}(p^{(1:T)}, \{ \cR_{\ell^{(t)}}(\cdot) \}^{(1:T)}) \leq T\eps\sqrt{\gamma}/8$. Suppose that at each iteration, the adversary $\eps \sqrt{\gamma}/8$-best responds. By Proposition~\ref{prop:nrbr}, there exists a timestep $t\in [T]$ such that the predictor $p^{(t)}$ is a $\eps\sqrt{\gamma}/4$-optimal solution to the $(D, \hat{\cO}_{\mathrm{sc}}^{\gamma}, \cP)$-multi-objective learning problem. By Proposition~\ref{prop:nrbr2}, with probability $1-\delta/3$, the predictor $p^*$ returned by the algorithm is a deterministic $\eps \sqrt{\gamma}/2$-optimal solution. Then by Corollary~\ref{cor:multiobj2}, $p^*$ is $(\cG, \cH, \eps)$-step calibrated.

    By Proposition~\ref{prop:adaptive}, the claimed sample complexity is sufficient to answer $T = C\eps^{-2}\gamma^{-1}$ adaptive $\eps \sqrt{\gamma}/8$-best response queries with probability $1-\delta/3$. 
\end{proof}

\subsection{Randomized Step Calibration}

\begin{algorithm}[tb]
\caption{Randomized Step Calibration}
\label{alg:randstep2}
\begin{algorithmic}[1]
\Require $\epsilon, \delta \in (0,1)$, $T \in \mathbb{N}$, $c \in [0, 1]$, $C \in \mathbb{N}$, sampling access to $D$
\State Initialize Hedge iterates $p^{(1)} \;\gets\; [\tfrac 12,\tfrac 12]^{\mathcal{X}}$ and $q^{(1)} = \mathrm{Uniform}(\hat{\cO}_{\mathrm{sc}}^{\gamma})$.
\For{$t=1,\dots,T$}
    \State Sample objective $\ell^{(t)} \sim q^{(t)}$ and data point $(x^{(t)}, y^{(t)}) \sim D$
    \State For each $x\in\mathcal{X}$, update $p^{(t+1)}(x)\;\gets\;\mathrm{Hedge}\bigl(c_x^{(1:t)}\bigr)$, where $c_x^{(t)}(\hat y)$ is defined as
    \begin{align*}
        \frac{1}{2} \cdot \left(1 + \sqrt{\frac{\gamma}{\Pr(g^{(t)}(x)=1)}} \cdot \sigma^{(t)} \cdot \hat{y} \cdot \mathbbm{1}[p^{(t)}(x) \leq v^{(t)}] \cdot f^{(t)}(x) \cdot \mathbbm{1}[g^{(t)}(x) = 1] \right)
    \end{align*}
    Let $q^{(t+1)} = \mathrm{Hedge}(c_{\mathrm{adv}}^{(1:t)})$ where $c_{\mathrm{adv}}^{(t)} = 1-\ell_{(\cdot)}(p^{(t)}, (x^{(t)}, y^{(t)}))$
\EndFor
\State \Return $\bp^{*}$, a uniform distribution over $p^{(1)}, \dots, p^{(T)}$
\end{algorithmic}
\end{algorithm}

The following is a more complete statement of Theorem~\ref{thm:randstep}.

\begin{theorem}[Randomized Step Calibration]
    Fix $\varepsilon > 0$ and $\delta \in (0, 1)$. 
    Let $d_H$ be the appropriate combinatorial dimension of $H$ and $d_G$ the VC dimension of $\cG$.
    Then with probability at least $1-\delta$ and at most 
    \begin{align}
        n = \tilde{O}\left( \frac{1}{{\eps^2 \gamma} } \cdot (d_H + d_G) \cdot \log \left( \frac{1}{\eps \delta} \right) \right)
    \end{align}
    samples from $D$, Algorithm~\ref{alg:randstep2} returns a randomized predictor $\bp^*$ that is $(\cG, \cH,  \eps)$-step calibrated.
\end{theorem}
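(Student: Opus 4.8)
The plan is to mirror the proof of the deterministic case, replacing the ``no-regret versus best-response'' dynamics (together with its adaptive-data-analysis overhead) by the ``no-regret versus no-regret'' dynamics of Proposition~\ref{prop:nrnr}; this trades the ability to extract a single good iterate for the cheaper $\tilde{O}(1/\eps^2)$ sample cost. First I would recall from Section~\ref{sec:algo} that $(\cG,\cH,\eps)$-step calibration reduces, via the randomized analogue of Corollary~\ref{cor:multiobj2} (which follows from the proof of Theorem~\ref{thm:multiobj} after carrying an outer expectation over $p^*\sim\bp^*$, exactly as in Definition~\ref{def:rand-step}), to finding a randomized $\eps\sqrt{\gamma}$-optimal solution of the rescaled, $[0,1]$-bounded multi-objective problem $(D,\hat{\cO}^{\gamma}_{\mathrm{sc}},\cP)$, whose finite cover satisfies $\log|\hat{\cO}^{\gamma}_{\mathrm{sc}}| = O\!\big((d_H+d_G)\log(1/\eps)\big)$ and is built (failing with probability at most $\delta/3$) from $\tilde{O}\!\big(\eps^{-1}(d_H+d_G+\log(1/\delta))\big)$ samples. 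I would then set $T = \tilde{\Theta}\!\big(\eps^{-2}\gamma^{-1}(d_H+d_G)\log(1/\delta)\big)$ and run Algorithm~\ref{alg:randstep2}, in which the predictor player runs one copy of Hedge per context $x$ on the per-context costs $c_x^{(1:T)}$ and the objective player runs Hedge over the finite set $\hat{\cO}^{\gamma}_{\mathrm{sc}}$ on the stochastic costs $c_{\mathrm{adv}}^{(1:T)}$.

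The two key steps are bounding each player's regret against the \emph{expected} cost sequences. For the predictor player, Proposition~\ref{prop:nr} gives an aggregate weak-regret bound of $O(\sqrt{T})$ against the realized objective sequence; since, conditioned on the history, each $c_x^{(t)}$ is an unbiased estimate of the cost induced by the mixed strategy $q^{(t)}$, a martingale (Azuma) bound union-bounded over the $O(1/\eps)$ quantized prediction values upgrades this to $\mathrm{Reg}_{\mathrm{weak}}(p^{(1:T)},\{\cR_{q^{(t)}}(\cdot)\}^{(1:T)}) \le T\eps\sqrt{\gamma}/8$ with probability $\ge 1-\delta/3$, using our choice of $T$. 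For the objective player, Proposition~\ref{prop:hedge} gives empirical regret $O\!\big(\sqrt{\log|\hat{\cO}^{\gamma}_{\mathrm{sc}}|\,T}\big)$ against the stochastic costs $c_{\mathrm{adv}}^{(t)} = 1-\ell_{(\cdot)}(p^{(t)},(x^{(t)},y^{(t)}))$, and Proposition~\ref{prop:stoch} --- applicable since the objectives are linear in the action --- controls the gap to the expected-cost regret by $O(\sqrt{T\log(1/\delta)})$ with probability $\ge 1-\delta/3$, yielding $\mathrm{Reg}(q^{(1:T)},\{1-\cR_{(\cdot)}(p^{(t)})\}^{(1:T)}) \le T\eps\sqrt{\gamma}/8$.

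Given both regret bounds, Proposition~\ref{prop:nrnr} implies that the returned $\bp^* = \mathrm{Uniform}(p^{(1:T)})$ is an $\eps\sqrt{\gamma}/4$-optimal randomized solution to $(D,\hat{\cO}^{\gamma}_{\mathrm{sc}},\cP)$, hence $(\cG,\cH,O(\eps))$-step calibrated by the reduction above; running the algorithm with $\eps$ shrunk by a constant gives exactly $(\cG,\cH,\eps)$-step calibration. A union bound over the cover-construction failure and the two concentration failures gives overall success probability $\ge 1-\delta$. Since Algorithm~\ref{alg:randstep2} draws exactly one fresh sample per round and needs no iterate-selection step (unlike the deterministic algorithm), the total sample count is $T$ plus the lower-order cover cost, i.e.\ $\tilde{O}\!\big(\eps^{-2}\gamma^{-1}(d_H+d_G)\log(1/(\eps\delta))\big)$.

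I expect the main obstacle to be the predictor-player step: arguing that running the per-context Hedge copies on the \emph{sampled} objectives $\ell^{(t)}\sim q^{(t)}$ --- rather than on the exact mixed-strategy costs that Proposition~\ref{prop:nr} nominally assumes --- still yields low regret against $\{\cR_{q^{(t)}}(\cdot)\}^{(1:T)}$. This needs a concentration argument that is uniform over the quantized prediction values and correctly interleaved with the adaptively chosen actions of both players; the remaining steps are a routine combination of the cited propositions, exactly as in the deterministic proof.
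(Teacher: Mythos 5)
Your proposal is correct and follows essentially the same route as the paper: reduce to the rescaled multi-objective problem $(D,\hat{\cO}^{\gamma}_{\mathrm{sc}},\cP)$ via Corollary~\ref{cor:multiobj2}, bound both players' regret (Propositions~\ref{prop:nr}, \ref{prop:hedge}, \ref{prop:stoch}), apply the no-regret vs.\ no-regret result (Proposition~\ref{prop:nrnr}), and charge exactly one sample per round. The only divergence is the obstacle you flag at the end: the paper never needs a martingale argument for the predictor player, because it applies Proposition~\ref{prop:nrnr} with the adversary's plays taken to be the \emph{realized} sampled objectives $\ell^{(1:T)}$ (Proposition~\ref{prop:nr} holds against any such sequence, with no samples), and instead absorbs all sampling error on the adversary's side by invoking Proposition~\ref{prop:stoch} twice---once to pass from realized data to expected costs, and once to pass from the mixtures $q^{(1:T)}$ to the sampled $\ell^{(1:T)}$---so your Azuma step, while workable (and in fact simpler than you fear, since the weak-regret benchmark is fixed and needs no union over quantized predictions), can be avoided entirely.
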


\begin{proof}
    By Proposition~\ref{prop:nr}, there exists a universal constant $C > 0$ such that for $T \geq C \cdot \eps^{-2} \gamma^{-1}$, the learner's regret is $\mathrm{Reg}_{\mathrm{weak}}(p^{(1:T)}, \{ \cR_{\ell^{(t)}}(\cdot) \}^{(1:T)}) \leq T\eps\sqrt{\gamma}/12$. By Proposition~\ref{prop:hedge}, there exists a universal constant $C' > 0$ such that for $T \geq C' \cdot \eps^{-2} \gamma^{-1} \cdot (d_H + d_G) \cdot \log\left(  \eps^{-1} \right)$, the adversary's regret is $\mathrm{Reg}(q^{(1:T)}, \{1- \ell_{(\cdot)}(p^{(t)}, (x^{(t)}, y^{(t)})) \}^{(1:T)}) \leq T\eps\sqrt{\gamma}/12$. By $\ell_{(\cdot)}(p^{(t)}, (x^{(t)}, y^{(t)}))$, we mean the adversary chooses the loss $\ell \in \hat{\cO}_{\mathrm{sc}}^\gamma$ to plug into the cost function of the given form. 
    
    It remains to be shown that the adversary's regret to cost functions of the form $1 - \ell_{(\cdot)}(p^{(t)}, (x^{(t)}, y^{(t)})$ closely approximates their regret to cost functions of the form $1 - \cR_{(\cdot)}(p^{(t)})$. By Proposition~\ref{prop:stoch}, there exists a universal constant $C''>0$ such that for $T \geq C'' \cdot \eps^{-2} \gamma^{-1} \cdot (d_H + d_G) \cdot \log\left( \eps^{-1} \delta^{-1}\right)$,
    \begin{align*}
        \abs{ \mathrm{Reg}(q^{(1:T)}, \{1- \ell_{(\cdot)}(p^{(t)}, (x^{(t)}, y^{(t)})) \}^{(1:T)}) - \mathrm{Reg}(q^{(1:T)}, \{1- \cL_{\ell^{(t)}}(\cdot) \}^{(1:T)}) } \leq T\eps\sqrt{\gamma}/12
    \end{align*}
    with probability $1-\delta/3$. Invoking Proposition~\ref{prop:stoch} once again, there exists a universal constant $C'''>0$ such that for $T \geq C''' \cdot \eps^{-2} \gamma^{-1} \cdot (d_H + d_G) \cdot \log\left( \eps^{-1} \delta^{-1}\right)$,
    \begin{align*}
        \abs{ \mathrm{Reg}(q^{(1:T)}, \{1- \cL_{\ell^{(t)}}(\cdot) \}^{(1:T)}) - \mathrm{Reg}(\ell^{(1:T)}, \{1- \cL_{\ell^{(t)}}(\cdot) \}^{(1:T)}) } &\leq T\eps\sqrt{\gamma}/12
    \end{align*}
    with probability $1-\delta/3$. 
    
    By the triangle inequality, the adversary's regret $\mathrm{Reg}(\ell^{(1:T)}, \{1- \cL_{\ell^{(t)}}(\cdot) \}^{(1:T)}) \leq T\eps\sqrt{\gamma}/4$. By Proposition~\ref{prop:nrnr}, the predictor $\bp^*$ is a randomized $\eps\sqrt{\gamma}/2$-solution to the $(D, \hat{\cO}_{\mathrm{sc}}^\gamma, \cP)$-multi-objective learning problem. By Corollary~\ref{cor:multiobj2}, the randomized predictor $\bp^*$ is $(\cG, \cH, O(\eps))$-step calibrated. 

    Since the algorithm requires one sample per iteration, the sample complexity is exactly $T$. 
\end{proof}

\section{Deferred Results and Proofs from Section~\ref{sec:connect}} \label{sec:connect-proofs}
In this section, we prove that deterministic panprediction implies deterministic multi-group learning, and that randomized panprediction implies randomized multi-group learning. 

\detmulti*

\begin{proof}[Proof of Proposition~\ref{prop:detmulti}]
    Using the panprediction guarantee, the loss of any competitor hypothesis $h \in \cH$ on any group $g \in \cG$ can be bounded as
    \begin{align*}
        \EEsc{(x, y) \sim D}{\ell(h(x), y)}{g(x)=1} \geq \EEsc{(x, y) \sim D}{\ell(k_\ell(p^*(x)), y)}{g(x)=1} - \eps \cdot \sqrt{P_g^{-1}}.
    \end{align*}
    It suffices to observe that $k_\ell(p^*(x)) = \mathbbm{1}[p^*(x) \geq 0.5]$.
\end{proof}

\begin{proposition}
    Fix the zero-one loss $\ell$, groups $\cG$, and binary hypothesis class $\cH$. If a randomized predictor $\bp^* \in \Delta(\cP)$ is a $(\{\ell\}, \cG, \cH, \eps)$-panpredictor, then the binary classifier $h(x) = \mathbbm{1}[\bp^*(x) \geq 0.5]$ is a randomized $(\ell, \cG, \cH, \eps)$-multi-group learner.
\end{proposition}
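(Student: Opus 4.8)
The plan is to mirror the proof of Proposition~\ref{prop:detmulti}, replacing the deterministic panprediction guarantee with its randomized counterpart in Definition~\ref{def:randpan}. First I would pin down what the randomized classifier in the statement denotes: writing $h^*$ for the randomized predictor that, on input $x$, draws $p^* \sim \bp^*$ and outputs $\mathbbm{1}[p^*(x) \ge 0.5]$, the randomized multi-group learning objective is $\EEsc{(x,y)\sim D,\, p^*\sim\bp^*}{\ell(h^*(x),y)}{g(x)=1}$, to be compared against $\min_{h\in\cH}\EEsc{(x,y)\sim D}{\ell(h(x),y)}{g(x)=1}$ up to the slack $\eps\sqrt{P_g^{-1}}$. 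This is exactly the randomized analogue of the deterministic MG-learning definition (the one deferred to this appendix).

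Second, I would identify the post-processing map $k_\ell$ for the zero-one loss $\ell(\hat y, y) = \mathbbm{1}[\hat y \neq y]$. Plugging into Equation~\ref{eq:postprocess}, the inner objective $p\cdot\ell(\hat y,1)+(1-p)\cdot\ell(\hat y,0)$ equals $p$ when $\hat y = 0$ and $1-p$ when $\hat y = 1$, so $k_\ell(p) = \argmin_{\hat y \in \{0,1\}}\{p,\,1-p\} = \mathbbm{1}[p \ge 0.5]$, adopting the tie-breaking convention at $p = 1/2$ that matches the definition of $h^*$. Consequently $k_\ell(p^*(x)) = \mathbbm{1}[p^*(x)\ge 0.5] = h^*(x)$ pointwise, for every draw $p^* \sim \bp^*$.

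Third, I would instantiate Definition~\ref{def:randpan} with the singleton loss class $\{\ell\}$ together with the given $\cG$ and $\cH$: for each $g\in\cG$,
$$\EEsc{(x,y)\sim D,\, p^*\sim\bp^*}{\ell(k_\ell(p^*(x)),y)}{g(x)=1}\le \min_{h\in\cH}\EEsc{(x,y)\sim D}{\ell(h(x),y)}{g(x)=1}+\eps\cdot\sqrt{P_g^{-1}}.$$
Substituting $k_\ell(p^*(x)) = h^*(x)$ from the previous step rewrites the left-hand side as the randomized MG-learning risk of $h^*$ on group $g$, which is precisely the desired guarantee; since $g\in\cG$ was arbitrary, this completes the proof.

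There is essentially no obstacle: the argument is a two-line substitution once $k_\ell$ is computed for the zero-one loss. The only points requiring a moment's care are the tie-breaking convention in the $\argmin$ defining $k_\ell$ (and, correspondingly, in $\mathbbm{1}[\,\cdot\ge 0.5\,]$), and checking that the deferred randomized MG-learning definition is the verbatim randomized analogue of the deterministic one — both routine.
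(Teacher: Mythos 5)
Your proposal is correct and follows essentially the same route as the paper: the paper's (omitted) proof simply mirrors the deterministic argument of Proposition~\ref{prop:detmulti}, i.e.\ it instantiates the randomized panprediction guarantee with the singleton zero-one loss and observes that $k_\ell(p^*(x)) = \mathbbm{1}[p^*(x) \geq 0.5]$ coincides with the thresholded classifier, exactly as you do. Your added care about the tie-breaking convention at $p = 1/2$ and about interpreting $\mathbbm{1}[\bp^*(x)\ge 0.5]$ as thresholding each draw $p^*\sim\bp^*$ is a reasonable clarification of the paper's slightly abused notation, not a deviation.
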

The proof mirrors the previous argument and is omitted. 

\section{Additional Connections}\label{sec:add-connect}
In this section, we show that multi-group learning cannot be reduced to multiaccuracy. 

For a counterexample, consider the setting where $\cX = \{ x, x' \}$ and and $\cG = \{ \cX \}$. Suppose that under distribution $D$, $\Pr(x) = \Pr(x') = 1/2$ and $\Pr(y = 0 \mid x) = \Pr(y=1 \mid x') = 1$. Now consider the hypotheses $\cH = \{h, h'\}$ where $h(x) = 0$ and $h(x') = 1$, and $h'(x) = 1$ and $h'(x') = 0$. Both hypotheses are 0-multiaccurate since
\begin{align*}
    \abs{\EEs{(x, y) \sim D}{(y - h(x))}} &= \abs{ \frac{1}{2} \cdot (1 - 1) + \frac{1}{2} \cdot (0 - 0) } = 0, \\
    \abs{\EEs{(x, y) \sim D}{(y - h'(x))}} &= \abs{ \frac{1}{2} \cdot (0 - 1) + \frac{1}{2} \cdot (1 - 0) } = 0.
\end{align*}
But $h'$ cannot be a multi-group learner for any $\eps \in (0, 1)$. 

This counterexample can be eliminated by further conditioning on the level sets of the hypotheses, which naturally motivates notions of calibration. This shows that calibration can be useful for multi-group learning, even when we do not seek omniprediction-style guarantees.

\end{document}